\definecolor{xdxdff}{rgb}{0.49019607843137253,0.49019607843137253,1.}
\definecolor{qqqqff}{rgb}{0.,0.,1.}
\definecolor{wwzzff}{rgb}{0.4,0.6,1.}
\definecolor{xdxdff}{rgb}{0.49019607843137253,0.49019607843137253,1.}
\DeclareMathOperator*{\argmin}{arg\,min}
\DeclareMathOperator{\Tr}{Tr}
\DeclareMathOperator{\rad}{\mathfrak{R}}
\DeclareMathOperator{\norm}{normalize}
\DeclareMathOperator{\Fr}{Fr}
\newtheorem{theorem}{Theorem}[section]
\newtheorem{thm}{Theorem}[section] \newtheorem{lemma}[thm]{Lemma}
\newtheorem{corollary}[thm]{Corollary} 
\newtheorem{proposition}[thm]{Proposition}
\newtheorem{definition}[thm]{Definition}
\newtheorem{remark}[thm]{Remark}
\newcommand\E{\mathbb{E}}
\newcommand\R{\mathbb{R}}
\renewcommand\P{\mathbb{P}}
    \newcommand\newdot{{\kern.8pt\cdot\kern.8pt}}
\newcommand\nbull{{\kern.8pt\raise1.5pt\hbox{\small\bf .}\kern.8pt}}
\newcommand\1{\hbox{\kern.375em\vrule height1.57ex depth-.1ex
		width.05em\kern-.375em \rm 1}}
\definecolor{xdxdff}{rgb}{0.49019607843137253,0.49019607843137253,1.}
\definecolor{qqqqff}{rgb}{0.,0.,1.}
\definecolor{wwzzff}{rgb}{0.4,0.6,1.}
\definecolor{xdxdff}{rgb}{0.49019607843137253,0.49019607843137253,1.}
\DeclareMathOperator{\old}{old}
\DeclareMathOperator{\new}{new}
\DeclareMathOperator{\minimize}{min}
\DeclareMathOperator{\rank}{rank}
\DeclareMathOperator{\spn}{span}
\DeclareMathOperator{\Id}{Id} 
 \DeclareMathOperator{\diag}{diag}
\DeclareMathOperator{\RMSE}{RMSE}
\DeclareMathOperator{\MBD}{MBD}%Matrix bias deviation
\DeclareMathOperator{\IBD}{IBD}% Item bias deviation
\DeclareMathOperator{\UBD}{UBD}% User bias deviation
\DeclareMathOperator{\SPC}{SPC}% Spearman Coefficient
\begin{document}
%
% paper title

\title{Orthogonal Inductive Matrix Completion}

\author{Antoine Ledent*, Rodrigo Alves*, and Marius Kloft, \textit{Senior Member, IEEE}
	\thanks{*The first two authors contributed equally to this work}
	\thanks{The authors are with TU Kaiserslautern}
	\thanks{\{ledent,alves,kloft\}@cs.uni-kl.de}
	\thanks{Code repository: http://github.com/rasalves/OMIC}
	\thanks{Accepted for publication in Transactions of Neural Networks and Learning Systems (TNNLS), DOI: 10.1109/TNNLS.2021.3106155}}

%\authorrunning{A. Ledent et al.}

\maketitle

\begin{abstract}
	We propose orthogonal inductive matrix completion (OMIC), an interpretable 
	approach to matrix completion based on a sum of multiple orthonormal side information terms, together with nuclear-norm regularization. 
	The approach allows us to inject prior knowledge about the singular vectors of the ground truth matrix. 
	We optimize the approach by a provably converging algorithm, which optimizes all components of the model simultaneously. 
We study the generalization capabilities of our method in both the distribution-free setting and in the case where the sampling distribution admits uniform marginals, yielding learning guarantees that improve with the quality of the injected knowledge in both cases. 
As particular cases of our framework, we present models which can incorporate user and item biases or community information in a joint and additive fashion. 
	We analyse the performance of OMIC on several synthetic and real datasets. 
	On synthetic datasets with a sliding scale of user bias relevance, we show that OMIC better adapts to different regimes than other methods. On real-life datasets containing user/items recommendations and relevant side information, we find that OMIC surpasses the state-of-the-art, with the added benefit of greater interpretability. 
\end{abstract}

%%%%%%%%%%%%%%%%%%%%%%%%%%%%%%%%%%%%%%%
\section{Introduction and related work}
%%%%%%%%%%%%%%%%%%%%%%%%%%%%%%%%%%%%%%%

Matrix completion, the problem of recovering the missing entries of a partially observed matrix,
has found application in a wide range of domains.
As examples consider the following.
(1) A streaming provider recommends movies to its users, based on an incomplete database of user-movie ratings. 
(2) A social network wants to find missing links in their friendship network.
(3) A chemical producer wants to predict interactions of chemical compounds from a subset of known pairwise interactions.
These examples---from the domains of recommender systems~\cite{recomenderMC,MCforRS}, social network analysis~\cite{SocialNet}, and chemical engineering~\cite{jirasek2020machine}---highlight the wide range of applications of matrix competition. 
For simplicity, we use movie recommendation as running example here, so the data consists of user-movie ratings.
It should be clear that, more generally, we can work with type1-type2 pairs, depending on the application, e.g.,
user-book, user-user, compound-compound, etc.

To recover missing entries of a matrix, it is necessary to make an assumption on the structure of the ground truth matrix. 
The most common assumption is that the matrix is of low rank.
However, optimally approximating the observed entries whilst minimizing the rank is NP-hard. The SoftImpute algorithm~\cite{softimpute} bypasses this difficulty by using the nuclear norm as a regularizer.
Not only does SoftImpute work well in practice, it also enjoys favorable theoretical guarantees: only a small number of known entries is required to recover the underlying low-rank matrix exactly~\cite{genius,CandesRecht} or approximately~\cite{noisy,Kolchinski} from noisy entries. 

In practical applications, the following refinements may help the  performance of classic recommender systems.
(1) \emph{Incorporating bias} \cite{book,netflixwin}. 
Some users may generally be more critical than others. This means they tend to give lower ratings than other users, regardless of the movie. Moreover, some movies are intrinsically better than others, so they receive more favorable ratings. Previous work incorporated user and movie bias in a pre-processing step, and then trained matrix completion on the residuals.
(2) \emph{Incorporating side information}. For movies, we find plenty of side information (genre, staring actors, director, reviews, etc.) on the web, and we might have access to user attributes (age, gender, etc.), from which we can derive clusters of users (community information). Inductive matrix completion (IMC) \cite{IMC} uses such side information 
to guide the prediction of the user-movie ratings. 
IMC, which is backed up by well-developed learning theory~\cite{IMC,IMC1,IMC2,IMC3,PIMC}, can be applied also to new users with no ratings, but for which side information is given.

\begin{figure*}[!b]
	\centering
	\includegraphics[width=.98\linewidth]{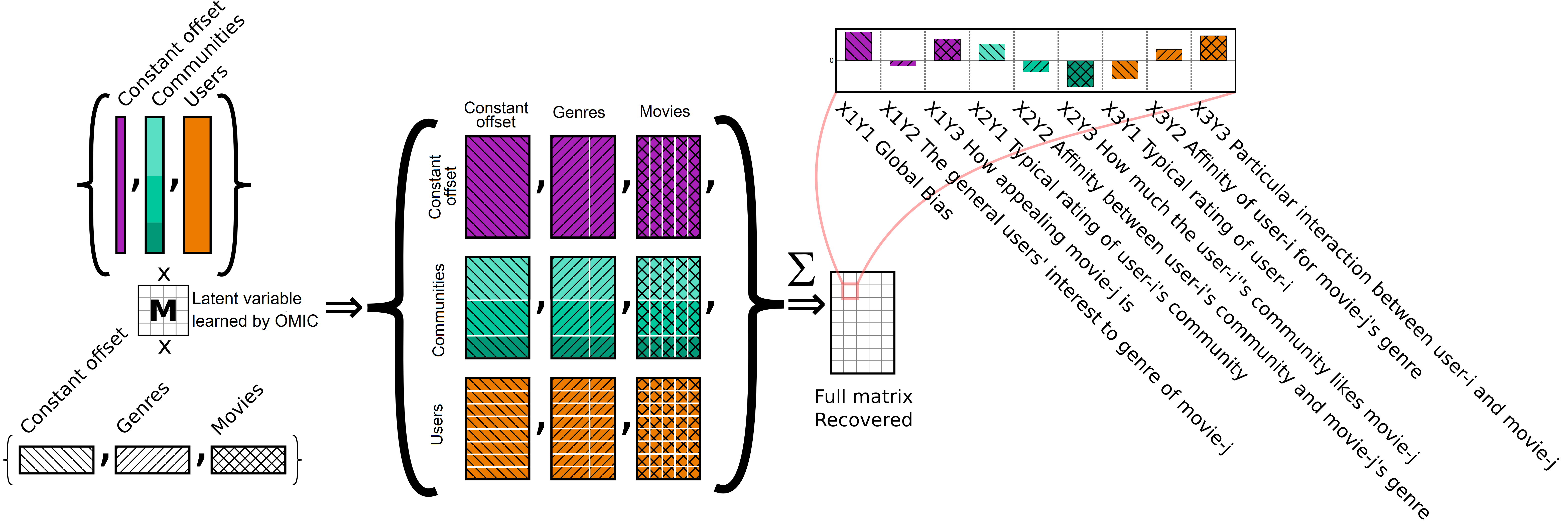}
	\caption{Visualization of orthogonal inductive matrix completion (specifically, the model choice BOMIC+ explained in Section~\ref{bomic+}). The model is a sum of matrix terms, each of the form $XMY^\top$. This means each combination of $X$ and $Y$ gives rise to a term in the sum. We interpret the magnitude of this term as its relevance to the prediction. }
	\label{fig:mainfig}
\end{figure*}

 Our aim in this work is to create a generic model that can incorporate all the improvements mentioned above into a single flexible framework with a well-principled optimization procedure. To the best of our knowledge, the only work which attempted to incorporate some of the above improvements into a single jointly trained model is~\cite{mostrelated}, which considers outputs of the form
 \begin{align}
\label{seventeen}
f_{i,j} = \mathbf{x}_i^\top M\mathbf{y}_j + z_{i,j},
\end{align}

\noindent with nuclear-norm regularization imposed on both $Z$ and $M$. The model is trained with gradient descent. The incorporation of both the standard low-rank term $Z$ and the IMC term $XMY^\top$ allows the model to capture both generic low-rank phenomena, as well as any behavior related to the side information. The hyperparameters involved in the nuclear-norm constraints can be optimized through cross-validation and allow the model to decide how relevant the side information is.
However, a single given matrix $f_{\nbull,\nbull}$ can correspond to several possible choices of $M$ and $Z$, thereby limiting the  interpretability of the model and the individual terms of the sum~\eqref{seventeen}. Furthermore, the model does not capture user and item biases. \\ Our model remedies these failings. Firstly, the corresponding predictors can take the following form as a particular case: 
\begin{align}
\label{simplebiasfirst}
f_{i,j} = c+ u_i + m_j + \mathbf{x}_i^\top M\mathbf{y}_j + z_{i,j},
\end{align}
where $c$ is an unknown constant (corresponding to a global bias of the model), ${u}_i$ and ${m}_j$ are the user bias and movie quality terms (i.e. free parameters which are trained based on the sole constraint that the user bias term ${u}_i$ can only depend on the user $i$ (but not on the item $j$) and vice versa), $\mathbf{x}_i$ and $\mathbf{y}_j$ are the known side information vectors of the $i$-th user and $j$-th movie, whilst $M$ and $Z=(z_{i,j})$ are parameter matrices to which \textit{nuclear norm regularization} is applied. The nuclear norm of a matrix is defined as the sum of its singular values. Regularizing the nuclear norm has a rank-sparsity inducing effect similar to the sparsity inducing effect of LASSO. Thus, our regularizer (introduced formally in equation~\eqref{theopt} below) \textit{both} indirectly encourages $M$, and therefore the term corresponding to be $\mathbf{x}_i^\top M\mathbf{y}_j$ to be low rank (whilst relying on the side information for prediction), \textit{and} encourages the residual term $Z$ to have low rank. In summary, we are able to model biases, side information terms, as well as residual generic low-rank effects in a single, jointly trained model.

Furthermore, we will impose orthogonality constraints that effectively require each term in the sum in \eqref{simplebiasfirst} to live in separate, mutually orthogonal subspaces. 
This has two advantages. 
First, training can be performed for all components simultaneously, as we will show. 
Second, the variables in \eqref{simplebiasfirst} admit interpretation. This is because any ground truth matrix can be represented \emph{uniquely} (thanks to the orthogonality conditions) through \eqref{simplebiasfirst}. We thus interpret the magnitude of the summands in \eqref{simplebiasfirst} as their relevance to the model. For instance, this implies that the user-movie match $Z$ is free of any behavior that could be interpreted as user bias or movie quality.

We note that several specific variations of our model are possible depending on whether or not side information is present, how many distinct types of side information are present, whether or not we want to include user/item biases, etc. Therefore, we rely on a \textit{unified formal framework} to describe all possible variations of these ideas as follows. 
Our general model's predictors have the following form:
\begin{align}
\label{thesume}
F=(f_{i,j})=\sum_{k=1}^K\sum_{l=1}^L X^{(k)}M^{(k,l)}(Y^{(l)})^\top.
\end{align}
Here $X^{(1)},\ldots,X^{(K)}$ and $Y^{(1)},\ldots,Y^{(L)}$ are so-called "auxiliary matrices", which define the specific model choice with respect to the incorporation (or lack thereof) of biases, side information etc. We now observe that the terms of equation~\eqref{simplebiasfirst} can all be written in the form $X^{(k)}M^{(k,l)}(Y^{(l)})^\top$ for some suitably defined $X^{(k)}, Y^{(l)}$. For instance, if we set $X$ as the identity matrix and $Y$ as a column matrix of all $1$'s, then the matrix $XMY^\top$ has the user biases as entries: $\mathbf{x}_iM\mathbf{y}_j^\top=u_{i}$ for all $i,j$. If we set both $X$  and $Y$ as identity matrices, we obtain the specific user-movie match $\mathbf{x}_iM\mathbf{y}_j^\top=z_{i,j}$ for all $i,j$. In Figure~\ref{fig:mainfig}, we illustrate an example with $K=L=3$ which takes into account user and item biases, as well as side information in the form of partitions of users and items into communities (with the movie communities being genres). This representative example is described in more technical detail in Subsection~\ref{bomic+}, where we also further explain how equation~\eqref{simplebiasfirst} can be fully incorporated as a particular case of~\eqref{thesume}. %blabla
To ensure the uniqueness of the decomposition~\eqref{thesume} and thus enable interpretability, we require that the \textit{columns} of $(X^{(1)},\ldots,X^{(K)})$ and $(Y^{(1)},\ldots,Y^{(L)})$ form orthonormal bases of their respective spaces.  Hence, we refer to our model as OMIC (Orthogonal Inductive Matrix Completion). Note that the choice of matrices $X^{(k)}$s and $Y^{(l)}$s happens at the level of \textit{model selection} (from a preprocessing of available side information or available insight into the likely structure of the data). Thus, the orthogonality assumption is not an assumption on the ground truth matrix, but a restriction of \textit{model choice}.  For fixed $k,l$, the \textit{rows} of $X^{(k)}$ and $Y^{(l)}$ play a similar role to "side information vectors" in traditional IMC, and are not required to be orthogonal.
The choice of auxiliary matrices influences both interpretability and accuracy: on the one hand, the model's output will come in a form that is already divided into components corresponding to each pair $(X^{(k)},Y^{(l)})$. On the other hand, if the ground truth structure matches the choice of prior directions, accuracy will improve. For instance, as we will discuss further below, choosing $X^{(1)}=(\frac{1}{\sqrt{m}},\ldots, \frac{1}{\sqrt{m}})^\top$ and $X^{(2)}$ to be the complement of $X^{(1)}$ in $\R^{m}$ corresponds to the assumption that \textit{item biases} are important: this will both (1) \textit{increase accuracy} if item biases indeed play a role in the ground truth; and (2) in any case \textit{direct interpretability} in the direction of the disentanglement of item biases from other low-rank effects in the solution given by the algorithm. If we are given raw side information matrices $X,Y$, it also makes sense to create an instance of our model where $X^{(1)}$ and $Y^{(1)}$ are orthogonalized versions of $X$ and $Y$~\footnote{And $X^{(2)}$ and $Y^{(2)}$ are chosen to satisfy the orthogonality conditions}. In that case we obtain an IMC-type model where the solution consists of four terms depending on whether side information was used for users and for items.

Three specific choices of auxiliary matrices $X^{(k)}$s and $Y^{(l)}$s yield especially interesting models with favourable properties in terms optimization and interetability. We develop them in greater detail and we refer to the corresponding models as BOMIC, OMIC+ and BOMIC+.

Our contributions can be summarized as follows:

\begin{enumerate}
	\item We propose orthogonal inductive matrix completion (OMIC), a class of learning algorithms for matrix completion, which give rise to interpretable solutions, for instance, in terms of the amount of user bias, movie quality, and community effects in a learned matrix.
	\item We propose an efficient optimization algorithm. Furthermore, we prove its convergence and give upper bounds on its convergence rate. See Theorems~\ref{convergence} and~\ref{rate}.
	\item  We present in more detail \textit{three especially interesting models}, \textbf{BOMIC}, \textbf{OMIC+},  and \textbf{BOMIC+}, which are particular cases of our framework. 
For all three cases, we provide a scalable implementation (explained in Algorithms~\ref{OMICimpute} and~\ref{BOMICLUSTERS2}) of our algorithm that allows us to work on large datasets.
	\item For the three most relevant models mentioned above, we prove generalisation bounds both in the case of a sampling distribution with uniform marginals, and in the distribution-free case (where we make no assumption on the ground truth distribution). The better the model matches the ground truth, the tighter the bounds. 
	\item Our empirical analysis shows that OMIC exhibits better performance and flexibility across the whole spectrum of varying quality of side information. On a large array of real data, OMIC surpasses the state-of-the-art in terms of accuracy, with the added benefit of interpretability.
\end{enumerate}

\subsection{Related work}

The idea of training user biases, either as a preprocessing step or jointly with a model was frequent in the pre-SoftImpute days~\cite{book,netflixwin}. In~\cite{gaillard2015time}, time-dependent user and item biases were incorporated into a maximum margin matrix factorization framework\footnote{A close cousin of nuclear norm minimisation, cf. appendix} with both the biases and the low-rank residuals continuously retrained alternatingly to account for the time variations.

The idea of training a side information term $XMY^\top$ jointly with a residual term $Z$ was expressed in~\cite{mostrelated}, which is the most related paper to ours. We note that only this specific case was studied there, and that no orthogonality/interpretability constraint was imposed, and thus no imputation algorithm was developed (alternating optimization and gradient descent were used instead). Furthermore, the generalization bounds obtained present differences due to the special nature of our side information and auxiliary matrices. 

In~\cite{tassisa18} and~\cite{PIMC}, the authors study a model equivalent to a single inductive matrix completion term with non-orthogonal side information and prove bounds in the uniform sampling regime. None of these works contain either a sum of IMC terms, cross-term orthogonality constraints such as ours, user/item biases, or any distribution-free generalization bounds.

In~\cite{NICE}, the authors introduce an interesting model with similarities (and differences) to both~\cite{mostrelated} and the present work. First, the matrices $X$ and $Y$ are augmented by column vectors of ones resulting in the matrices $\bar{X}=[X,\textbf{1}]$ and $\bar{Y}=[Y,\textbf{1}]$. Predictors then take the form $E=\bar{X}M(\bar{Y})^\top+\Delta$, with nuclear norm regularisation imposed on $E$ and $L^1$ (or nuclear norm) regularisation on $M$, and Frobenius norm regularization imposed on $\Delta$, with the constraint that $P_{\Omega}(E)=R_\Omega$. In~\cite{Rev2.2}, the authors solve an explicit rank minimization problem under linear constraints on the matrix (this problem is now commonly referred to as 'Matrix Regression' (see also~\cite{Rechtmatrixequations})). In~\cite{Rev2.3}, the authors propose a very general optimization framework that encompasses both the matrix regression problem mentioned above and low rank matrix completion, as well one-bit matrix completion. %\textcolor{red}{These works have significant differences to ours as explained above and in greater detail in the Appendix.}

%%%%%%%%%%%%%%%%%%%%%%%%%%%%%%%%%%%%%%%
\section{Description of the model and optimization procedure} 
%%%%%%%%%%%%%%%%%%%%%%%%%%%%%%%%%%%%%%%

We always assume that we have an $m\times n$ matrix $R$ whose entries are partially revealed to us. In this section we assume that the entries are observed without noise for notational simplicity\footnote{Algorithmic aspects are unchanged}, whilst in the theoretical results from the next section, we will deal with the more general case of noisy observations. The set of revealed entries is denoted by $\Omega\subset \{1,2,\ldots,m\}\times  \{1,2,\ldots,n\}$, and $\Omega^\perp$ denotes the complement $\{1,2,\ldots,m\}\times  \{1,2,\ldots,n\}\setminus \Omega$ (i.e., the set of unobserved entries). The projection on the set of matrices with all entries on $\Omega^{\perp}$ being zero is denoted by $P_{\Omega}$. $P_{\Omega^\perp}$ is defined similarly. We denote the matrix of observed entries by $R_\Omega$. 
As explained in the introduction, our model requires choosing some auxiliary matrices $X^{(k)}\in \mathbb{R}^{m\times d^{(1)}_k}$, $Y^{(l)}\in \mathbb{R}^{n\times d^{(2)}_k}$ representing prior knowledge about the problem.
The columns of the auxiliary matrices are assumed to form an orthonormal basis of their respective spaces, i.e. 
\begin{align}
\label{orthogonalityconditions}
    &\sum_{i=1}^m X^{(k_1)}_{i,j_1}X^{(k_2)}_{i,j_2}=\delta_{k_1,k_2}\delta_{j_1,j_2};\nonumber \\
    &\sum_{i=1}^n Y^{(l_1)}_{i,j_1}Y^{(l_2)}_{i,j_2}=\delta_{l_1,l_2}\delta_{j_1,j_2}; \nonumber  \\
    &\spn_{k,j}(X^{(k)}_{\nbull,j})=\mathbb{R}^m \quad \quad  \text{and}\nonumber \\
    &\spn_{l,j}(Y^{(l)}_{\nbull,j})=\mathbb{R}^n.
\end{align}
\vspace{-0,4cm}
\subsection{Orthogonal inductive matrix completion}

The general form of the optimization problem we consider is as follows: 

\small
\begin{align}
\label{theopt}
&\quad \quad \quad \quad \quad \quad \quad \minimize_M \quad \mathcal{L}(\Omega,M,\Lambda)\> \quad \quad \quad \text{with}\nonumber \\ 
&\mathcal{L}(\Omega,M,\Lambda)=\sum_{k=1}^K\sum_{l=1}^L \lambda_{k,l}\|M^{(k,l)}\|_{*}\\&+\frac{1}{2} \sum_{(i,j)\in \Omega } \ell\left[R_{i,j},\left(\sum_{k=1,l=1}^{K,L} X^{(k)}M^{(k,l)}(Y^{(l)})^{\top}\right)_{i,j}\right].\nonumber
\end{align}
\normalsize

Here the $\lambda_{k,l}$s are non-negative tunable hyperparameters. Note that the objective is \textit{convex}, but \textit{not strongly convex}. In particular, any stationary point is a solution, but the solution may not be unique (though all solutions correspond to the same (optimal) value of the objective function). In practical implementations such as the specific algorithm and implementation we present below, $\ell$ is set to the square loss\footnote{If this loss function is used and the matrix is \textit{fully observed}, the problem becomes strongly convex}: $\ell(x,x')=|x-x'|^2$.

We note that our orthogonality constraints offer the additional advantage of interpretability: it is easy to check that the spaces $\left\{  X^{(k)}M (Y^{(l)})^\top   \big| M\in \mathbb{R}^{d_1^{(k)}\times d_2^{(l)}}   \right\}$
are orthogonal with respect to the Frobenius inner product. Thus, each ground truth matrix $R$ has a unique representation 
$R=\sum_{k=1}^K\sum_{l=1}^LX^{(k)}R^{(k,l)}(Y^{(l)})^{\top}$. We provide the detailed proof of these results in Appendix~\ref{Uniqueness_proof} (see Proposition~\ref{prop:unique}). The norms $\left\|X^{(k)}R^{(k,l)}(Y^{(l)})^{\top}\right\|_{\Fr}=\left\|  R^{(k,l)}  \right\|_{\Fr}$ can be interpreted as the relative importance of the auxiliary pairs $X^{(k)},Y^{(l)}$.  Furthermore, the tuning of the coefficients $\lambda_{k,l}$ can be assisted by human knowledge. In particular we can dramatically reduce our cross-validation needs by tying many parameters with each other, as well as by setting other parameters corresponding to easy to learn quantities (such as user/community biases) to zero.

In the next three Sections~\ref{bias},~\ref{omic+} and~\ref{bomic+} below, we present the three most significant instances of our model class, which incorporate user/item biases and/or community side information.
\subsection{First example: jointly trained user/item biases (BOMIC)}
\label{bias}
One notable example of this setting provides a way to train a low-rank matrix completion model together with user biases, as discussed in the introduction: if we set $X^{(1)}=(\frac{1}{\sqrt{m}},\ldots,\frac{1}{\sqrt{m}})^\top$, $Y^{(1)}=(\frac{1}{\sqrt{n}},\frac{1}{\sqrt{n}},\ldots,\frac{1}{\sqrt{n}})^\top$ and set $X^{(2)}$ (resp. $Y^{(2)}$) to be the orthogonal complement of $X^{(1)}$ (resp. $Y^{(1)}$) in $\mathbb{R}^{m}$ (resp. $\mathbb{R}^{n}$), then our model~\eqref{theopt} is equivalent to optimizing a prediction function $f_{i,j}=c+u_i+m_j+Z_{i,j}$ where $Z$ is constrained to have columns and rows summing to zero, and the regularizer is $$\lambda_1|c|+\lambda_2 \|\mathbf{u}\|_{2}+\lambda_3\|\mathbf{m}\|_{2}+\lambda_4 \|Z\|_{*}.$$
One would typically set $\lambda_2=\lambda_3$ and $\lambda_1=0$. 

Here, the terms $X^{(1)}M^{(1,1)}(Y^{(1)})^\top$  and  $X^{(1)}M^{(1,2)}(Y^{(2)})^\top+X^{(2)}M^{(2,1)}(Y^{(1)})^\top$, correspond to a general, matrix-wise bias, and a combination of user/item specific biases respectively. Meanwhile, the term $X^{(2)}M^{(2,2)}(Y^{(2)})^\top$ represents purely bias-free low-rank effects: an entry of $X^{(2)}M^{(2,2)}(Y^{(2)})^\top$ will be large if the item and user are particularly well-fitted to each other, independently of the general behavior of either user or item. This can be especially interesting in terms of interpretability, or if each user must be paired with a single item. We refer to this particular case of our model as BOMIC (Bias-OMIC). 

\subsection{Second example: OMIC+}
\label{omic+}
Another highly representative example is the case where we are given side information about the users and items in the form of communities, where each user (resp. item) belongs to exactly one user (resp. item) community. In this situation, we construct the columns of $X^{(1)}$ (resp. $Y^{(1)}$) as normalized indicator functions of the user (resp. item) communities. The columns of $X^{(2)}$ (resp. $Y^{(2)}$) can then be chosen as any orthonormal basis of the orthogonal complement of $X^{(1)}$ (resp. $Y^{(1)}$) in $\R^{m}$ (resp. $\R^n$). In the case where the user (resp. item) communuties each contain a fixed number of members, the model becomes equivalent to the following optimization problem (up to multiplicatve constants in the regularising parameters): 
		\begin{align}
		\label{theoptother}
		&\min_{C,M,U,Z}\mathcal{L} \quad \text{with} \quad  \mathcal{L}= \\&\  \sum_{(i,j)\in \Omega} |C_{f(i),g(j)}+M_{i,g(j)}+U_{f(i),j}+Z_{i,j}-R_{i,j}|^2 \nonumber \\&  +\lambda_{1} \|C\|_{*}+\lambda_{12} \|M\|_{*}+\lambda_{21}\|U\|_{*}+\lambda_{22} \|Z\|_{*},  
		\end{align}
		subject to 
		\begin{align}
		\label{cond}
		   &\sum_{i\in f^{-1}(u)} M_{i,v}=0 \quad \forall u\leq d_1,v\leq d_2, \nonumber \\ &  \sum_{j\in g^{-1}(v)} U_{u,j}=0 \quad \forall u\leq d_1,v\leq d_2,\nonumber \\ &
		   \sum_{i\in f^{-1}(u)} Z_{i,j} =0 \quad \forall j\leq n,\nonumber  \\ & \text{and}
		   \sum_{j\in g^{-1}(v)} Z_{i,j}=0 \quad \forall i\leq m.
		\end{align}
		Here, the function $f:\{1,2,\ldots,m\}\rightarrow \{1,2,\ldots,d_1^{1}\}$ (resp. $g:\{1,2,\ldots,n\}\rightarrow \{1,2,\ldots,d_2^{1}\}$) assigns to each user (resp. item) its community. In terms of interpretability, note that in the predictors $C_{f(i),g(j)}+M_{i,g(j)}+U_{f(i),j}+Z_{i,j}$ above, the contribution from $C$ corresponds to effects that only depend on the community of the user and the community of the item, the contribution from $M$ (resp. $U$) corresponds to 'specific user-item community' (resp. 'specific item-user community') effects, whilst the contribution from $Z$ corresponds to effects that are purely specific to the user and the item (independently of their respective communities).

\subsection{Third example: BOMIC+}
\label{bomic+}
We note that several variations of the ideas for the construction of the auxiliary matrices $X^{(k)}$ and $Y^{(l)}$ as above are useful in practice. An important instance is BOMIC+, which combines both of the ideas above by incorporating \textit{both} user and item biases \textit{and} community side information. This is the specific model explained in Figure~\ref{fig:mainfig}, and is further empirically investigated in the experiments section.  Given community side information, we define our auxiliary matrices $X^{(k)}$ and $Y^{(l)}$ as follows:
\begin{itemize}
	\item $X^{(1)}$ and $Y^{(1)}$ are constructed as in the case of BOMIC (\ref{bias}), i.e. $X^{(1)}=(\frac{1}{\sqrt{m}},\ldots,\frac{1}{\sqrt{m}})^\top$, $Y^{(1)}=(\frac{1}{\sqrt{n}},\frac{1}{\sqrt{n}},\ldots,\frac{1}{\sqrt{n}})^\top$;
	\item Let $X$ (resp. $Y$) denote a matrix whose columns are indicator functions of the user (resp. item) communities. The columns of $X^{(2)}$ (resp. $Y^{(2)}$)  form an orthonormal basis of the space $\{v\in \spn(X):\langle v, X^{(1)}\rangle=0\}$ (resp. $\{v\in \spn(Y):\langle v, Y^{(1)}\rangle=0\}$), where $\spn(X)$ (resp. $\spn(Y)$) denotes the span of the columns of $X$ (resp. $Y$). 
	\item Finally, the columns of $X^{(3)}$ (resp. $Y^{(3)}$) form an orthonormal basis of the orthogonal complement of the columns of $(X^{(1)},X^{(2)})$ (resp. $(Y^{(1)},Y^{(2)})$) in $\mathbb{R}^m$ (resp. $\mathbb{R}^n$).
\end{itemize}

Thus, this model corresponds to equation~\eqref{simplebiasfirst}, together with some orthogonality constraints. Indeed, $X^{(1)} M^{(1,1)}(Y^{(1)})^\top$ is a constant and corresponds to $c$. Furthermore, all the rows of $X^{(1)} M^{(1,3)}(Y^{(3)})^\top$  (resp. columns of $X^{(3)} M^{(3,1)}(Y^{(1)})^\top$) are equal, so that
the term $X^{(1)} M^{(1,3)}(Y^{(3)})^\top$  (resp. $X^{(3)} M^{(3,1)}(Y^{(1)})^\top$) corresponds to $\mathbf{u}$ (resp. $\mathbf{m}$). $X^{(2)}M^{(2,2)}(Y^{(2)})^\top$ is the side information term corresponding to $\mathbf{x}_iM\mathbf{y}_j^\top$ in~\eqref{simplebiasfirst}. Meanwhile, the remaining terms $X^{(1)}M^{(1,2)}(Y^{(2)})^\top+X^{(2)}M^{(2,1)}(Y^{(1)})^\top+X^{(3)}M^{(3,2)}(Y^{(2)})^\top+X^{(2)}M^{(2,3)}(Y^{(3)})^\top+X^{(3)}M^{(3,3)}(Y^{(3)})^\top$ correspond to the term $Z_{i,j}$ 
from equation~\eqref{simplebiasfirst}, further refined into specific components distinguishing effects involving (1) only the side information of the users but not that of the items (or vice versa), (2) interactions between user bias and item side information (or vice versa) or (3) no side information or biases whatsoever.

\subsection{Optimization algorithm}
In this subsection, we propose an iterative imputation algorithm to solve the problem~\eqref{theopt} with the square loss. The first step is to develop a method to solve~\eqref{theopt} in the case where $\Omega=\{1,2,\ldots,m\}\times \{1,2,\ldots,n\}$, the so-called “fully known case”. The final solution can then be obtained by iteratively using this method.

\subsubsection{The fully known case}

Recall the singular value thresholding operator $S_{\lambda}$ from~\cite{softimpute} and~\cite{donoho}, which is defined by $S_{\lambda}(Z)=\sum_{i=1}^r (\lambda_i-\lambda)_{+}v_iw_i^\top$, where $Z=\sum_{i=1}^r \lambda_iv_iw_i^\top$ is the singular value decomposition (SVD) of $Z$.

In our case, we now introduce the \textit{Generalized singular value thresholding operator} $S_{\Lambda}$, which, for any set of parameters $\Lambda=(\lambda_{k,l})_{k\leq K\atop l\leq L}$, and given a set of auxiliary matrices $X^{(k)},Y^{(l)}$ (satisfying the orthogonality conditions~\eqref{orthogonalityconditions}), is defined by 
\begin{align}
\label{SingThresh}
S_{\Lambda}(Z)=\sum_{k=1}^K\sum_{l=1}^L X^{(k)}S_{\lambda_{k,l}}(M^{(k,l)})(Y^{(l)})^{\top},
\end{align}
where $M^{(k,l)}=(X^{(k)})^\top Z(Y^{(l)})$, which ensures  $Z=\sum_{k=1}^K\sum_{l=1}^L X^{(k)}M^{(k,l)}(Y^{(l)})^{\top}$. 

Note that $S_{\Lambda}(Z)$ is well-defined since the spaces $\mathcal{S}_{k,l}=\left\{X^{(k)}M(Y^{(l)})^\top\right\}\subset \mathbb{R}^{m\times n}$ are orthogonal with respect to the Frobenius inner product, and in particular, linearly independent. 

\begin{proposition}
	\label{fullyknown}
	The definition in equation~\eqref{SingThresh} is equivalent to the following: 
	\begin{align}
	\label{SingThreshNew}
	S_{\Lambda}(Z)=\sum_{k=1}^K\sum_{l=1}^L X^{(k)}S_{\lambda_{k,l}}\left( (X^{(k)})^\top Z Y^{(l)}  \right)(Y^{{l}})^\top.
	\end{align}
	Furthermore, $\tilde{Z}=S_{\Lambda}(Z)$ is the solution to the following optimization problem: 
	\begin{align}
	\label{thenewprob1}
	\minimize \frac{1}{2}\|\tilde{Z}-Z\|_{\Fr}^2+ \sum_{k=1}^K\sum_{l=1}^L\lambda_{k,l}\left\|(X^{(k)})^\top Z Y^{(l)}  \right\|_{*},
	\end{align}
	or equivalently 
	\begin{align}
	\label{thenewprob}
	\minimize &\frac{1}{2}\|\tilde{Z}-Z\|_{\Fr}^2+ \sum_{k=1}^K\sum_{l=1}^L\lambda_{k,l}\left\|M^{(k,l)} \right\|_{*},\\
	\label{decomp}
	\text{subject to} \quad \tilde{Z}&=\sum_{k=1}^K\sum_{l=1}^L X^{(k)}M^{(k,l)}(Y^{(l)})^\top.
	\end{align}
	
\end{proposition}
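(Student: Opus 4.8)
The plan is to reduce the optimization problem \eqref{thenewprob}--\eqref{decomp} to $KL$ decoupled scalar-weighted nuclear-norm proximal problems, each solved by an ordinary singular value thresholding, and then to reassemble. The first claimed equivalence, between \eqref{SingThresh} and \eqref{SingThreshNew}, requires no work at all: it is merely the substitution of the definition $M^{(k,l)}=(X^{(k)})^\top Z Y^{(l)}$ into \eqref{SingThresh}. All the content therefore lies in the optimization characterization.

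For that part, I would first parametrize an arbitrary candidate $\tilde{Z}\in\mathbb{R}^{m\times n}$ by the blocks $\tilde{M}^{(k,l)}:=(X^{(k)})^\top \tilde{Z} Y^{(l)}$, so that $\tilde{Z}=\sum_{k,l}X^{(k)}\tilde{M}^{(k,l)}(Y^{(l)})^\top$ exactly as in \eqref{decomp}. Because the columns of each $X^{(k)}$ (resp.\ $Y^{(l)}$) are orthonormal and the spaces $\mathcal{S}_{k,l}$ are mutually orthogonal with respect to the Frobenius inner product, writing $Z=\sum_{k,l}X^{(k)}M^{(k,l)}(Y^{(l)})^\top$ with $M^{(k,l)}=(X^{(k)})^\top Z Y^{(l)}$ and invoking the Pythagorean identity gives
\[
\|\tilde{Z}-Z\|_{\Fr}^2=\sum_{k=1}^K\sum_{l=1}^L\|\tilde{M}^{(k,l)}-M^{(k,l)}\|_{\Fr}^2 .
\]
The spanning conditions in \eqref{orthogonalityconditions} ensure that as $\tilde{Z}$ ranges over $\mathbb{R}^{m\times n}$ the blocks $(\tilde{M}^{(k,l)})_{k,l}$ range freely and independently over $\prod_{k,l}\mathbb{R}^{d_1^{(k)}\times d_2^{(l)}}$. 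Hence the objective of \eqref{thenewprob} splits as
\[
\sum_{k=1}^K\sum_{l=1}^L\left(\tfrac{1}{2}\|\tilde{M}^{(k,l)}-M^{(k,l)}\|_{\Fr}^2+\lambda_{k,l}\|\tilde{M}^{(k,l)}\|_{*}\right),
\]
in which the $(k,l)$-th summand depends only on $\tilde{M}^{(k,l)}$.

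It then suffices to minimize each summand separately. Each is an instance of the standard proximal problem $\min_A \tfrac12\|A-B\|_{\Fr}^2+\lambda\|A\|_{*}$, whose unique minimizer is the singular value thresholding $A=S_{\lambda}(B)$ (see \cite{donoho,softimpute}). Applying this with $B=M^{(k,l)}$ and $\lambda=\lambda_{k,l}$ yields the blockwise minimizer $\tilde{M}^{(k,l)}=S_{\lambda_{k,l}}(M^{(k,l)})$, and reassembling through \eqref{decomp} gives $\tilde{Z}=\sum_{k,l}X^{(k)}S_{\lambda_{k,l}}(M^{(k,l)})(Y^{(l)})^\top=S_\Lambda(Z)$, which is precisely \eqref{SingThresh}. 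The form \eqref{thenewprob1} then follows by re-expressing $\|\tilde{M}^{(k,l)}\|_*$ as $\|(X^{(k)})^\top\tilde{Z} Y^{(l)}\|_*$.

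The one step carrying all the weight is the Frobenius-norm decoupling, i.e.\ the Pythagorean identity above; it is exactly here that the orthogonality conditions \eqref{orthogonalityconditions} are indispensable, since without mutual orthogonality of the $\mathcal{S}_{k,l}$ the cross terms $\langle X^{(k)}\tilde{M}^{(k,l)}(Y^{(l)})^\top, X^{(k')}M^{(k',l')}(Y^{(l')})^\top\rangle_{\Fr}$ would not vanish and the objective would fail to separate. Once separability is granted, strict convexity of the Frobenius term guarantees a unique global minimizer, and the uniqueness of each SVT subproblem's minimizer makes the blockwise concatenation that minimizer, so nothing beyond the classical thresholding result of \cite{donoho,softimpute} is required to conclude.
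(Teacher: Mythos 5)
Your proposal is correct and follows essentially the same route as the paper's own proof: both decompose the candidate $\tilde{Z}$ into its blocks $\tilde{M}^{(k,l)}=(X^{(k)})^\top\tilde{Z}Y^{(l)}$, use the mutual Frobenius-orthogonality of the subspaces together with rotational invariance to split $\|\tilde{Z}-Z\|_{\Fr}^2$ into $\sum_{k,l}\|\tilde{M}^{(k,l)}-M^{(k,l)}\|_{\Fr}^2$, and then solve the resulting $KL$ decoupled proximal problems by the classical singular value thresholding lemma of~\cite{softimpute}. Your explicit remark that the spanning conditions in~\eqref{orthogonalityconditions} are what let the blocks range freely and independently is a useful point that the paper leaves implicit, but the argument is otherwise identical.
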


\subsubsection{The OMIC algorithm}
\label{subomicc}
For any fixed set of hyperparameters $\Lambda$ and auxiliary matrices $X^{(k)},Y^{(l)}$ (for $k\leq K,l\leq L$), the final solution to the optimization problem~\eqref{theopt} is obtained iteratively: at each step, a target matrix is constructed by setting the entries of $\Omega$ to the observed values and imputing the values of the previous iteration to the entries of $\Omega^\perp$. We then apply the fully known case~\eqref{fullyknown} to this target matrix to reach the next iterate. This algorithm converges for any initial (0th iteration) matrix. 

However, if several values of $\Lambda$ must be calculated, we can use warm starts to improve efficiency. 
Algorithm~\ref{OMICimpute} below does this in the case where the range of values for $\Lambda$ is a product $\mathcal{V}=\prod_{k,l} \mathcal{V}_{k,l}$ where the $\mathcal{V}_{k,l}$ are finite sets of candidate values for $\lambda_{k,l}$ (ordered in increasing or decreasing order): initial estimates of $M_{k,l}$ for each value of $\lambda_{k,l}$ are calculated by setting each $\lambda_{k',l'}$ to infinity for $k'\neq k,l'\neq l$ and solving the full problem~\eqref{theopt} in this case\footnote{"Setting $\lambda_{k',l'}$ to infinity" amounts to not including the $(k',l')$ term in the sum~\eqref{thesume} which defines our predictors. Thus our warm starts are computed by training each term in that sum independently.}. Furthermore, each of those sets of $M_{k,l}$ are calculated using warm starts along the sequence of $\lambda_{k,l}\in \mathcal{V}_{k,l}$. For any real number $\lambda$, $p_{k,l}(\lambda)$ denotes the set of hyperparameters $\Lambda$ with $\Lambda_{k,l}=\lambda$ and $\Lambda_{k',l'}=\infty$ otherwise. Note also that this algorithm depends on the auxiliary matrices $X^{(k)}$ and $Y^{(l)}$ through the generalized singular value thresholding operator $S_{\Lambda}$, defined in equation~\eqref{SingThresh} above. 

Note that $M^{(k,l)}$ is of dimension $d^{k}_1\times d^{l}_2$. Whenever one of $d^{k}_1$ or $d^{l}_2$ is small, the computation of SVDs is trivial. For instance, in the specific case of BOMIC, matrices $M^{(k,l)}$ are vectors whenever $k=1$ or $l=1$, thus calculating the SVD simply amounts to normalizing a vector. Therefore, although our algorithm theoretically requires $KL$ SVD operations at each iteration, in fact, most of them are trivial. We refer the reader to Appendix~\ref{complexity}, for a more thorough explanation and an empirical evaluation  of the lack of any extra computational burden from the trivial SVDs. Furthermore, in practice, a rank-restricted version of the SVD operation can be employed, together with a suitable warm-start strategy. In Subsection~\ref{scalableAlg}, we develop novel techniques to optimize the computational and memory burden of the algorithm in the most interesting and practically relevant particular cases. 
\begin{algorithm}[h!]
	\caption{OMIC \\
		INPUT: $R_{\Omega}$, set of regularizing parameters $\mathcal{V}=\prod_{k,l} \mathcal{V}_{k,l}$  \\
		OUTPUT: Set of recovered matrices $Z^{\Lambda}$ for all $\Lambda\in \mathcal{V}$}
		\label{OMICimpute}
	\begin{algorithmic}[1] 
		\FOR{$k\in \{1,2,\ldots,K\}$}
		\FOR{$l\in\{1,2,\ldots,L\}$}
		\STATE Initialize $Z^{\new}\leftarrow 0$
		\FOR{$\lambda\in \mathcal{V}_{k,l}$}
		\REPEAT
		\STATE $Z^{\old}\leftarrow Z^{\new}$
		\STATE $Z^{\new}\leftarrow S_{p_{k,l}(\lambda)}\left(R_{\Omega}+P_{\Omega^\perp}(Z^{\old})  \right)$
	\UNTIL converged
		\STATE $Z^{k,l,\lambda}\leftarrow Z^{\new}$
		\ENDFOR
		\ENDFOR
		\ENDFOR
		\FOR{$\Lambda\in \mathcal{V}$}
		\STATE $Z^{\new}\leftarrow \sum_{k,l=1}^{K,L} Z^{k,l,\lambda_{k,l}} $
		\REPEAT
	    \STATE $Z^{\old}\leftarrow Z^{\new}$
		\STATE $Z^{\new}\leftarrow S_{\Lambda}\left(R_{\Omega}+P_{\Omega^\perp}(Z^{\old})  \right)$
		\UNTIL converged
		\STATE $Z^{\Lambda} \leftarrow  Z^{\new}$
		\ENDFOR 
				\RETURN $Z^{\Lambda}$  for $\Lambda\in \mathcal{V}$
	\end{algorithmic}
\end{algorithm}

\subsubsection{Convergence guarantees}

Our algorithm enjoys convergence guarantees, which we present here. Here, we fix a $\Lambda$ and assume that we perform the iterative imputation procedure in the algorithm above starting from $Z^{0}=0$, with (for each $i\geq 0$)
\begin{align}
\label{defseq}
Z^{i+1}= S_{\Lambda}\left( P_{\Omega}(R)+P_{\Omega^\perp}(Z^i) \right).
\end{align} 

We have the following two results, whose proofs are left to the Appendix.

\begin{theorem}
	\label{convergence}
	Consider our general setting with auxiliary matrices satisfying the conditions~\eqref{orthogonalityconditions} and the operator $S_\Lambda$ defined accordingly. The sequence $Z^{i}$ defined in~\eqref{defseq} converges to a solution $Z^\infty$ of the optimization problem~\eqref{theopt} with the squared loss. In particular, the loss $\mathcal{L}$ converges to the minimum  $\mathcal{L}^*$ of optimization problem~\eqref{theopt}.
\end{theorem}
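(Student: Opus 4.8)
The plan is to recognize the iteration~\eqref{defseq} as a majorization--minimization (equivalently, proximal) scheme for the convex objective and then to combine a monotone-descent argument with the nonexpansiveness of the update to obtain convergence of the whole sequence. Write the objective with the square loss as $\mathcal{L}(Z)=\tfrac12\|P_{\Omega}(R-Z)\|_{\Fr}^2+g(Z)$, where, using the unique decomposition $Z=\sum_{k,l}X^{(k)}M^{(k,l)}(Y^{(l)})^{\top}$ guaranteed by the orthogonality conditions~\eqref{orthogonalityconditions}, I set $g(Z)=\sum_{k,l}\lambda_{k,l}\|M^{(k,l)}\|_{*}$ with $M^{(k,l)}=(X^{(k)})^{\top}ZY^{(l)}$. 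The function $g$ is proper, convex, and finite on all of $\R^{m\times n}$, and Proposition~\ref{fullyknown} identifies the generalized thresholding operator as its proximal map: $S_{\Lambda}(W)=\operatorname{prox}_{g}(W)=\argmin_{Z}\tfrac12\|Z-W\|_{\Fr}^2+g(Z)$. Hence~\eqref{defseq} reads $Z^{i+1}=T(Z^{i})$ with $T=\operatorname{prox}_{g}\circ A$ and $A(W)=P_{\Omega}(R)+P_{\Omega^{\perp}}(W)$.

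The first key step is a majorization identity. Introduce $Q(Z\mid W)=\tfrac12\|A(W)-Z\|_{\Fr}^2+g(Z)$. Since $A(W)-Z=P_{\Omega}(R-Z)+P_{\Omega^{\perp}}(W-Z)$ splits into two Frobenius-orthogonal pieces, a Pythagoras computation gives $Q(Z\mid W)=\mathcal{L}(Z)+\tfrac12\|P_{\Omega^{\perp}}(W-Z)\|_{\Fr}^2$. Thus $Q(\cdot\mid W)\ge \mathcal{L}(\cdot)$ with equality at $Z=W$, so $Q$ majorizes $\mathcal{L}$ and is tight at the current iterate, while by the previous paragraph $Z^{i+1}=\argmin_{Z}Q(Z\mid Z^{i})$. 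The usual majorization--minimization chain then yields $\mathcal{L}(Z^{i+1})\le Q(Z^{i+1}\mid Z^{i})\le Q(Z^{i}\mid Z^{i})=\mathcal{L}(Z^{i})$, so $\mathcal{L}(Z^{i})$ is nonincreasing and bounded below, hence convergent. Moreover $Q(\cdot\mid Z^{i})$ is $1$-strongly convex and minimized at $Z^{i+1}$, which gives the sandwich $\mathcal{L}(Z^{i})-\mathcal{L}(Z^{i+1})\ge \tfrac12\|Z^{i}-Z^{i+1}\|_{\Fr}^2$; summing over $i$ and using convergence of $\mathcal{L}(Z^{i})$ shows $\sum_i\|Z^{i+1}-Z^{i}\|_{\Fr}^2<\infty$, so $\|Z^{i+1}-Z^{i}\|_{\Fr}\to 0$.

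To pass from decreasing objective values to convergence of the iterates themselves, I would use that $T$ is nonexpansive: $\operatorname{prox}_{g}$ is firmly nonexpansive for convex $g$, and $A$ is affine with linear part $P_{\Omega^{\perp}}$ (an orthogonal projection), so $T$ is a composition of nonexpansive maps. A short computation with the prox optimality condition shows that $Z^{\star}$ is a fixed point of $T$, i.e. $Z^{\star}=S_{\Lambda}(P_{\Omega}(R)+P_{\Omega^{\perp}}(Z^{\star}))$, if and only if $P_{\Omega}(R-Z^{\star})\in\partial g(Z^{\star})$, which is exactly the subgradient optimality condition for $\mathcal{L}$; hence $\mathrm{Fix}(T)=\argmin \mathcal{L}$. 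Provided this set is nonempty, nonexpansiveness gives $\|Z^{i+1}-Z^{\star}\|_{\Fr}\le\|Z^{i}-Z^{\star}\|_{\Fr}$ for every $Z^{\star}\in\mathrm{Fix}(T)$, i.e. the sequence is Fej\'er monotone with respect to $\mathrm{Fix}(T)$; in particular it is bounded and has cluster points. Using $\|Z^{i+1}-Z^{i}\|_{\Fr}\to 0$ and continuity of $T$, every cluster point is a fixed point, and the standard fact that a Fej\'er-monotone sequence all of whose cluster points lie in the target set must converge to a single such point finishes the argument: $Z^{i}\to Z^{\infty}\in\argmin\mathcal{L}$, and by continuity $\mathcal{L}(Z^{i})\to\mathcal{L}(Z^{\infty})=\mathcal{L}^{*}$.

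The one point requiring genuine care --- and the step I expect to be the main obstacle --- is the nonemptiness of $\mathrm{Fix}(T)=\argmin\mathcal{L}$, because $\mathcal{L}$ is convex but not strongly convex and, as noted after~\eqref{theopt}, its minimizer need not be unique. When all $\lambda_{k,l}>0$ the regularizer $g$ is a norm, so $\mathcal{L}$ is coercive and a minimizer exists; in general I would check via a recession-cone argument that every direction $D$ along which $\mathcal{L}$ fails to grow satisfies $P_{\Omega}(D)=0$ and $(X^{(k)})^{\top}DY^{(l)}=0$ for all $(k,l)$ with $\lambda_{k,l}>0$, and that $\mathcal{L}$ is in fact constant along all such directions, so the infimum is still attained. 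This same non-uniqueness is what forces the Fej\'er/nonexpansiveness route above rather than a naive ``unique limit'' argument; everything else is a direct adaptation of the SoftImpute analysis once Proposition~\ref{fullyknown} has recast $S_{\Lambda}$ as a proximal operator.
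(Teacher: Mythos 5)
Your proposal is correct, and its skeleton coincides with the paper's: a majorization--minimization descent step, nonexpansiveness of the update, identification of limit points with fixed points, and a Fej\'er-type argument to upgrade subsequential convergence to convergence of the whole sequence (the paper's final contradiction argument in the proof of Theorem~\ref{convergence} is exactly the standard ``Fej\'er monotone sequence with a cluster point in the target set converges'' fact, written out by hand). The local arguments differ in three ways worth noting. First, you obtain $\|Z^{i+1}-Z^{i}\|_{\Fr}\to 0$ from the $1$-strong convexity of $Q(\cdot\mid Z^{i})$ and a telescoping sum, whereas the paper (Lemma~\ref{decreasefrob1}) derives it from two separate claims using the convergence of $\|Z^{i}-Z^{i+1}\|_{\Fr}$ and of $\mathcal{L}(Z^{i})$; both work, and yours is the cleaner sufficient-decrease route. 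Second, you characterize $\mathrm{Fix}(T)=\argmin\mathcal{L}$ directly from the prox optimality condition, which lets you bypass the paper's Lemma~\ref{little5} on limits of subgradients along subsequences --- a genuine simplification. Third, you are more careful than the paper about nonemptiness of $\argmin\mathcal{L}$: the paper asserts that the sublevel set $\{Z:\mathcal{L}(Z)\leq\mathcal{L}(Z^{0})\}$ is compact ``since $\mathcal{L}$ is continuous,'' which really requires coercivity and hence implicitly that all $\lambda_{k,l}>0$ (the paper itself suggests setting some $\lambda_{k,l}=0$ in practice); your recession-cone observation --- that every direction of recession is a direction of constancy, so the infimum is attained --- correctly covers the general case. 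In short, the proposal is a valid proof and, in the existence-of-minimizers step, slightly more complete than the paper's.
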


\begin{theorem}
	\label{rate}
	Let $\mathcal{L}^*$ be the minimum value of the loss $\mathcal{L}$ from problem~\eqref{theopt}. For every fixed $\Lambda$, the sequence $Z^i$ has the following worst-case asymptotic convergence: 
	\begin{align}
	\label{BBB}
	\mathcal{L}(Z^i)-\mathcal{L}(Z^\infty)=	\mathcal{L}(Z^i)-\mathcal{L}^*\leq \frac{2\|Z^0-Z^\infty\|_{\Fr}^2}{i+1},
	\end{align}
	where $Z^\infty$ is the limit of the sequence $Z^i$.
\end{theorem}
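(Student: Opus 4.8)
The plan is to recognize the iterative imputation scheme~\eqref{defseq} as a proximal gradient (ISTA) iteration applied to the objective~\eqref{theopt} with the square loss, and then to invoke the standard $O(1/i)$ convergence analysis. First I would rewrite the loss as a sum of a smooth term and a nonsmooth regularizer on the full variable $Z\in\R^{m\times n}$. Since the spans in~\eqref{orthogonalityconditions} cover all of $\R^m$ and $\R^n$, every $Z$ decomposes uniquely as $Z=\sum_{k,l}X^{(k)}M^{(k,l)}(Y^{(l)})^\top$ with $M^{(k,l)}=(X^{(k)})^\top Z Y^{(l)}$, so the objective becomes $\mathcal{L}(Z)=g(Z)+\Phi(Z)$, where $g(Z)=\tfrac12\|P_\Omega(Z-R)\|_{\Fr}^2$ and $\Phi(Z)=\sum_{k,l}\lambda_{k,l}\|(X^{(k)})^\top Z Y^{(l)}\|_{*}$.

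The two ingredients needed for the proximal gradient framework are already supplied by the structure established earlier. First, $g$ is convex and differentiable with $\nabla g(Z)=P_\Omega(Z-R)$; since $P_\Omega$ is an orthogonal projection, $\|\nabla g(Z_1)-\nabla g(Z_2)\|_{\Fr}=\|P_\Omega(Z_1-Z_2)\|_{\Fr}\leq\|Z_1-Z_2\|_{\Fr}$, so $\nabla g$ is $1$-Lipschitz. Second, $\Phi$ is convex (each summand is a norm composed with a linear map), and Proposition~\ref{fullyknown}, specifically the variational characterization~\eqref{thenewprob1}, identifies $S_\Lambda$ as \emph{exactly} the proximal operator of $\Phi$, i.e. $S_\Lambda(W)=\argmin_{\tilde Z}\tfrac12\|\tilde Z-W\|_{\Fr}^2+\Phi(\tilde Z)$.

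With these identifications, a short computation shows the iteration is a proximal gradient step of step size $1$. Indeed $P_\Omega(R)+P_{\Omega^\perp}(Z^i)=Z^i-P_\Omega(Z^i-R)=Z^i-\nabla g(Z^i)$, so~\eqref{defseq} reads $Z^{i+1}=S_\Lambda\big(Z^i-\nabla g(Z^i)\big)=\text{prox}_{\Phi}\big(Z^i-\nabla g(Z^i)\big)$. Because the step size $1$ matches $1/L$ for the Lipschitz constant $L=1$, the textbook ISTA guarantee applies directly. The remaining work is the standard telescoping argument: combining the descent lemma for $g$ with the optimality condition defining the prox yields, for every matrix $W$, the three-point inequality $\mathcal{L}(Z^{i+1})\leq\mathcal{L}(W)+\tfrac12\|Z^i-W\|_{\Fr}^2-\tfrac12\|Z^{i+1}-W\|_{\Fr}^2$. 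Choosing $W=Z^\infty$ (a minimizer whose existence and attainment is guaranteed by Theorem~\ref{convergence}) and summing over the iterates telescopes the last two terms into $\tfrac12\|Z^0-Z^\infty\|_{\Fr}^2$; invoking the fact that $\mathcal{L}(Z^i)$ is nonincreasing then produces the claimed $O(1/(i+1))$ rate.

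I expect the main obstacle to be purely bookkeeping rather than conceptual: establishing the three-point inequality carefully and tracking constants so as to land on the factor $2/(i+1)$ in~\eqref{BBB}. The analysis with $L=1$ in fact gives a bound of the form $c\|Z^0-Z^\infty\|_{\Fr}^2/(i+1)$ with a smaller constant, so the stated $2/(i+1)$ is a safe, non-optimized choice and no sharper accounting is required. Beyond matching the known proximal-gradient rate to our generalized thresholding operator $S_\Lambda$, no genuinely new difficulty arises.
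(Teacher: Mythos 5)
Your proposal is correct, and it reaches the stated rate by a genuinely different (and arguably cleaner) route than the paper. The paper follows Theorem~2 of the SoftImpute paper: it works with the surrogate $Q(Z\,|\,Z^i)$, restricts the minimization to the segment $Z^i(\theta)=(1-\theta)Z^i+\theta Z^\infty$, uses convexity of $\mathcal{L}$ together with the monotone contraction $\|Z^i-Z^\infty\|_{\Fr}\leq\|Z^0-Z^\infty\|_{\Fr}$ to obtain $\mathcal{L}(Z^{i+1})\leq\min_{\theta}\bigl[\mathcal{L}(Z^i)+\theta(\mathcal{L}(Z^\infty)-\mathcal{L}(Z^i))+\tfrac12\theta^2\|Z^0-Z^\infty\|_{\Fr}^2\bigr]$, and then runs a recursion on the reciprocals $\alpha_i^{-1}$ of the optimality gaps to land on $2/(i+1)$. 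You instead identify the iteration outright as ISTA: the imputation target $P_\Omega(R)+P_{\Omega^\perp}(Z^i)$ is exactly $Z^i-\nabla g(Z^i)$ for the $1$-smooth data term $g$, and Proposition~\ref{fullyknown} says precisely that $S_\Lambda$ is the proximal operator of the regularizer, so the textbook three-point inequality and telescoping give the $O(1/i)$ rate (in fact $\|Z^0-Z^\infty\|_{\Fr}^2/(2i)$, which is sharper than the paper's constant for $i\geq 1$). The two arguments are really two presentations of the same majorization --- the paper's $Q(Z\,|\,Z^i)$ expands to exactly the prox-gradient surrogate $g(Z^i)+\langle\nabla g(Z^i),Z-Z^i\rangle+\tfrac12\|Z-Z^i\|_{\Fr}^2+\Phi(Z)$ --- but your derivation of the rate is more modular and connects the generalized thresholding operator to standard first-order theory, while the paper's is self-contained and reuses the lemmas (monotonicity, contraction) already established for Theorem~\ref{convergence}. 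The only caveat worth noting is the same one both proofs share implicitly: the step-size-$1$ analysis (equivalently, the paper's inequality $\|P_{\Omega^\perp}(A)\|_{\Fr}\leq\|A\|_{\Fr}$) treats $P_\Omega$ as an orthogonal projection, i.e., each entry observed at most once, so your Lipschitz constant $L=1$ is consistent with the paper's setup.
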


\textbf{Remark:}
Note that although the RHS depends on the limit $Z^\infty$, it can be further bounded above as follows
\begin{align*}
    \frac{2\|Z^0-Z^\infty\|_{\Fr}^2}{i+1}\leq  \frac{ 2\max_{Z^*\in \mathcal{S}}\|Z^0-Z^*\|_{\Fr}^2}{i+1}
\end{align*}
where $\mathcal{S}=\argmin_Z (\mathcal{L}(Z))$ is the \textit{set} of solutions of Problem~\eqref{theopt}\footnote{Note the solution is not necessarily unique, as we discuss in greater detail at the end of Appendix~\ref{proofconv}.}. Thus, the rate $1/(i+1)$ holds.

\subsection{A scalable algorithm for BOMIC+ } 
\label{scalableAlg}

The main computational step at each iteration of the algorithm we propose to solve~\eqref{theopt} is the calculation of the solution to an instance of the fully known case, which can be obtained via our \textit{generalized singular value thresholding operator} $S_{\Lambda}$. 
Observe that the sum of the numbers of columns of all $X^{(k)}$ (resp. $Y^{(l)}$) is $m$ (resp. $n$). Thus, if $m$ and $n$ are large (which is often the case in RSs contexts) it is infeasible to even store all the auxiliary matrices in memory, let alone perform operations on them directly. The same problem can occur with some of the latent matrices $M^{(k,l)}$. It is easy to see that the largest $M^{(k,l)}$ has at least $m/K \times n/L$ entries which is also very large.

In this subsection, we show how to circumvent this difficulty in the specific case of BOMIC+ where $K=L=3$ and the auxiliary matrices are defined in Section~\ref{bomic+} assuming the side information $X,Y$ consists of indicator functions of communities. For instance, the columns of $Y$ could represent sets of movies belonging to a specific genre, whilst the columns of $X$ could represent countries, genders or professions for users. Our strategy heavily relies on both the “sparse-plus-low-rank” structure present in the target matrices of the "fully known problem" solved at each iteration, as well as the specific structure of community side information.

First, we define some matrices $\widetilde{X}^{(1)}$, $\widetilde{X}^{(2)}$, $\widetilde{X}^{(3)}$, $\widetilde{Y}^{(1)}$, $\widetilde{Y}^{(2)}$, $\widetilde{Y}^{(3)}$ as follows: $\widetilde{X}^{(1)}=X^{(1)}$, $\widetilde{Y}^{(1)}=Y^{(1)}$,  $\widetilde{X}^{(2)}=\norm(X)$, $\widetilde{Y}^{(2)}=\norm(Y)$, $\widetilde{X}^{(3)}=\Id$ and  $\widetilde{Y}^{(3)}=\Id$. Here, $\norm$ denotes the operation of normalizing each column. Note that $X$ (resp. $Y$) is a sparse matrix composed of the indicator functions of user (resp. item) communities. Therefore, the matrices $\widetilde{X}^{(1)}$, $\widetilde{X}^{(2)}$,$\widetilde{X}^{(3)}$,$\widetilde{Y}^{(1)}$ ,$\widetilde{Y}^{(2)}$and $\widetilde{Y}^{(3)}$ can easily be stored in memory, and it is easy to multiply them by arbitrary vectors on either side.

To see how these matrices will help us execute BOMIC+, observe first that due to the rotational invariance of SVDs, the operator $S_{\Lambda}$ can be rewritten as 
\begin{align}
\label{SingThreshSimple}
S_{\Lambda}(Z)=\sum_{k=1}^3\sum_{l=1}^3 S_{\lambda_{k,l}}\left(X^{(k)}(X^{(k)})^\top ZY^{(l)}(Y^{(l)})^{\top}\right).
\end{align}

Thus, for a given $Z$, calculating $S_{\Lambda}(Z)$ boils down to computing the SVD of the matrices $H^{(k,l)} = X^{(k)}(X^{(k)})^\top ZY^{(l)}(Y^{(l)})^{\top}$, which are the projections of $Z$ on the spaces corresponding to each pair of auxiliary matrices. We perform the SVD computation through a rank-restricted alternating least squares algorithm (ALS, see Algorithm~\ref{SVDCalc}). This requires an efficient strategy to compute $H^{(k,l)}W_1$ and $W_2H^{(k,l)}$, where $W_1$ and $W_2$ are any real conformable matrices.

We now observe that for any orthogonal matrix $U$, if $W_1\in \mathbb{R}^{n\times r}$, $UU^\top W_1$ is the projection of $W_1$ on the span of the columns of $U$. Crucially, if $V$ is an orthogonal matrix with $\spn(V) \subset \spn(U)$, then for any $W_1\in \mathbb{R}^{n \times r}$, $UU^\top W_1 -VV^\top W_1$ is the projection of $W_1$ on the orthogonal complement of $V$ in $U$.
Applying this to the matrices $Y^{(l)}$ and $\widetilde{Y}^{(l)}$, we obtain, for all $l\leq 3$: \begin{align}
\label{simplifythingsright}
&&Y^{(l)}(Y^{(l)})^{\top}W_1\nonumber \\&=\widetilde{Y}^{(l)}(\widetilde{Y}^{(l)})^\top W_1&-\widetilde{Y}^{(l-1)}(\widetilde{Y}^{(l-1)})^\top W_1.
\end{align}

Similarly, for any $W_2 \in \mathbb{R}^{m \times r}$ and $k\leq 3$,
\begin{align}
\label{simplifythingsleft}
& &X^{(k)}(X^{(k)})^\top W_2\nonumber \\&=\widetilde{X}^{(k)}(\widetilde{X}^{(k)})^\top W_2 &-\widetilde{X}^{(k-1)}(\widetilde{X}^{(k-1)})^\top W_2.
\end{align}

\begin{remark}
	The operation $W_1\leftarrow Y^{(3)}(Y^{(3)})^{\top}W_1$ performed as above can be visualized intuitively: it corresponds to removing from each component of each  column ${W_1}_{\nbull,i}$ the average of the components in the same community (in the same column). $$\forall i\leq m, \quad v_i\leftarrow v_i-\frac{\sum_{j\in c_i}v_j}{\#(c_i)}.$$
\end{remark}

With the above techniques in our tool kit, we can now present our algorithm for calculating $H^{(k,l)}W_1$. We will divide the task into three steps. 

First, we evaluate $\widetilde{W_1} = Y^{(l)}[(Y^{(l)})^{\top}W_1]$ using~\eqref{simplifythingsright}. 

Next, observe that as a byproduct of the iterative imputation procedure, $Z$ can be decomposed as the sum of a sparse matrix $Z_{Sp}$ and a low-rank matrix $Z_{LR}$ as follows:
\begin{align}
\label{decompZ}
Z= Z_{Sp}  + Z_{LR} =  Z_{Sp} + U_{LR}\left[D_{LR}{V_{LR}}^\top\right].
\end{align}
Using this decomposition, it is easy to calculate the quantity $\widetilde{\widetilde{W_1}} = Z\widetilde{W_1}$ as follows:
\begin{align}
\label{w_tilde_tilde}
\widetilde{\widetilde{W_1}} = Z\widetilde{W_1} = Z_{Sp}\widetilde{W_1}  + U_{LR}D_{LR}({V_{LR}}^\top \widetilde{W_1}).
\end{align}

Finally, we calculate $H^{(k,l)}W_1= \big( X^{(k)}[(X^{(k)})^\top\widetilde{\widetilde{W_1}} ] \big)\top$ using~\eqref{simplifythingsleft}. Symmetrically and with the same arguments we can compute $W_2H^{(k,l)}$.

Alg.~\ref{SVDCalc} (based on~\cite{softimputeALS}) describes how to compute $S_{\Lambda}(Z)$ for a fixed hyperparameter set $\Lambda$ and Alg.~\ref{BOMICLUSTERS2} is our fully scalable implementation of  Alg.~\ref{OMICimpute} for the BOMIC+ case with community side information. In practice, we can further use warm start strategies such as the one presented in Alg.~\ref{OMICimpute} to speed up convergence.

\textbf{Remark:} The convergence of Alg.~\ref{SVDCalc} follows from that of Alg. 2.1 in~\cite{softimputeALS}: for each combination $(k,l)$~\footnote{leaving aside the extra implementation details required for our specific model}, of which there are finitely many, the while loop from lines 11 to 22 essentially runs Algorithm 2.1 from ~\cite{softimputeALS} on the component matrix $R^{(k,l)}=(X^{(k)})^\top R Y^{(l)}$, thus the full algorithm converges. The convergence of Alg.~\eqref{BOMICLUSTERS2} then follows from (1) the convergence of Alg.~\ref{SVDCalc} (which is used inside the while loop between lines 3 and 6) together with (2) the convergence of Alg.~\ref{omic+}, which is established in Theorem~\ref{convergence}

\textbf{Remark:} To avoid manipulating or storing large matrices, one must perform the operations in the correct order, which is defined by the brackets. This remark applies in particular to Algorithms~\ref{SVDCalc} and~\ref{BOMICLUSTERS2}.

\begin{algorithm}
	\caption{\textbf{($S_{\Lambda}$)-ALS}: Generalized restricted truncated singular value thresholding via alternating least squares\\ 
		\textbf{INPUT:}  $Z\in \mathbb{R}^{m\times n}$ decomposed as in~\eqref{decompZ}; thresholding parameters $\Lambda$, maximum rank $r$
	\\
	\textbf{OUTPUT}: $S_{\Lambda}(Z)$ represented in storable low-rank format as $(\mathfrak{U},\diag(\Sigma),\mathfrak{V})$ such that $S_{\Lambda}(Z)=\mathfrak{U}\diag(\Sigma)\mathfrak{V}^\top$}
		\label{SVDCalc}
	\begin{algorithmic}[1]
	    \STATE \textbf{Procedure} Projection(${E}^{(1)}$,${E}^{(0)}$,$\Theta$) 
        \RETURN $ {E}^{(1)}[({E}^{(1)})^\top \Theta] -{E}^{(0)}[({E}^{(0)})^\top \Theta]$
        \STATE \textbf{end Procedure}
	    \STATE $\mathfrak{U} \leftarrow \Sigma\leftarrow \mathfrak{V}\leftarrow NULL$
	    \STATE $\widetilde{X}^{(0)} \leftarrow 0, \widetilde{Y}^{(0)} \leftarrow 0$
	    \FOR{k in (1..3)}
		\FOR{l in (1..3)}
		\STATE $U\leftarrow$  random orthogonal $m\times r$ matrix
		\STATE $D\leftarrow \Id_{r\times r}$
		\STATE $A \leftarrow UD$
		\WHILE{$AB^\top$ not converged}
		\STATE $\Theta \leftarrow UD(D^2+\lambda_{k,l} I)^{-1}$
		\STATE $\widetilde{\Theta} \leftarrow$ Projection($\widetilde{X}^{(k)},\widetilde{X}^{(k-1)},\Theta$) \\
		\STATE $\widetilde{\widetilde{\Theta}} = \widetilde{\Theta}^\top Z_{Sp}  + [(\widetilde{\Theta}^\top U_{LR})D_{LR}]{V_{LR}}^\top$
		\STATE $\tilde{B}  \leftarrow$ Projection($\widetilde{Y}^{(l)},\widetilde{Y}^{(l-1)},\widetilde{\widetilde{\Theta}^\top}$)
		\STATE Compute the SVD of $\tilde{B}D=\tilde{V}\tilde{D}^2R^\top$ and attribute $V\leftarrow \tilde{V}$, $D\leftarrow \tilde{D}$ and $B\leftarrow VD$\\
		\STATE $\Theta \leftarrow V D(D^2+\lambda_{k,l} I)^{-1}$
		\STATE $\widetilde{\Theta} \leftarrow$ Projection($\widetilde{Y}^{(l)},\widetilde{Y}^{(l-1)},\Theta$)

		\STATE $\widetilde{\widetilde{\Theta}} =  Z_{Sp}\widetilde{\Theta}  + U_{LR}[D_{LR}({V_{LR}}^\top \widetilde{\Theta})]$
		\STATE $\tilde{A}  \leftarrow$ Projection($\widetilde{X}^{(k)},\widetilde{X}^{(k-1)},\widetilde{\widetilde{\Theta}}$)\\
		\STATE Compute the SVD of $\tilde{A}D=\tilde{U}\tilde{D}^2\tilde{R}^\top$ and attribute $U\leftarrow \tilde{U}$, $D\leftarrow \tilde{D}$ and $A\leftarrow UD$\\
		\ENDWHILE
		
		\STATE $\widetilde{\Theta} \leftarrow$ Projection($\widetilde{Y}^{(l)},\widetilde{Y}^{(l-1)},V$)
		\STATE $\widetilde{\widetilde{\Theta}} =  Z_{Sp}\widetilde{\Theta}  + U_{LR}[D_{LR}({V_{LR}}^\top \widetilde{\Theta})]$
		\STATE $M  \leftarrow$ Projection($\widetilde{X}^{(k)},\widetilde{X}^{(k-1)},\widetilde{\widetilde{\Theta}}$)\\
		\STATE Compute the SVD of $M$: $M=\bar{U}D_{\sigma}\bar{R}^\top$.

		\STATE $\mathfrak{U} \leftarrow \text{CONCAT}(\mathfrak{U},\bar{U})$ 
		\STATE $\Sigma \leftarrow \text{CONCAT}(\Sigma,\big((\sigma_1-\lambda_{k,l})_{+},(\sigma_2-\lambda_{k,l})_{+},\ldots,(\sigma_r-\lambda_{k,l})_{+}\big))$ 
		\STATE $\mathfrak{V} \leftarrow \text{CONCAT}(\mathfrak{V},V\bar{R})$
		\ENDFOR
		\ENDFOR
			\RETURN $\mathfrak{U};\diag(\Sigma);\mathfrak{V}$

		\end{algorithmic}
\end{algorithm}

\begin{algorithm}
	\caption{\textbf{Scalable BOMIC+} \\ \textbf{INPUT} $R_{\Omega}$,  regularizing parameters $\Lambda\in  \mathbb{R}^{3\times 3}$, maximum rank $r$
		\\
	\textbf{OUTPUT}: Recovered matrix $Z$
	}
		\label{BOMICLUSTERS2}
	\begin{algorithmic}[1] 
		\STATE $Z_s \leftarrow R_{\Omega}$ 
		\STATE $Z_{LR} \leftarrow \{U_{LR} \leftarrow 0, D_{LR} \leftarrow 0,V_{LR}\leftarrow 0 \}$
		\WHILE{Not converged}
		\STATE $Z_s \leftarrow R_{\Omega} - P_{\Omega}(Z_{LR})$ 
		\STATE $Z_{LR}$ $\leftarrow$ ($S_{\lambda}$)-ALS($Z=\{Z_s,Z_{LR}\}$)
		\ENDWHILE
		\RETURN $Z\leftarrow Z_{LR}$
	\end{algorithmic}
\end{algorithm}

\textbf{Complexity analysis:}
Treating $K$ and $L$ as constants, our algorithm can achieve $O(|\Omega|r+(m+n)r^2)$ complexity all three important cases OMIC+, BOMIC and BOMIC+. We note that, as explained above, although in principle the computational burden is multiplied by $KL$ (which can be as much as 9 in the case of BOMIC+), in practice, further refinements can help the algorithm perform at the same speed as SoftImpute, as is hinted at in Subsection~\ref{subomicc}. We refer the reader to Appendix~\ref{complexity} for more details.

%%%%%%%%%%%%%%%%%%%%%%%%%%%%%%%%%%%%%%%
\section{Generalization bounds}
%%%%%%%%%%%%%%%%%%%%%%%%%%%%%%%%%%%%%%%

In this section, we present some generalization bounds for our model in the three relevant cases BOMIC, OMIC+ and BOMIC+.

\subsection{Distribution-free bounds}
We begin with a distribution-free approach, meaning that we do not assume anything about the sampling distribution. In particular, the bounds in this subsection behave worse than the corresponding bounds under uniform sampling assumptions such as the ones in the celebrated works~\cite{CandesRecht,genius}.

We first focus on generalization bounds which apply when the side information corresponds to user/item biases and/or community side information (OMIC+).

We will write $C_{k,l}$ for an upper bound on the entries of the ground truth component $X^{(k)}R^{(k,l)}(Y^{(l)})^\top$  where $R^{(k,l)}=(X^{(k)})^\top R Y^{(l)}$.  Similarly, we will write $r_{k,l}$ for the rank of $R^{(k,l)}$.

Thus, e.g., if $C_{1,2}$ is large, one concludes that in the ground truth matrix, the specific affinities of items in $\{1,2,\ldots n\}$ to whole communities of users is a significant factor in determining the value of each entry. If $C_{2,2}$ is large, the individual affinities between users and items, independently of their respective communities, is a strong factor. 

The following follows from the inequality $\sqrt{x}+\sqrt{y}+\sqrt{z}+\sqrt{t}\leq 2\sqrt{x+y+z+t}$ and Theorem~\ref{CommunityPrecise2} in the Appendix. 

\begin{corollary}
	\label{corspecific}
	Consider the community setting above and assume the user (resp. item) communities are of sizes within a ratio of $O(1)$, as well as that (wlog) $b\geq a,$  and $m\geq n$. For any $\epsilon>0$, the required number of entries to recover the ground truth matrix within $\epsilon$ expected loss (with probability $\geq 1-\delta$) is  \begin{align*}
O\Bigg( (1/\epsilon^2)\bigg(C_{1,1}^2b\sqrt{ar_{1,1}}+C_{1,2}^2 n\sqrt{ar_{1,2}}+\\  C_{2,1}^2m \sqrt{br_{2,1}}+C_{2,2}^2 m\sqrt{r_{2,2}n}  +\log(1/\delta)    \bigg)  \Bigg).
	\end{align*}
	Alternatively, using the nuclear norms of the component matrices, we have the following sample complexity bound:
	\begin{align*}
O\Bigg( (1/\epsilon^2)\bigg(\sqrt{b}t_{1,1}+t_{1,2} \sqrt{n}+\\  t_{2,1} \sqrt{m}+t_{2,2} \sqrt{m}  +\log(1/\delta)    \bigg)  \Bigg).
	\end{align*}
\end{corollary}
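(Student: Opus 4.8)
The plan is to obtain the statement as a specialization of the general per-component bound of Theorem~\ref{CommunityPrecise2} to the community geometry, followed by a merging of the four block contributions via the stated elementary inequality. First I would record the effective dimensions of each block. Writing $a$ and $b$ for the numbers of user and item communities (so that $X^{(1)}\in\R^{m\times a}$, $Y^{(1)}\in\R^{n\times b}$ are the normalized community indicators and $X^{(2)}\in\R^{m\times(m-a)}$, $Y^{(2)}\in\R^{n\times(n-b)}$ their orthogonal complements), the component matrices $R^{(k,l)}=(X^{(k)})^\top R\,Y^{(l)}$ have sizes $a\times b$, $a\times(n-b)$, $(m-a)\times b$ and $(m-a)\times(n-b)$ for $(k,l)=(1,1),(1,2),(2,1),(2,2)$ respectively. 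The $O(1)$ ratio on community sizes lets me treat all communities as essentially equal-sized, and the regime in which the community counts $a,b$ are small relative to $m,n$ gives $m-a=\Theta(m)$ and $n-b=\Theta(n)$; together with the normalization $b\ge a$ and $m\ge n$ this fixes, for each block, which of the two dimensions is the larger.

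Next I would read off the per-block complexity terms from Theorem~\ref{CommunityPrecise2}. In the rank/entry-bound form, the contribution of block $(k,l)$ to the generalization gap is of order $\sqrt{B_{k,l}/|\Omega|}$ with $B_{k,l}=C_{k,l}^2\,\max(d_1^{(k)},d_2^{(l)})\sqrt{r_{k,l}\,\min(d_1^{(k)},d_2^{(l)})}$; substituting the four effective sizes and resolving the $\max$ and $\min$ using $b\ge a$ and $m\ge n$ produces exactly the displayed monomials $C_{1,1}^2b\sqrt{a\,r_{1,1}}$, $C_{1,2}^2n\sqrt{a\,r_{1,2}}$, $C_{2,1}^2m\sqrt{b\,r_{2,1}}$ and $C_{2,2}^2m\sqrt{n\,r_{2,2}}$. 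For the nuclear-norm version I would instead use the complexity measure $B'_{k,l}=t_{k,l}\sqrt{\max(d_1^{(k)},d_2^{(l)})}$ supplied by the same theorem, where $t_{k,l}=\|R^{(k,l)}\|_*$; the same resolution of $\max$ yields $\sqrt{b}\,t_{1,1}$, $\sqrt{n}\,t_{1,2}$, $\sqrt{m}\,t_{2,1}$ and $\sqrt{m}\,t_{2,2}$. The internal consistency of the two forms comes from $t_{k,l}\le\sqrt{r_{k,l}}\,\|R^{(k,l)}\|_{\Fr}$ together with $\|R^{(k,l)}\|_{\Fr}\le C_{k,l}\sqrt{mn}$, which follows from the entry bound $C_{k,l}$.

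It then remains to combine the blocks and invert for the sample size. Theorem~\ref{CommunityPrecise2} bounds the gap by $\frac{c}{\sqrt{|\Omega|}}\sum_{k,l}\sqrt{B_{k,l}}+c'\sqrt{\log(1/\delta)/|\Omega|}$, so to force the gap below $\epsilon$ it suffices that $|\Omega|\ge\epsilon^{-2}\big(c\sum_{k,l}\sqrt{B_{k,l}}+c'\sqrt{\log(1/\delta)}\big)^2$. Expanding the square and applying the stated inequality in its squared form, $\big(\sqrt{x}+\sqrt{y}+\sqrt{z}+\sqrt{t}\big)^2\le 4(x+y+z+t)$, together with $(u+v)^2\le 2u^2+2v^2$ for the confidence term, collapses the squared sum of four square roots into $O\big(\sum_{k,l}B_{k,l}+\log(1/\delta)\big)$, giving the claimed $O\big(\epsilon^{-2}(\sum_{k,l}B_{k,l}+\log(1/\delta))\big)$ sample complexity in each of the two forms.

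The genuinely hard work lives in Theorem~\ref{CommunityPrecise2}, which I am taking as given: establishing the distribution-free per-block bound with the precise $\max\sqrt{r\,\min}$ (resp. $t\sqrt{\max}$) scaling, and in particular controlling the complexity of the \emph{sum} class $\sum_{k,l}X^{(k)}M^{(k,l)}(Y^{(l)})^\top$ when a single observed entry couples all four blocks — this requires subadditivity of the Rademacher complexity across blocks and an argument that the structured community-indicator auxiliary matrices do not inflate each block's complexity beyond its effective dimension. Within the corollary itself the only points needing care are the bookkeeping of the effective dimensions (especially that $m-a=\Theta(m)$ and $n-b=\Theta(n)$, so that the complement blocks behave like full $m$- and $n$-dimensional factors) and checking that the normalization assumptions $b\ge a$ and $m\ge n$ are exactly what is needed to resolve every $\max$ and $\min$ into the displayed monomials.
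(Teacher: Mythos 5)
Your proposal is correct and follows essentially the same route as the paper: the paper derives this corollary in one line by citing Theorem~\ref{CommunityPrecise2} and the inequality $\sqrt{x}+\sqrt{y}+\sqrt{z}+\sqrt{t}\leq 2\sqrt{x+y+z+t}$, which is exactly your step of squaring the per-block terms $C_{k,l}^2\max(d_1^{(k)},d_2^{(l)})\sqrt{r_{k,l}\min(d_1^{(k)},d_2^{(l)})}$, merging them, and inverting for $N$. Your extra bookkeeping (e.g.\ $m-a=\Theta(m)$) is harmless but not needed, since the theorem's bound is already stated in terms of $m$ and $n$ for the complement blocks.
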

Note that the above bounds improve with the quality of the side information: the better the ground truth matrix can be approximated by community behaviour, the closer the bound behaves to the bound one would obtain for an $a\times b$ matrix with each user and item being assimilated to its community. Furthermore, the result applies in particular to the BOMIC model from Section~\ref{bias} by grouping all users (resp. items) into a single community. This yields a bound of the order $(1/\epsilon^2)(C_{1,1}^2+nC_{1,2}^2+mC_{2,1}+(\sqrt{n}+\sqrt{m})\sqrt{mn r}C_{2,2}^2+\log(1/\delta))$ where (e.g.) $C_{2,2}$ is a bound on the bias-free part of the ground truth matrix. Similar bounds hold for the BOMIC+ model~\ref{bomic+} (cf. equation~\eqref{bomic+bound} in the Appendix).

\subsection{Bounds assuming uniform marginals}

In this subsection, we present some bounds under the assumption that the sampling distribution has uniform marginals (i.e., whenever an entry is sampled, the probability of choosing an entry in any given row (resp. column) is $1/m$ (resp. $1/n$)). In this case, only the terms containing side information for both users and items present an improvement. 

\begin{corollary}
	\label{corspecific2}
	Consider the community setting above (OMIC+) and assume: (1) each user (resp. item) community is of equal size; (2) that (wlog) $b\geq a,$  and $m\geq n$ and (3) the marginals of the sampling distribution are uniform. For any $\epsilon>0$, (the required number of entries to recover the ground truth matrix within $\epsilon$ expected loss  is  \begin{align*}
O\Bigg( (1/\epsilon^2)\bigg(C_{1,1}^2br_{1,1}\log(b)+C_{1,2}^2 n r_{1,2}\log(n)+\\  C_{2,1}^2m r_{2,1}\log(m)+C_{2,2}^2 m r_{2,2}\log(m)\bigg)  \Bigg).
	\end{align*}
\end{corollary}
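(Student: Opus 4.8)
The plan is to reduce the statement to a per-component analysis exploiting the orthogonal decomposition of the model, and then to invoke a uniform-marginal matrix-completion bound for each of the four resulting pieces. By Proposition~\ref{prop:unique}, any admissible predictor decomposes uniquely as $F=\sum_{k,l\leq 2}X^{(k)}M^{(k,l)}(Y^{(l)})^\top$ into the mutually Frobenius-orthogonal components living in the spaces $\mathcal{S}_{k,l}$. Since Rademacher complexity is subadditive over Minkowski sums of hypothesis classes, it suffices to control the complexity of each component class $\mathcal{F}_{k,l}=\{X^{(k)}M(Y^{(l)})^\top:\|M\|_*\leq t_{k,l},\ \text{entries bounded by }C_{k,l}\}$ separately and then add the four contributions. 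The radii $t_{k,l}$ are fixed so that the ground-truth component is feasible, which is where the data $C_{k,l},r_{k,l}$ enter via $t_{k,l}\leq C_{k,l}\sqrt{r_{k,l}\cdot(\text{reduced size})}$; in the noise-free realizable regime the optimal population loss is zero, so the expected loss of the estimator equals its excess risk and is bounded by the sum of the four complexity terms.

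For each fixed pair $(k,l)$, the isometry $M\mapsto X^{(k)}M(Y^{(l)})^\top$ (which preserves singular values because the columns of $X^{(k)},Y^{(l)}$ are orthonormal) identifies $\mathcal{F}_{k,l}$ with an ordinary trace-norm ball of matrices of size $d_1^{(k)}\times d_2^{(l)}$. I would then apply the uniform-marginal master bound (the counterpart of Theorem~\ref{CommunityPrecise2} used for Corollary~\ref{corspecific}), which for a sampling distribution with uniform marginals yields an excess-risk term of order $C_{k,l}\sqrt{r_{k,l}\,(d_1^{(k)}+d_2^{(l)})\,\log(d_1^{(k)}+d_2^{(l)})/|\Omega|}$. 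With equal communities the four reduced sizes are $(a,b)$, $(a,n-b)$, $(m-a,b)$ and $(m-a,n-b)$, so under the orderings $b\geq a$ and $m\geq n$ the governing sums $d_1^{(k)}+d_2^{(l)}$ are $O(b)$, $O(n)$, $O(m)$ and $O(m)$ respectively; setting the total excess risk to $\epsilon$ and solving for $|\Omega|$ then produces exactly the four summands of the claimed bound, with the $1/\epsilon^2$ factor coming from inverting the square-root rate.

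The step requiring genuine care, and where the strict equal-size hypothesis (1) is used, is the verification that the per-component bound may legitimately be applied with a uniform-marginal distribution on the reduced coordinates. Uniformity of the marginals on the full $m\times n$ grid does not transfer automatically to the coordinate system of $X^{(k)},Y^{(l)}$, because projecting onto $\spn(X^{(k)})$ aggregates rows within communities; only when every user community and every item community has the same cardinality does the aggregated sampling operator again carry uniform marginals on the $a\times b$ community grid, and symmetrically leave the complementary within-community subspaces $X^{(2)},Y^{(2)}$ in a form the uniform-marginal bound can digest. This is precisely why hypothesis (1) here is the exact equal-size condition, in contrast with the weaker bounded-ratio assumption that sufficed for the distribution-free Corollary~\ref{corspecific}. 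I expect this marginal-transfer verification to be the main obstacle; once it is in place the remainder is the routine additive combination and inversion of the four risk terms, with no need for the square-root-combining inequality invoked in the distribution-free case.
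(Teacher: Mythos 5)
Your proposal is correct and follows essentially the same route as the paper: the paper proves Theorem~\ref{thm:omic+bound} by splitting the hypothesis class into the four orthogonal components via subadditivity of Rademacher complexity, bounding each by the uniform-marginal trace-norm bound of Proposition~\ref{oneterm} (trace-norm duality plus matrix Bernstein in expectation) after enlarging $\mathcal{F}_{k,l}$ to a class built from non-normalized indicator matrices (with $\widetilde{X}^{(2)},\widetilde{Y}^{(2)}$ taken to be identities for the complement components), and then converting nuclear norms to ranks exactly as you describe. You correctly identify the role of the equal-community-size hypothesis in transferring uniform marginals to the aggregated community grid; the only cosmetic difference is that the paper handles the complementary subspaces by containment in the full $m\times n$ (resp.\ $a\times n$, $m\times b$) trace-norm ball rather than by a literal reduction to an $(m-a)\times(n-b)$ grid.
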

\begin{proof}
Follows from Theorem~\ref{thm:omic+bound} in Appendix~\ref{unimarg}.

\end{proof}
In the supplementary (Section~\ref{sec:experiments_uniform}), we experimentally validate the bound on some synthetic data, and observe a good match between the bound and the de facto sample complexities we encounter in practice.

\textbf{Remark:} An interesting observation from the bounds is that the knowledge of the explicit side information vectors (i.e. communities or biases) helps achieve a faster sample complexity (as opposed to the knowledge of the  equivalent low-rank constraint) whenever either:
\begin{itemize}
    \item the first order term $R^{(1,1)}$ is significant; or
    \item the distribution is arbitrary (doesn't have uniform marginals).
\end{itemize}

This is explained in detail in Subsection~\ref{absorb} of the appendix. 
As our synthetic data experiments (cf. Subection~\ref{synthhh}) demonstrate, we can still gain something in practice for moderate-size matrices, even when it comes to cross-terms in the uniform sampling case, but such an improvement cannot be captured at the level of asymptotic results. 
\textbf{Remark:} Note that although the proofs are relatively straightforward, even the generalisation bounds corresponding to a single term do \textit{not} follow from standard results on inductive matrix completion. This is also explained in detail in Section~\ref{absorb} in the appendix.

\normalsize

%%%%%%%%%%%%%%%%%%%%%%%%%%%%%%%%%%%%%%%
\section{Experiments}
\label{experiments}
%%%%%%%%%%%%%%%%%%%%%%%%%%%%%%%%%%%%%%%

To compare OMIC with the baselines we conducted two experiment strands: synthetic data simulations and real-world applications. 

In the first case, we propose a matrix generation procedure that allows us to evaluate how the performance of BOMIC varies in different ground truth regimes: we generated sparsely observed matrices composed of the sum of user and item biases (generic behaviour) and a non-inductive term (specific behaviour). The proportion of each term, the number of observed entries and the distribution of known entries were varied.

In the latter case, we validated our model on five real recommender systems datasets: the Amazon recommender system dataset, the Douban movie database, the Goodreads spoiler dataset, and two versions of MovieLens. We show that our methods exhibit state-of-the-art performance in all cases.

All the hyperparameter tuning was done through cross-validation. The range of the parameters was adjusted according to each model's needs.

\subsection{Baselines}

Our model is a fundamental tool that relies only on the incomplete matrix and some high-level side information and has the benefit of interpretability. We compare our model with other similarly fundamental models such as IMC~\cite{IMC,mostrelated,IMC1,PIMC,tassisa18} and Softimpute~\cite{softimpute}, with the understanding that the basic ideas could be refined and incorporated into more sophisticated recommender systems.

\begin{itemize}
	
	\item \textbf{SoftImpute (SI)} is a matrix completion method that uses nuclear-norm regularization. This is a standard baseline for non-inductive matrix completion~\cite{softimpute}.
	\item \textbf{Biased SoftImpute (B-SI)} is a popular approach that consists in first, training user and item biases, and then training the SoftImpute model on the residuals.
	\item \textbf{Inductive Matrix Completion with Noisy Features (IMCNF) } In this model~\cite{mostrelated} we train a sum of an IMC term and a residual SoftImpute term jointly (via alternating optimization). This model requires side information, and was therefore only considered in real data experiments.  
\end{itemize}

\subsection{Metrics} \label{metrics}Let $R \in \R^{m\times n}$ denote the ground truth matrix, $\hat{R}^{(k)}$ the  matrix predicted by method $k$ and let $\bar{\Omega}$ be the test set (a subset of entries). Let $\bar{R}^{(k)}$, $\hat B_U^{(k)}$ and $ \hat B_I^{(k)}$ be respectively the zero-order term ($X^{(1)}M^{(1,1)}Y^{(1)}$ in BOMIC), the vector of user biases and the vector of items biases estimated by method $k$. In the SoftImpute case  we need an extra post-processing step to estimate the biases: $\bar{R}^{(SI)} = \sum_{ij}\hat{R}^{SI}_{ij}/mn$, $\hat B_{U_i}^{(SI)}=\sum_j  (\hat{R}^{(SI)}_{ij} - \bar R^{(SI)})/n $ and $\hat B_{I_j}^{(SI)}=\sum_i  (\hat{R}^{(SI)}_{ij} - \bar R^{(SI)})/m$. To assess the methods we used the metrics presented in the list bellow:

\begin{itemize}
	\item \textbf{[RMSE] Root-mean-square error}: $\RMSE = \sqrt{ \sum_{i,j \in  \bar{\Omega}} (\tilde{R}_{ij} - {R}_{ij})^2 /|\bar{\Omega}|}$ 
	\item \textbf{[MBD] Matrix bias deviation}: $\MBD =\left|\bar{R} -  \bar{R}^{(k)}\right|$
	\item \textbf{[UBD] (resp. IBD): User (resp. item) bias deviation}: $\UBD = \| {B_U} - \hat B_U^{(k)}\|_{\Fr}$  (resp. $\IBD = \| {B_I}- \hat B_I^{(k)}\|_{\Fr}$ )
	\item \textbf{[SPC] Spearman correlation}: 
	\small $\SPC=\rho_S \left(R_{\bar{\Omega}}, \hat{R}_{\bar{\Omega}}^{(k)}\right)$\normalsize, the Spearman correlation between two vectors composed of the entries of $R$ and $\hat{R}^{(k)}$ on the test set.
\end{itemize}

Since calculating the metrics MBD,UBD and IBD requires knowledge of all the entries of $T$, we only calculated them for the synthetic experiments. 
Note that lower values of RMSE, MBD and UBD and higher values of the Spearman correlation correspond to better performance.

\subsection{Synthetic data simulations}
\label{synthhh}
For synthetic data simulations, we evaluated the ability of our model BOMIC to detect and adapt to different regimes in terms of the importance of user and item biases. We constructed two fixed matrices $G$ and $S$, with the former made up purely of user/item biases, and the latter free of any implicit or explicit user or item biases. Then, we considered combinations $R(\alpha)=\alpha G+(1-\alpha)S$, observed either uniformly (which we describe with $\gamma=1$) or in a biased way designed to fool a naive bias method into miscalculating the user and item biases ($\gamma=4$). The proportion of observed entries $p_{\Omega}$ was also varied.

\subsubsection{Results} For each combination of $\alpha$, $\gamma$ and $p_{\Omega}$ we generated 50 different samples of $R(\alpha)$. Given a sampled matrix, we recovered the unknown entries through  \textbf{BOMIC}, \textbf{B-SI} and \textbf{SI}. Figure~\ref{fig:ressynt}  summarizes the results of the performed simulations. We observe that our method consistently outperforms B-SI and SI, in terms of RMSE, UBD and IBD, and performs as well as SI in the MBD case. In addition, OMIC's ability to recover the correct user and item biases (UBD and IBD in Figure~\ref{fig:ressynt}) is particularly marked in the case of non uniformly sampled entries, as might be expected, in line with Corollary~\ref{corspecific}.

\begin{figure*}
	\centering
	\includegraphics[width=.98\linewidth]{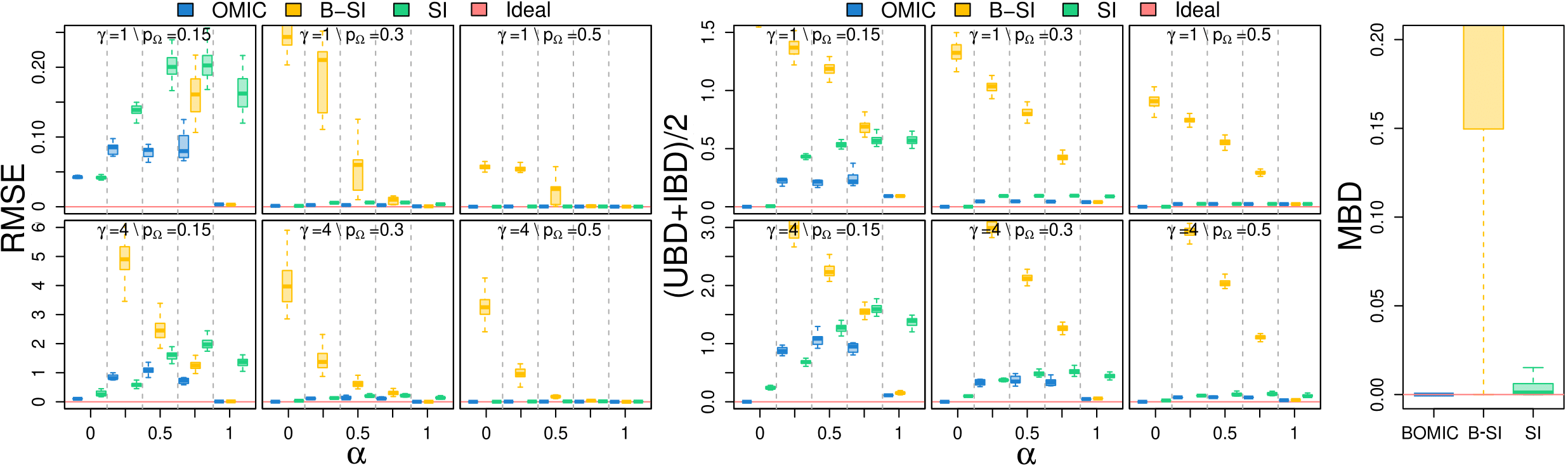}
	\caption{Summary of synthetic data simulations results. The first graph shows the relationship between all combinations of  the parameters ($\alpha, \gamma, p_\Omega$) and the RMSE. The second one shows how ($\alpha, \gamma, p_\Omega$) influences the correct recovery of user and item biases ((UBD+IBD)/2). Each box plot in the first two graphs is obtained from 50 simulations. The third graph displays the distribution of the MBD over all of the simulations.}
	\label{fig:ressynt}
\end{figure*}

\subsection{OMIC as a recommender system}

In this subsection, we present our results on real data from the field of recommender systems. We begin by introducing the datasets, then provide our results, and conclude with a practical exploration of the added interpretability benefits of our method. 
\subsubsection{Datasets}
In this paper, we worked with the following datasets:

\begin{itemize}
    \item \textbf{Amazon} ($R \in \mathbb{R}^{164383 \times 101364}$): Amazon is a multinational technology company which mainly focuses on e-commerce. We used the "Electronics" dataset, which we obtained through~\cite{he2016ups}. Users are Amazon's clients and items are electronic products (e.g. smartphones). The rating range is from 1 to 5 and the entry $(i,j)$ refers to the rating given by client $i$ to product $j$. Since no systematic side information was provided, we only investigated the performance of BOMIC and SoftImpute for this dataset.
	\item \textbf{Douban} ($R \in \mathbb{R}^{4988 \times 4903}$): Douban is a social network where users can produce content related to movies, music, and events. The ratings matrix was obtained through~\cite{DVN/JGH1HA_2019}. Douban users are members of the social network and Douban items are a subset of popular movies. The rating range is $\{1,2,\ldots,5\}$ and the entry $(i,j)$ represents the rating given by user $i$ to movie $j$. Feature vectors were collected by the authors and can be divided into two distinct parts: general features (e.g year of production, genres and movie duration) and the embedding of the description of the movie given by the pre-trained neural network Bert~\cite{devlin2018bert}.
	\item \textbf{Goodreads spoiler dataset (GRS)}  ($R \in \mathbb{R}^{ 4199 \times 3278}$): This dataset was released by~\cite{wan2019fine} and it is available online. Goodreads is a social cataloging website that allows individuals to freely search its database of books, annotations, and reviews. In this case, an entry $(i,j)$ represents the rating of the user $i$ for the book $j$ on a scale from $0$ to $5$. For each user-book pair, in addition to the rating score, the review text is also available. Each sentence of the review was annotated with respect to whether or not spoilers were present. We generated 89 features such as the length of the review and which percentage of the text contains spoilers. 
		\item \textbf{MovieLens}: We consider the MovieLens~1M ($R \in \mathbb{R}^{ 6040 \times 3706}$) and MovieLens~20M ($R~\in~\mathbb{R}^{138493 \times 27032}$) datasets, which are broadly used and stable benchmark datasets. MovieLens is a non-commercial website for movie recommendations. 
In MovieLens 1M, we chose movies' genres (resp. age-gender combinations) as item (resp. user) communities.
\end{itemize}

\textbf{Train-test setup:}
For each dataset, we split the set of observed entries (uniformly at random) into a training set (85 \%), a validation set (10\%) and a test set (5\%).

\subsubsection{Results}
Table~\ref{resultstable} summarizes the results of the real-world data experiments. We evaluated the performance of BOMIC, BOMIC+, SI and IMCNF on all datasets above. For BOMIC+, instead of using the side information directly, we performed unsupervised clustering of the corresponding features to translate them into community information, which we then used as the $X,Y$ in the BOMIC+ algorithm in Section~\ref{bomic+}.
Observe that BOMIC+ has the lowest RMSE on all datasets and largest SPC in two datasets, whilst BOMIC has the best SPC on the MovieLens dataset. It is important to highlight that the standard BOMIC also beat the baselines.  One interesting aspect of using BOMIC+ is that the unsupervised clustering step reduces the dimensionality of the side information, which can have a positive regularizing effect.

\subsubsection{Illustration of OMIC's interpretability}

\begin{figure*}
	\centering
	\includegraphics[width=.97\linewidth]{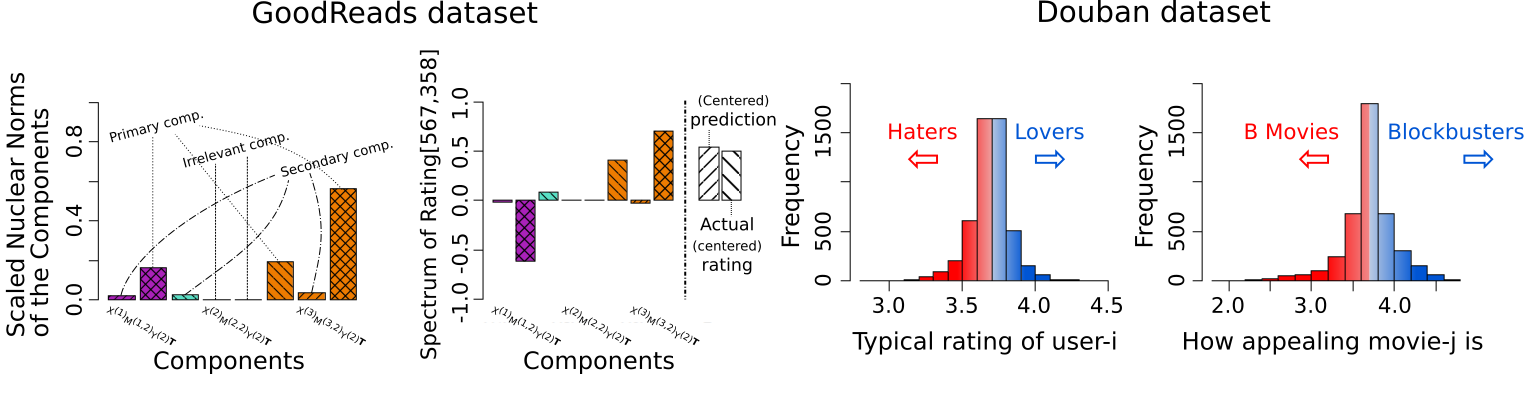}
	\caption{
		The first two graphs show the relative influence of the BOMIC+ components on the predictions of the whole matrix and one individual entry respectively.
		The last two graphs show the distribution of the users and item biases obtained by BOMIC on the Douban dataset. }
	\label{interpretability}
\end{figure*}

As explained above, one advantage of our method is that it can provide partial explanations for its predictions: each prediction is a sum of terms coming from each of the components of the model. Furthermore, this sum is uniquely determined, since the components of the model live in mutually orthogonal spaces and correspond to distinct intuitive phenomena. For example, if some auxiliary vectors are constructed from user community side information, the algorithm can disentangle the users' particular tastes from those of their respective community. In particular, OMIC can discover facts about community-wide behavior.

We illustrate those effects in Figures~\ref{interpretability} and \ref{fig:mlenspic}. On the left of Figure~\ref{interpretability}, 
we show the norms of each of the components of the recovered matrix: our recovered matrix takes the form  $R=\sum_{k,l\leq 3}X^{(k)}\hat{M}^{(k,l)}(Y^{(l)})^\top$ where the  $\hat{M}^{(k,l)}$ are obtained as the solution to our optimization problem~\eqref{theopt}, and each component in $X^{(k)}\hat{M}^{(k,l)}(Y^{(l)})^\top$ in the sum corresponds to an interpretable concept. For instance, $X^{(2)}\hat{M}^{(2,1)}(Y^{(1)})^\top$ correspond to user community biases. The norms of each component can give us an idea of how important each component is globally. 
Thus we see that over the whole GoodReads dataset, the most important components (excluding the global bias) are: (1) the specific match between the user and the book, (2) user generosity and (3) the quality of each book. These results are intuitively natural and expected. 

The second picture presents an explanation for an individual prediction. In other words, we chose one entry of $R$ (say, $R_{i,j}$) and represented the corresponding entry of each of the above mentioned components: for instance, the orange bar to the right of the graph represents the entry $(X^{(3)}\hat{M}^{(3,3)}(Y^{(3)})^\top)_{i,j}$, which corresponds to the same rating. This number represents the part of the rating $(i,j)$ which is attributable to a specific preference of the user for the specific movie (discounting the parts of this preference which are shared by the other members of that user's community or other movies of the same genre).

Here, the book is not generally considered good by the users (cf. large negative component corresponding to the purple bar), but the individual is usually generous (first orange bar), and the specific book and user are a good match for each other (cf. large orange component corresponding to the rightmost bar). 

Note that what is interesting here is that our model was specifically trained in a way that treats each of the components as a separate entity, with its own cross-validated hyperparmeter, so that the decomposition along those components is and intertwined with the optimization process (rather than collected as a statistic after applying a standard matrix completion method).

In the last two graphs, we show the distribution of user biases and movie quality in the Douban dataset. The distribution is similar to a normal distribution (squished at the boundaries), allowing us to characterize the users (resp. movies) on a spectrum between haters and lovers (resp. B-movies and blockbusters).

Figure~\ref{fig:mlenspic} shows bar charts illustrating the affinities between user communities (gender-age combinations) and four movie genres in the MovieLens dataset. Note that these affinity scores are part of our model and can be directly read from the component $X^{(2)}M^{(2,2)}({Y^{(2)}})^\top$ in BOMIC+. We observe that BOMIC+ is able to detect noteworthy human behaviour. For instance, female users tend to prefer drama and romance while male users enjoy comedies and thrillers instead. Note also that the biases vary with users' ages: older male users like romance movies more than their younger counterparts.

\begin{table*}[]
	\caption{Performance comparison of our methods vs baselines on the real datasets}
	\label{resultstable}
	\centering
	\resizebox{\textwidth}{!}{
		\begin{tabular}{|c|c|c|c|c|c|c|c|c|c|}
			\hline
			\multirow{2}{*}{Dataset} & \multirow{2}{*}{$P_\Omega$} & \multicolumn{4}{c|}{RMSE}                                   & \multicolumn{4}{c|}{SPC}                                    \\ \cline{3-10} 
			&                     & BOMIC  & BOMIC+                           & SI     & IMVNF  & BOMIC  & BOMIC+                           & SI     & IMVNF  \\ \hline
			Amazon                   & 0.0001              & \textbf{1.0406} & - & 1.0625 & - & \textbf{0.4121} & - & 0.4110 & - \\ \hline
			Douban                   & 0.0195              & 0.7886 & \textbf{0.7510} & 0.8797 & 0.8034 & 0.6280 & \textbf{0.6457} & 0.5760 & 0.6017 \\ \hline
			GoodReads                & 0.0331              & 1.0736 & \textbf{1.0540} & 1.0991  & 1.0770 & 0.5113 & \textbf{0.5120} & 0.4857 & 0.5052 \\ \hline
			MovieLens1M                & 0.0446              & 0.8534 & \textbf{0.8455} & 0.8838  & 0.8559  & \textbf{0.6368} & 0.6336 & 0.6289 & 0.6321 \\ \hline
			MovieLens20M                  & 0.0051              & \textbf{0.7803} & - & 0.8025 & - & \textbf{0.6697} & - & 0.6521 & - \\ \hline
		\end{tabular}
	}
\end{table*}

\normalsize

\begin{figure*}
	\centering
	\includegraphics[width=.97\linewidth]{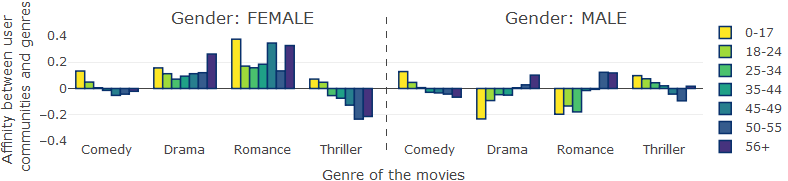}
	\caption{
		Affinity between user communities (grouped  by gender and age) and movie genres for MovieLens. These biases can be directly read from the component $X^{(2)}M^{(2,2)}({Y^{(2)}})^\top$ in BOMIC+.}
	\label{fig:mlenspic}
\end{figure*}

\section{Conclusion}
In this paper, we introduced OMIC, a matrix completion framework which relies on orthogonal auxiliary matrices to guide the model in privileged directions corresponding to prior knowledge. This simultaneously allows us to recover interpretable information about the predicted behavior. Our algorithm includes, as  particular cases, three models (BOMIC, OMIC+ and BOMIC+) which can train user and item biases (and/or a community component) jointly with a nuclear norm minimization strategy. Finally, synthetic and real-data  experiments demonstrate our algorithm's superior ability to adapt to and interpret different qualitative behaviors in the data.

\section*{Acknowledgements}

We thank the reviewers for their helpful comments. We warmly thank Luís Augusto Weber Mercado for substantial help in designing the figures of the present paper.  We also thank Rob Vandermeulen for helpful comments.  The authors acknowledge support by the German Research Foundation (DFG) award KL 2698/2-1 and by the Federal Ministry of Science and Education (BMBF) awards 031L0023A, 01IS18051A, and 031B0770E. The simulations were partly executed on the high performance cluster “Elwetritsch II” at the TU Kaiserslautern (TUK) which is part of the “Alliance for High Performance Computing Rheinland-Pfalz”(AHRP). We kindly acknowledge the support of the RHRK especially when using their DGX-2

\bibliographystyle{IEEEtran}

\bibliography{TNNLS-2020-P-15336.R2-bibliography}

\appendices

\renewcommand{\theequation}{S.\arabic{equation}}

\section{An alternative formulation of the optimization problem }
\label{maxmargin}
Instead of a nuclear-norm minimization algorithm, our optimization problem~\eqref{theopt} can be equivalently formulated as below.
\begin{theorem}
	\label{equivalentprob}
	The optimization problem~\eqref{theopt} is equivalent to the following optimization problem: 
	\begin{align}
	\minimize \quad &\mathcal{L}\left(R_{\Omega},\Lambda,\{U^{(k,l)},V^{(k,l)}\}_{k\leq K,l\leq L}\right)\>  \text{with} \nonumber  \\
	&\mathcal{L}(R_{\Omega},M,\Lambda)\nonumber \\&=\left\|P_{\Omega}(R)-P_{\Omega}\left(\sum_{k=1,l=1}^{K,L} U^{(k,l)}(V^{(k,l)})^\top\right)\right\|_{\Fr}^2\nonumber \\ &  \quad + \sum_{k,l} \lambda_{k,l} \left(\|U^{(k,l)}\|_{\Fr}^2+\|V^{(k,l)}\|_{\Fr}^2\right),  \nonumber\end{align}
	subject to ($\forall k,l$):
	\small
	\begin{align}
	& \spn((U^{(k,l)})_{\nbull,i} :    i\leq d^{(k)}_1)\subset \spn((X^{(k)})_{\nbull,i}: i\leq d^{(k)}_1); \nonumber  \\
	& \spn((V^{(k,l)})_{\nbull,i} :    i\leq d^{(l)}_2)\subset \spn((Y^{(l)})_{\nbull,i}: i\leq d^{(l)}_2). \quad\nonumber 
	\end{align}
\end{theorem}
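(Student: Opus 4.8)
The plan is to establish the equivalence by showing that the two problems have the same optimal value, and that optimal solutions of each can be converted into optimal solutions of the other. The key tool is the well-known variational characterization of the nuclear norm in terms of a factorized Frobenius penalty: for any matrix $A$,
\begin{align}
\|A\|_{*}=\min_{A=UV^\top}\frac{1}{2}\left(\|U\|_{\Fr}^2+\|V\|_{\Fr}^2\right),\nonumber
\end{align}
where the minimum is taken over all factorizations $A=UV^\top$, and is attained when $U=\tilde{U}\Sigma^{1/2}$, $V=\tilde{V}\Sigma^{1/2}$ for the SVD $A=\tilde{U}\Sigma\tilde{V}^\top$. (Up to the factor of $2$ absorbed into the $\lambda_{k,l}$, this accounts for the change from $\|M^{(k,l)}\|_*$ in~\eqref{theopt} to the sum of squared Frobenius norms here.)

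First I would set up the correspondence between the variables. Given a feasible point $\{M^{(k,l)}\}$ of~\eqref{theopt}, I would factor each $M^{(k,l)}=P^{(k,l)}(Q^{(k,l)})^\top$ optimally as above, and then define $U^{(k,l)}=X^{(k)}P^{(k,l)}$ and $V^{(k,l)}=Y^{(l)}Q^{(k,l)}$. The crucial observation is that, because the columns of $X^{(k)}$ (resp. $Y^{(l)}$) are \emph{orthonormal} by~\eqref{orthogonalityconditions}, left-multiplication by $X^{(k)}$ is an isometry onto its range, so $\|U^{(k,l)}\|_{\Fr}=\|P^{(k,l)}\|_{\Fr}$ and likewise for $V$. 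Moreover $U^{(k,l)}(V^{(k,l)})^\top=X^{(k)}M^{(k,l)}(Y^{(l)})^\top$, so the data-fitting terms agree and the span constraints are satisfied by construction. This direction shows the factorized problem's optimum is no larger than that of~\eqref{theopt}.

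For the reverse direction I would take a feasible $\{U^{(k,l)},V^{(k,l)}\}$ satisfying the span constraints and recover $M^{(k,l)}=(X^{(k)})^\top U^{(k,l)}\big((Y^{(l)})^\top V^{(l)}\big)^\top$; the span constraints guarantee $U^{(k,l)}=X^{(k)}(X^{(k)})^\top U^{(k,l)}$ and analogously for $V$, so that $X^{(k)}M^{(k,l)}(Y^{(l)})^\top=U^{(k,l)}(V^{(k,l)})^\top$ and the data terms again coincide. Using the isometry property together with the variational bound $\|M^{(k,l)}\|_*\le\frac{1}{2}(\|(X^{(k)})^\top U^{(k,l)}\|_{\Fr}^2+\|(Y^{(l)})^\top V^{(k,l)}\|_{\Fr}^2)\le\frac{1}{2}(\|U^{(k,l)}\|_{\Fr}^2+\|V^{(k,l)}\|_{\Fr}^2)$ shows the regularizer only decreases, so~\eqref{theopt}'s optimum is no larger. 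Combining both inequalities (after accounting for the factor of $2$ in the $\lambda$'s) yields equality of optimal values and the stated equivalence.

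The main obstacle I anticipate is bookkeeping around the span constraints rather than any deep idea: one must verify carefully that the constraint $\spn(U^{(k,l)}_{\nbull,i})\subset\spn(X^{(k)})$ is exactly what is needed to make $X^{(k)}(X^{(k)})^\top$ act as the identity on the relevant columns, and that projecting an arbitrary feasible $U$ onto $\spn(X^{(k)})$ never increases the Frobenius norm while leaving the product $U V^\top$ unchanged on $\Omega$. A secondary subtlety is tracking the constant factor of $2$ so that the hyperparameters $\lambda_{k,l}$ in the two formulations match up (or differ by a fixed, absorbable constant), which is why the statement is phrased as equivalence of problems rather than literal equality of objectives.
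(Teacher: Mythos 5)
Your proposal is correct and follows essentially the same route as the paper: the paper's proof also invokes the variational characterization $\|Z\|_{*}=\min_{UV^\top=Z}\tfrac{1}{2}(\|U\|_{\Fr}^2+\|V\|_{\Fr}^2)$ (its Lemma on the nuclear/Frobenius factorization) and then passes between $M^{(k,l)}$ and $U^{(k,l)}=X^{(k)}P^{(k,l)}$, $V^{(k,l)}=Y^{(l)}Q^{(k,l)}$ using the rotational invariance of the Frobenius norm under the orthonormal-column matrices $X^{(k)},Y^{(l)}$, exactly as you do. Your remarks on the span constraints and the absorbable factor of $2$ match the paper's handling of these points.
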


For the proof, we will need the following lemma (lemma 6 from~\cite{softimpute}, see also~\cite{fazel,Srebro}):
\begin{lemma}
	\label{nuclearfro}
	For any matrix $Z$, the following holds: 
	\begin{align}
	\|Z\|_{*}=\min_{U,V;\atop UV^\top=Z}\|U\|_{\Fr}\|V\|_{\Fr} \\ =\min_{U,V;\atop UV^\top=Z} \frac{1}{2}\left(\|U\|_{\Fr}^2+\|V\|_{\Fr}^2\right)
	\end{align}
\end{lemma}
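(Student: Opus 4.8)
The plan is to first show that the two minima on the right-hand side coincide, thereby reducing the statement to the single identity $\|Z\|_{*}=\min_{UV^\top=Z}\|U\|_{\Fr}\|V\|_{\Fr}$, and then to establish this identity by matching upper and lower bounds. For the reduction, note that for any factorization $UV^\top=Z$ the arithmetic--geometric mean inequality gives $\|U\|_{\Fr}\|V\|_{\Fr}\leq \tfrac12(\|U\|_{\Fr}^2+\|V\|_{\Fr}^2)$, so the product minimum is at most the averaged-square minimum. Conversely, the constraint $UV^\top=Z$ is invariant under the rescaling $(U,V)\mapsto(tU,t^{-1}V)$ for $t>0$, and $\min_{t>0}\tfrac12(t^2\|U\|_{\Fr}^2+t^{-2}\|V\|_{\Fr}^2)=\|U\|_{\Fr}\|V\|_{\Fr}$; hence the averaged-square minimum is also at most the product minimum, and the two are equal.

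For the upper bound in the remaining identity, I would exhibit an explicit optimal factorization built from the (thin) SVD $Z=P\Sigma Q^\top$, where $P^\top P=Q^\top Q=\Id$ and $\Sigma=\diag(\sigma_1,\dots,\sigma_r)$ with $\sigma_i>0$. Setting $U=P\Sigma^{1/2}$ and $V=Q\Sigma^{1/2}$ gives $UV^\top=P\Sigma Q^\top=Z$, while $\|U\|_{\Fr}^2=\Tr(\Sigma^{1/2}P^\top P\Sigma^{1/2})=\Tr(\Sigma)=\sum_i\sigma_i=\|Z\|_{*}$ and likewise $\|V\|_{\Fr}^2=\|Z\|_{*}$, so $\|U\|_{\Fr}\|V\|_{\Fr}=\|Z\|_{*}$. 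This already shows $\min_{UV^\top=Z}\|U\|_{\Fr}\|V\|_{\Fr}\leq\|Z\|_{*}$ and confirms that the minimum is attained.

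The main work is the matching lower bound: for an \emph{arbitrary} factorization $UV^\top=Z$ I must show $\|U\|_{\Fr}\|V\|_{\Fr}\geq\|Z\|_{*}$. The key identity is $\|Z\|_{*}=\Tr(\Sigma)=\Tr(P^\top Z Q)=\Tr\big((P^\top U)(V^\top Q)\big)=\langle P^\top U,\,Q^\top V\rangle_{\Fr}$, using $P^\top Z Q=\Sigma$. Applying Cauchy--Schwarz for the Frobenius inner product then gives $\|Z\|_{*}\leq\|P^\top U\|_{\Fr}\,\|Q^\top V\|_{\Fr}$, and since $PP^\top$ and $QQ^\top$ are orthogonal projections (so $PP^\top\preceq \Id$ and $QQ^\top\preceq\Id$) we have $\|P^\top U\|_{\Fr}\leq\|U\|_{\Fr}$ and $\|Q^\top V\|_{\Fr}\leq\|V\|_{\Fr}$. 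Combining these yields $\|Z\|_{*}\leq\|U\|_{\Fr}\|V\|_{\Fr}$, as required. The only delicate point is this lower bound, where one must keep in mind that $P$ and $Q$ have merely orthonormal \emph{columns} (not full orthogonal matrices), so that the norm-contraction step relies on $PP^\top$ being a projection rather than the identity; every other step is routine linear algebra.
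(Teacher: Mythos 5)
Your proof is correct. Note that the paper does not actually prove this lemma itself: it is imported verbatim as Lemma~6 of the SoftImpute paper \cite{softimpute} (with further pointers to \cite{fazel,Srebro}), so there is no in-paper argument to compare against. Your write-up is essentially the standard proof from those references: the AM--GM/rescaling argument identifying the two minima, the balanced SVD factorization $U=P\Sigma^{1/2}$, $V=Q\Sigma^{1/2}$ for the upper bound, and the trace/Cauchy--Schwarz estimate $\|Z\|_{*}=\Tr(P^\top UV^\top Q)\leq \|P^\top U\|_{\Fr}\|Q^\top V\|_{\Fr}\leq \|U\|_{\Fr}\|V\|_{\Fr}$ for the lower bound, with the correct observation that $PP^\top\preceq \Id$ (a projection, not the identity) is what justifies the contraction step. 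All steps check out.
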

\begin{proof}
	By Lemma~\ref{nuclearfro}, we have that the optimization problem~\eqref{theopt} is equivalent to the following: 
	\begin{align}
	&\minimize  \quad  \mathcal{L}\left(R_{\Omega},\Lambda,\{U^{(k,l)},V^{(k,l)}\}_{k\leq K,l\leq L}\right)\>  \text{with} \nonumber  \\
	&\mathcal{L}(R_{\Omega},M,\Lambda)\nonumber \\&=\left\|P_{\Omega}(R)-P_{\Omega}\left(\sum_{k=1,l=1}^{K,L}X^{(k)}M^{(k,l)}(Y^{(l)})^{\top} \right)\right\|_{\Fr}^2\nonumber \\ &  \quad + \sum_{k,l} \lambda_{k,l} \left(\|U^{(k,l)}\|_{\Fr}^2+\|V^{(k,l)}\|_{\Fr}^2\right),  \quad  \nonumber \\
	&\text{subject to}\quad  M^{(k,l)}=U^{(k,l)}(V^{(k,l)})^\top \quad \forall k,l.
	\end{align}
	Now, note that if for any $(k,l)$,  $M^{(k,l)}=U^{(k,l)}(V^{(k,l)})^\top $ and $Z^{(k,l)}=X^{(k)}M^{(k,l)}(Y^{(l)})^{\top} $, then $Z=(X^{(k)} U^{(k,l)})(Y^{(l)}V^{(k,l)})^\top =\tilde{U}^{(k,l)}(\tilde{V}^{(k,l)})^\top $, where $\tilde{U}^{(k,l)}=(X^{(k)} U^{(k,l)})$ and $\tilde{V}^{(k,l)}=(Y^{(l)}V^{(k,l)})$. Furthermore, for any matrix $A\in \mathbb{R}^{n_1\times n_2}$ and any orthogonal matrix $B\in \mathbb{R}^{n_0\times n_1}$ (resp. $C\in \mathbb{R}^{n_2\times n_3}$), 
	\begin{align}
	\label{Rotate}
	\|A\|_{\Fr}= \|BA\|_{\Fr}=\|AC\|_{\Fr}=\|BAC\|_{\Fr}.
	\end{align}
	
	Thus we have $\|\tilde{U}^{(k,l)}\|_{\Fr}=\|\tilde{X}^{\top}\tilde{U}^{(k,l)}\|_{\Fr}=\|U^{(k,l)}\|_{\Fr}$, where $\tilde{X}$ is a matrix whose first $d_1^{k}$ columns are identical to those of $X^{(k)}$, and whose columns form an orthonormal basis of $\mathbb{R}^{m}$. Similarly,  $\|\tilde{V}^{(k,l)}\|_{\Fr}=\|V^{(k,l)}\|_{\Fr}$. Furthermore, conversely, if we can write a matrix $Z$ as $\tilde{U}^{(k,l)}(\tilde{V}^{(k,l)})^\top $ for some $\tilde{U}^{(k,l)}$ and $\tilde{V}^{(k,l)}$ whose columns are in the span of the columns of $X^{(k)}$ and $Y^{(l)}$ respectively, then we can write $Z=(X^{(k)} U^{(k,l)})(Y^{(l)}V^{(k,l)})^\top =\tilde{U}^{(k,l)}(\tilde{V}^{(k,l)})^\top $ where \\$[U^{(k,l)}]_{i,j}=[(\tilde{X}^{(k)})^\top\tilde{U}^{(k,l)}]_{i,j} \quad \forall i,j \quad \text{s.t.} \quad i\leq d_1^{(k)}$.
	The theorem follows. 
\end{proof}
\section{Proof of uniqueness of decomposition}
\label{Uniqueness_proof}

\begin{proposition}
	\label{prop:unique}
	Let $\mathcal{F}_{k,l}=\{R:\exists M\in \mathbb{R}^{d^{1}_k\times d^2_{l}}: R=X^{(k)}M(Y^{(l)})^\top\}$ denote the $KL$ subspaces corresponding to each pair of auxiliary matrices $(X^{(k)},Y^{(l)})$.
	Those vector spaces $\mathcal{F}_{k,l}$ are orthogonal (w.r.t. the Frobenius inner product) and their direct sum is the whole of $\mathbb{R}^{m\times n}$: 
	\begin{align}
	\label{itsasum}
	\bigoplus \mathcal{F}_{k,l}=\mathbb{R}^{m\times n}.
	\end{align}
	In particular, for any $R\in \mathbb{R}^{m\times n}$, there exist a unique collection of matrices $R^{(k,l)}\in \mathcal{F}_{k,l}$ such that $R=\sum_{k,l} R^{(k,l)}$. 
	In fact, 
	\begin{align}
	\label{eq:Rsum}
	R^{(k,l)}=(X^{(k)})^\top R Y^{(l)}.
	\end{align}
\end{proposition}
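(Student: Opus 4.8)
The plan is to reduce everything to the two statements that the concatenated matrices $X=(X^{(1)},\ldots,X^{(K)})\in\mathbb{R}^{m\times m}$ and $Y=(Y^{(1)},\ldots,Y^{(L)})\in\mathbb{R}^{n\times n}$ are orthogonal matrices. This is exactly the content of the four conditions in~\eqref{orthogonalityconditions}: the first two lines say the columns are orthonormal, the last two that they span $\mathbb{R}^m$ and $\mathbb{R}^n$. Consequently $X^\top X=XX^\top=I_m$ and $Y^\top Y=YY^\top=I_n$. Reading off the block decomposition of $XX^\top$ (resp. $YY^\top$) then gives the two resolution-of-identity formulas
\begin{align}
\sum_{k=1}^K X^{(k)}(X^{(k)})^\top=I_m,\qquad \sum_{l=1}^L Y^{(l)}(Y^{(l)})^\top=I_n,\nonumber
\end{align}
while the block decomposition of $X^\top X$ (resp. $Y^\top Y$) gives the orthonormality relations $(X^{(k_1)})^\top X^{(k_2)}=\delta_{k_1,k_2}I$ and $(Y^{(l_1)})^\top Y^{(l_2)}=\delta_{l_1,l_2}I$. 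These four identities are the only inputs I will need.

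For orthogonality of the subspaces, I would take arbitrary $A=X^{(k_1)}M_1(Y^{(l_1)})^\top\in\mathcal{F}_{k_1,l_1}$ and $B=X^{(k_2)}M_2(Y^{(l_2)})^\top\in\mathcal{F}_{k_2,l_2}$ with $(k_1,l_1)\neq(k_2,l_2)$ and compute the Frobenius inner product $\langle A,B\rangle=\Tr(A^\top B)$. Pushing the relations $(X^{(k_1)})^\top X^{(k_2)}=\delta_{k_1,k_2}I$ and $(Y^{(l_2)})^\top Y^{(l_1)}=\delta_{l_1,l_2}I$ through the trace collapses it to $\delta_{k_1,k_2}\delta_{l_1,l_2}\Tr(M_1^\top M_2)=0$, since at least one index pair differs. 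Hence the $\mathcal{F}_{k,l}$ are mutually orthogonal, and in particular linearly independent, so any decomposition $R=\sum_{k,l}R^{(k,l)}$ with $R^{(k,l)}\in\mathcal{F}_{k,l}$ is unique once it exists.

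For existence and the explicit formula, I would set $M^{(k,l)}=(X^{(k)})^\top R Y^{(l)}$ and $R^{(k,l)}=X^{(k)}M^{(k,l)}(Y^{(l)})^\top\in\mathcal{F}_{k,l}$ (noting that the formula stated in the proposition records the \emph{coordinate} matrix $M^{(k,l)}$, the genuine element of $\mathcal{F}_{k,l}$ being $X^{(k)}M^{(k,l)}(Y^{(l)})^\top$). Summing and factoring out via the resolution-of-identity formulas gives
\begin{align}
\sum_{k,l}R^{(k,l)}=\Big(\sum_k X^{(k)}(X^{(k)})^\top\Big)R\Big(\sum_l Y^{(l)}(Y^{(l)})^\top\Big)=I_m\,R\,I_n=R,\nonumber
\end{align}
which establishes~\eqref{itsasum}. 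That this extraction is correctly inverted, i.e. that $(X^{(k)})^\top R Y^{(l)}$ recovers $M^{(k,l)}$ from $R=\sum_{k',l'}X^{(k')}M^{(k',l')}(Y^{(l')})^\top$, then follows by left-multiplying by $(X^{(k)})^\top$ and right-multiplying by $Y^{(l)}$ and applying the orthonormality relations, which kill every term except $(k',l')=(k,l)$.

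As a sanity check I would also confirm the direct-sum claim by dimension counting: the map $M\mapsto X^{(k)}M(Y^{(l)})^\top$ is injective (left-multiply by $(X^{(k)})^\top$, right-multiply by $Y^{(l)}$), so $\dim\mathcal{F}_{k,l}=d^{(k)}_1 d^{(l)}_2$, and $\sum_{k,l}d^{(k)}_1 d^{(l)}_2=(\sum_k d^{(k)}_1)(\sum_l d^{(l)}_2)=mn$ because the column counts total $m$ and $n$ respectively. This is genuinely routine linear algebra, so there is no real obstacle; the only points requiring care are keeping the bookkeeping of the two separate orthogonality conditions (one for the $X^{(k)}$, one for the $Y^{(l)}$) clean, and flagging the mild notational overloading of $R^{(k,l)}$ between its coordinate form and its full $m\times n$ form.
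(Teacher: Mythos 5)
Your proof is correct and follows essentially the same route as the paper's: the same trace computation for mutual orthogonality (you merely merge the paper's two cases $k\neq k'$ and $l\neq l'$ into one line) and the same resolution-of-identity argument $\sum_k X^{(k)}(X^{(k)})^\top=I_m$, $\sum_l Y^{(l)}(Y^{(l)})^\top=I_n$ for existence of the decomposition. Your remark that equation~\eqref{eq:Rsum} records the coordinate matrix $(X^{(k)})^\top R Y^{(l)}$ rather than the actual element $X^{(k)}\bigl[(X^{(k)})^\top R Y^{(l)}\bigr](Y^{(l)})^\top$ of $\mathcal{F}_{k,l}$ is a fair observation about the paper's notation, not a gap in your argument.
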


\begin{proof}
	We divide the proof into two parts: the proof that the subspaces are mutually orthogonal, and the proof that their direct sum is the whole of the matrix space.
	
	\textbf{The subspaces $\mathcal{F}^{k,l}$ are mutually orthogonal.}
	Let $A\in \mathcal{F}^{k,l}$ and $B\in \mathcal{F}^{k',l'}$ where either $k\neq k'$ or $l\neq l$. 
	By definition of the subspaces in question, there exist $M\in \R^{d^1_k\times d^2_l}$ and $N\in \R^{d^1_{k'}\times d^2_{l'}}$ such that $A=X^{(k)}M(Y^{(l)})^\top$ and $B=X^{(k')}N(Y^{(l')})^\top$. 
	
	We now compute the (Frobenius) inner product between A and B in both cases.

	\textbf{Case 1:} $l\neq l'$, in which case by the assumption on $Y^{(1)},\ldots Y^{(l)}$, we have $(Y^{(l)})^\top Y^{(l')}=\mathbf{0}\in \mathbb{R}^{d^2_{l}\times d^2_{l'}}$. Then, 
	\begin{align*}
	\langle A, B\rangle & = \Tr\left(X^{(k)}M(Y^{(l)})^\top  (X^{(k')}N(Y^{(l')})^\top)^\top\right) \\
	&= \Tr\left(X^{(k)}M(Y^{(l)})^\top Y^{(l')}N^\top (X^{(k')})^\top\right)\\
	&= \Tr\left(X^{(k)}M \mathbf{0} N^\top (X^{(k')})^\top\right) \\
	&=0
	\end{align*}
	\textbf{Case 2:} $k\neq k'$
	\begin{align*}
	\langle A, B\rangle &= \Tr\left((X^{(k')}M(Y^{(l)})^\top)^\top X^{(k)}N(Y^{(l)})^\top\right) \\
	&=\Tr\left(Y^{(l)}M^\top (X^{(k')})^\top X^{(k)}M(Y^{(l)})^\top\right)\\
	&= \Tr\left(Y^{(l)}M^\top \mathbf{0} M(Y^{(l)})^\top\right)=0,
	\end{align*}
	as expected.

	\textbf{The direct sum is the whole of $\mathbb{R}^{m\times n}$: $\bigoplus_{k,l}\mathcal{F}^{k,l}=\mathbb{R}^{m\times n}$.}
	
	Let $R\in\R^{m\times n}$. For each column vector $v\in \R^{m}$ we have immediately $v=\sum_{k}X^{(k)}(X^{(k)})^\top v$ by our assumption on the $X$'s. Applying this to each column of $R$, we have 
	$R=\sum_{k}X^{(k)}(X^{(k)})^\top R$. Similarly, $R=\sum_{l} RY^{(l)}(Y^{(l)})^\top$. Plugging the second equation into the first one, we obtain \begin{align*}
	R&=\sum_{k}X^{(k)}(X^{(k)})^\top R \\  R&=\sum_{k}X^{(k)}(X^{(k)})^\top \sum_{l} R(Y^{(l)}(Y^{(l)})^\top)\\
	&=\sum_{k,l} X^{(k)}\left[(X^{(k)})^\top RY^{(l)}\right](Y^{(l)})^\top\\&\in \bigoplus_{k,l} \mathcal{F}_{k,l},
	\end{align*}
	as expected (this also proves equality~\eqref{eq:Rsum}).
\end{proof}

\section{Proof of convergence of our OMIC algorithm}
\label{proofconv}

In this section, we prove Theorems~\ref{convergence} and~\ref{rate}. The proofs rely mostly on adaptations of the techniques from~\cite{softimpute}, together with extensive use of the rotational invariance of the Frobenius and nuclear norms, as well as the linear independence of the spaces corresponding to each side information pairs.

Recall the optimization algorithm we propose to solve is the following one (cf. equations~\eqref{theopt})

\begin{align}
&\minimize  \quad \mathcal{L}(R_{\Omega},M,\Lambda)\> \quad \quad \quad \text{with} \\
&\mathcal{L}(R_{\Omega},M,\Lambda)=\sum_{k,l} \lambda_{k,l}\|M\|_{*}\nonumber \\
&+\frac{1}{2} \left\|P_{\Omega}(R)-P_{\Omega}\left(\sum_{k=1,l=1}^{K,L} X^{(k)}M^{(k,l)}(Y^{(l)})^{\top}\right)\right\|_{\Fr},\nonumber 
\end{align}

where $P_{\Omega}$ is the projection operator on the set of observed entries: i.e., if an entry is not observed, it is set to zero;
if an entry is observed $p$ times, any Frobenius norm counts that entry $p$ times. Here, the output is $(M^{(k,l)})_{k\leq K,l\leq L}$ and $Z=\sum_{k=1,l=1}^{K,L} X^{(k)}M^{(k,l)}(Y^{(l)})^{\top}$.

First, let us finish the proof of the fully-known case: 
\begin{proof}[Proof of Proposition~\ref{fullyknown}]
	Equation~\eqref{SingThreshNew} follows from the fact that $M^{(k,l)}$ in the decomposition is unique and determined by the formula $M^{(k,l)}=(X^{(k)})^\top Z Y^{(l)} $. This itself follows from the orthogonality of the side information matrices after multiplying each side of equation~\eqref{decomp} by $(X^{(k)})^\top$ on the left and $Y^{(l)} $ on the right. The equivalence between the next two problems also follows. 
	
	As to the fact that $S_{\Lambda}(Z)$ is the solution to problem~\eqref{thenewprob}, let us first note that the case $K=L=1$ with identity side information is just lemma 1 in~\cite{softimpute}. 
	
	Now, note that 
	\begin{align}
	&\|\tilde{Z}-Z\|_{\Fr}^2\nonumber \\&=\sum_{k,l} \|X^{(k)}M^{(k,l)}(Y^{(l)})^\top -X^{(k)}\tilde{M}^{(k,l)}(Y^{(l)})^\top\|_{\Fr}^2   \nonumber \\
	&=\sum_{k,l} \|M^{(k,l)}-\tilde{M}^{(k,l)}\|_{\Fr}^2,
	\end{align}
	where at the first equality, we have used the orthogonality of the terms of the sum with respect to the Frobenius inner product,  at the second equality, we have used the rotational invariance of the Frobenius norm. Here  $\tilde{M}^{(k,l)}=(X^{(k)})^\top Z Y^{(l)}$, so that $Z=\sum_{k,l} X^{(k)}\tilde{M}^{(k,l)}(Y^{(l)})^\top$. %, multiplying by $(\tilde{X}^{(k)})^{\top}$ on the left and $\tilde{Y}^{(l)}$ on the right, where as usual, $(\tilde{X}^{(k)})$ is constructed from $X^{(k)}$ by completing its columns into an orthonormal basis.

	Using this, we can reformulate the problem~\eqref{thenewprob} as follows: 
	\begin{align}
	\label{decomposable}
	\minimize \quad  &\sum_{k,l}\frac{1}{2}\|M^{k,l}-(X^{(k)})^\top Z Y^{(l)} \|_{\Fr}^2\nonumber \\&\quad \quad \quad \quad + \sum_{k=1}^K\sum_{l=1}^L\lambda_{k,l} \left\|M^{(k,l)} \right\|_{*},
	\end{align}
	which can be solved as $KL$ independent optimization problems, with the solution corresponding to index $(k,l)$ being given by $M^{(k,l)}=S_{\lambda_{k,l}}((X^{(k)})^\top Z Y^{(l)})$, by an application of lemma 1 from~\cite{softimpute}. 
	The theorem follows.
	
\end{proof}

Then, let us dispose with the following straightforward observation: 

\begin{lemma}
	\label{contraction}
	The generalized singular value thresholding operator $S_{\Lambda}$ satisfies, for any two matrices $Z_1,Z_2\in \mathbb{R}^{m\times n}$, 
	\begin{align}
	\left\| S_{\Lambda}\left(Z_1\right)  -S_{\Lambda}\left(Z_2\right)        \right\|_{\Fr}\leq   \left\|   Z_1-Z_2 \right\|_{\Fr},
	\end{align}
	and in particular, $S_{\Lambda}(\nbull)$ is a continuous map.
\end{lemma}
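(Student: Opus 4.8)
The plan is to exploit the block structure of $S_{\Lambda}$ across the mutually orthogonal subspaces $\mathcal{F}_{k,l}$ established in Proposition~\ref{prop:unique}, combined with the non-expansiveness of the \emph{ordinary} singular value thresholding operator $S_{\lambda}$. Writing $M_i^{(k,l)}=(X^{(k)})^\top Z_i\, Y^{(l)}$ for $i\in\{1,2\}$, the definition~\eqref{SingThresh} gives
\[
S_{\Lambda}(Z_1)-S_{\Lambda}(Z_2)=\sum_{k,l} X^{(k)}\left[S_{\lambda_{k,l}}(M_1^{(k,l)})-S_{\lambda_{k,l}}(M_2^{(k,l)})\right](Y^{(l)})^\top .
\]
Each summand lies in the subspace $\mathcal{F}_{k,l}$, so by the orthogonality of these subspaces (Proposition~\ref{prop:unique}) and the Pythagorean theorem, the squared Frobenius norm of the left-hand side equals the sum over $(k,l)$ of the squared norms of the summands.

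First I would apply the rotational invariance~\eqref{Rotate} — valid because the columns of $X^{(k)}$ and $Y^{(l)}$ are orthonormal — to each summand, reducing $\big\|X^{(k)}[\,\cdots\,](Y^{(l)})^\top\big\|_{\Fr}$ to $\big\|S_{\lambda_{k,l}}(M_1^{(k,l)})-S_{\lambda_{k,l}}(M_2^{(k,l)})\big\|_{\Fr}$. Next I would invoke the key ingredient: the scalar-parameter operator $S_{\lambda}$ is non-expansive in the Frobenius norm, whence each term is bounded above by $\big\|M_1^{(k,l)}-M_2^{(k,l)}\big\|_{\Fr}$.

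Finally, observe that $M_1^{(k,l)}-M_2^{(k,l)}=(X^{(k)})^\top(Z_1-Z_2)Y^{(l)}$, whose Frobenius norm — again by~\eqref{Rotate} — equals that of the orthogonal projection $X^{(k)}(X^{(k)})^\top(Z_1-Z_2)Y^{(l)}(Y^{(l)})^\top$ of $Z_1-Z_2$ onto $\mathcal{F}_{k,l}$. Summing over $(k,l)$ and applying Pythagoras once more (these projections are mutually orthogonal and sum to $Z_1-Z_2$ by Proposition~\ref{prop:unique}) collapses the right-hand side to $\|Z_1-Z_2\|_{\Fr}^2$, which yields the claimed inequality. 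Continuity of $S_{\Lambda}(\nbull)$ is then immediate from $1$-Lipschitzness.

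The one substantive input — the "main obstacle" insofar as there is one — is the non-expansiveness of the ordinary operator $S_{\lambda}$. This follows because $S_{\lambda}=\mathrm{prox}_{\lambda\|\cdot\|_{*}}$ is the proximal map of the convex nuclear norm, and proximal maps of convex functions are firmly non-expansive, hence $1$-Lipschitz; it is already available in the references cited for the thresholding operator (\cite{softimpute,donoho}). Everything else is bookkeeping with orthogonality and rotational invariance, which is why the result is presented as a straightforward observation.
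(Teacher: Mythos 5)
Your proof is correct and follows essentially the same route as the paper's: decompose into the orthogonal blocks $\mathcal{F}_{k,l}$, use rotational invariance and Pythagoras to reduce to the component matrices, apply the non-expansiveness of the ordinary $S_{\lambda}$ (Lemma 3 of the SoftImpute paper) blockwise, and recombine. If anything, your write-up is slightly more careful than the paper's displayed chain, which at one step pulls the difference inside the nonlinear operator $S_{\lambda_{k,l}}$ as if it were linear before invoking the same non-expansiveness bound you use.
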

\begin{proof}
	This follows from the corresponding lemma 3 in~\cite{softimpute}, together with the definition of the operator $S_{\Lambda}$: 
	\begin{align}
	&\left\|	S_{\Lambda}(Z_1)-	S_{\Lambda}(Z_2)\right\|_{\Fr}^2\nonumber \\
	&=\bigg\|  \sum_{k=1}^K\sum_{l=1}^L X^{(k)}S_{\lambda_{k,l}}\left( (X^{(k)})^\top Z_1 Y^{(l)}  \right)(Y^{(l)})^\top\nonumber \\&\quad\quad -\sum_{k=1}^K\sum_{l=1}^L X^{(k)}S_{\lambda_{k,l}}\left( (X^{(k)})^\top Z_2 Y^{(l)}  \right)(Y^{(l)})^\top           \bigg\|_{\Fr}^2\nonumber  \\
	&=\left\|  \sum_{k=1}^K\sum_{l=1}^L X^{(k)}S_{\lambda_{k,l}}\left( (X^{(k)})^\top (Z_1-Z_2) Y^{(l)}  \right)(Y^{(l)})^\top               \right\|_{\Fr}^2\nonumber \\
	&=\sum_{k=1}^K\sum_{l=1}^L\left\|S_{\lambda_{k,l}}\left( (X^{(k)})^\top (Z_1-Z_2) Y^{(l)}  \right)\right\|_{\Fr}^2\nonumber\\&\leq \sum_{k=1}^K\sum_{l=1}^L\left\| (X^{(k)})^\top (Z_1-Z_2) Y^{(l)} \right\|_{\Fr}^2\nonumber \\
	&=\|Z_1-Z_2\|_{\Fr}^2,
	\end{align}
	where at the fourth line, we have used Lemma 3 from~\cite{softimpute}.
	
\end{proof}

Now, let us define the quantity \begin{align*}
Q(A|B)&=\frac{1}{2}\|P_{\Omega}(R)+P_{\Omega^\perp}(B)-A\|_{\Fr}^2\nonumber \\&\quad \quad \quad \quad \quad +\sum_{k,l}\lambda_{k,l}\|(X^{(k)})^\top A Y^{(l)}\|_{*}.
\end{align*}

We have that the loss $\mathcal{L}(Z)$ corresponding to a matrix $Z$ can be written $Q(Z|Z)$. Furthermore, let us define $Z^{i+1}=\argmin_{Z}Q(Z|Z^{i})$ (since this is an instance of the fully known case, the solution is unique and given by the operator $S_\Lambda$ above).
We now have the following lemma, which shows that the loss decreases monotonically with $i$: 

\begin{lemma}
	\label{monotone}
	Define the sequence $Z^{i}$ by $Z^{i+1}=\argmin_{Z}Q(Z,Z^{i})$ (with any starting point, for instance $Z^0=0$), which is equivalent to definition~\eqref{defseq}. 
	We have 
	\begin{align}
	\label{monotoneq}
	\mathcal{L}(Z^{i+1})\leq Q(Z^{i+1}|Z^{k})\leq 	\mathcal{L}(Z^{i}).
	\end{align}
\end{lemma}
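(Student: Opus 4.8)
The plan is to treat $Q(\cdot\,|\,B)$ as a majorizing surrogate for $\mathcal{L}$ that coincides with it on the diagonal, and then run the standard two-sided majorization-minimization argument. First I would record the elementary identity $\mathcal{L}(Z)=Q(Z|Z)$. This holds because setting the second argument equal to the first collapses the Frobenius term: $P_{\Omega}(R)+P_{\Omega^\perp}(Z)-Z=P_{\Omega}(R)-P_{\Omega}(Z)=P_{\Omega}(R-Z)$, while the nuclear-norm part of $Q$ is by construction the regularizer appearing in $\mathcal{L}$. I would also invoke Proposition~\ref{fullyknown} to note that $Z^{i+1}=\argmin_Z Q(Z|Z^i)$ equals $S_{\Lambda}\big(P_{\Omega}(R)+P_{\Omega^\perp}(Z^i)\big)$, which both matches definition~\eqref{defseq} and guarantees that this minimizer is unique.

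Given these preliminaries, the right-hand inequality is immediate: since $Z^{i+1}$ minimizes $Q(\cdot\,|\,Z^i)$, we have $Q(Z^{i+1}|Z^i)\leq Q(Z^i|Z^i)$, and the right side equals $\mathcal{L}(Z^i)$ by the diagonal identity.

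The left-hand inequality is the substantive step. Using the diagonal identity once more I would write $\mathcal{L}(Z^{i+1})=Q(Z^{i+1}|Z^{i+1})$ and compare this with $Q(Z^{i+1}|Z^i)$. The nuclear-norm terms are identical in both (they depend only on the first argument), so it suffices to compare the quadratic data-fitting terms. Decomposing $Z^{i+1}=P_{\Omega}(Z^{i+1})+P_{\Omega^\perp}(Z^{i+1})$ inside the $Q(Z^{i+1}|Z^i)$ term yields
\begin{align*}
P_{\Omega}(R)+P_{\Omega^\perp}(Z^i)-Z^{i+1}=P_{\Omega}(R-Z^{i+1})+P_{\Omega^\perp}(Z^i-Z^{i+1}),
\end{align*}
where the two summands lie in the orthogonal ranges of $P_{\Omega}$ and $P_{\Omega^\perp}$. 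By the Pythagorean identity for the Frobenius inner product,
\begin{align*}
\left\|P_{\Omega}(R)+P_{\Omega^\perp}(Z^i)-Z^{i+1}\right\|_{\Fr}^2=\left\|P_{\Omega}(R-Z^{i+1})\right\|_{\Fr}^2+\left\|P_{\Omega^\perp}(Z^i-Z^{i+1})\right\|_{\Fr}^2,
\end{align*}
whereas the corresponding term in $Q(Z^{i+1}|Z^{i+1})$ is exactly $\left\|P_{\Omega}(R-Z^{i+1})\right\|_{\Fr}^2$. Discarding the nonnegative summand $\left\|P_{\Omega^\perp}(Z^i-Z^{i+1})\right\|_{\Fr}^2$ gives $Q(Z^{i+1}|Z^{i+1})\leq Q(Z^{i+1}|Z^i)$, which is the required left inequality.

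The only point demanding care is the projection bookkeeping in this last step: correctly isolating the $\Omega^\perp$-component that the surrogate contributes and invoking the orthogonality of $P_{\Omega}$ and $P_{\Omega^\perp}$. Beyond this I anticipate no genuine obstacle, since the chain $\mathcal{L}(Z^{i+1})\leq Q(Z^{i+1}|Z^i)\leq\mathcal{L}(Z^i)$ is simply a descent step of majorization-minimization specialized to our generalized thresholding operator, and it immediately delivers the monotone decrease of $\mathcal{L}$ along the iterates.
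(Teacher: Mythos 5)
Your proof is correct and follows essentially the same route as the paper's: both establish $Q(Z^{i+1}|Z^i)\leq Q(Z^i|Z^i)=\mathcal{L}(Z^i)$ by minimality, then decompose $P_{\Omega}(R)+P_{\Omega^\perp}(Z^i)-Z^{i+1}$ into its $\Omega$ and $\Omega^\perp$ components, apply the Pythagorean identity for the Frobenius norm, and drop the nonnegative term $\frac{1}{2}\|P_{\Omega^\perp}(Z^i-Z^{i+1})\|_{\Fr}^2$ to reach $\mathcal{L}(Z^{i+1})$. The only difference is presentational (the paper writes it as a single chain of inequalities starting from $\mathcal{L}(Z^i)$, while you argue the two inequalities separately), and you correctly read the $Z^k$ in the statement as $Z^i$.
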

\begin{proof}
	The proof is almost the same as the proof of Lemma 2 in~\cite{softimpute}. 
	We have 
	\begin{align}
	&	\mathcal{L}(Z^{i})\nonumber \\&=Q(Z^{i}|Z^{i}) \nonumber \\
	&=\frac{1}{2}\|R_\Omega+P_{\Omega^\perp}(Z^{i})-Z^{i}\|_{\Fr}^2\nonumber \\& \quad \quad \quad \quad \quad \quad\quad \quad \quad\quad  +\sum_{k,l}\lambda_{k,l}\|(X^{(k)})^\top Z^{i}Y^{(l)}\|_{*}\nonumber \\
	&\geq\min_{Z}\frac{1}{2}\|R_{\Omega}+P_{\Omega^\perp}(Z^{i})-Z\|_{\Fr}^2\nonumber \\& \quad \quad \quad \quad \quad \quad\quad \quad \quad\quad+\sum_{k,l}\lambda_{k,l}\|(X^{(k)})^\top ZY^{(l)}\|_{*}\nonumber \\
	&=Q(Z^{i+1}|Z^{i})\nonumber \\
	&= \frac{1}{2} \|(R_{\Omega}-P_{\Omega}(Z^{i+1}) \nonumber \\& \quad \quad \quad \quad \quad \quad \quad\quad +(P_{\Omega^\perp}(Z^i)-P_{\Omega^\perp}(Z^{i+1}))   \|_{\Fr}^2\nonumber \\& +\sum_{k,l}\lambda_{k,l}\|(X^{(k)})^\top Z^{i+1}Y^{(l)}\|_{*}\nonumber \\
	&= \frac{1}{2} \left\|(R_{\Omega}-P_{\Omega}(Z^{i+1})\right\|_{\Fr}^2\nonumber \\& \quad \quad \quad \quad \quad +\frac{1}{2}\left\|(P_{\Omega^\perp}(Z^i)-P_{\Omega^\perp}(Z^{i+1}))   \right\|_{\Fr}^2 \nonumber \\
	&\quad\quad \quad \quad \quad \quad \quad +\sum_{k,l}\lambda_{k,l}\|(X^{(k)})^\top Z^{i+1}Y^{(l)}\|_{*}\nonumber \\
	&\geq \frac{1}{2} \left\|(R_{\Omega}-P_{\Omega}(Z^{i+1})\right\|_{\Fr}^2\nonumber \\& \quad \quad \quad \quad \quad \quad\quad \quad +\sum_{k,l}\lambda_{k,l}\|(X^{(k)})^\top Z^{i+1}Y^{(l)}\|_{*}\nonumber \\
	&=Q(Z^{i+1},Z^{i+1})=\mathcal{L}(Z^{i+1}).
	\end{align}
\end{proof}
Next, we have the following lemma: 

\begin{lemma}
	\label{decreasefrob1}
	The sequence $\|Z^{i}-Z^{i-1}\|_{\Fr}$ is monotone decreasing: 
	\begin{align}
	\label{decreasefrob}
	\|Z^{i}-Z^{i+1}\|_{\Fr}\leq \|Z^{i}-Z^{i-1}\|_{\Fr}.
	\end{align}
	Furthermore, 
	\begin{align}
	\label{secondstatement}
	Z^{i}-Z^{i+1} \rightarrow 0 \quad \text{as} \quad i\rightarrow \infty.
	\end{align}
\end{lemma}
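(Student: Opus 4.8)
The plan is to handle the two assertions separately: the monotonicity~\eqref{decreasefrob} follows almost immediately from the non-expansiveness of $S_{\Lambda}$ established in Lemma~\ref{contraction}, whereas the convergence~\eqref{secondstatement} additionally requires the loss-decrease from Lemma~\ref{monotone}. To begin with~\eqref{decreasefrob}, I would write both iterates as applications of $S_{\Lambda}$: since $Z^{i+1}=S_{\Lambda}(P_{\Omega}(R)+P_{\Omega^\perp}(Z^i))$ and $Z^{i}=S_{\Lambda}(P_{\Omega}(R)+P_{\Omega^\perp}(Z^{i-1}))$, the two arguments agree on $\Omega$ and differ exactly by $P_{\Omega^\perp}(Z^i-Z^{i-1})$. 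Feeding this into the contraction inequality of Lemma~\ref{contraction} gives
\[
\|Z^{i+1}-Z^i\|_{\Fr}\leq \|P_{\Omega^\perp}(Z^i-Z^{i-1})\|_{\Fr}\leq \|Z^i-Z^{i-1}\|_{\Fr},
\]
where the last step uses that the orthogonal projection $P_{\Omega^\perp}$ cannot increase the Frobenius norm. This is precisely~\eqref{decreasefrob}.

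For~\eqref{secondstatement}, I would first extract a sharper inequality that is already implicit in the proof of Lemma~\ref{monotone}. Splitting the squared norm $\|R_{\Omega}-P_{\Omega}(Z^{i+1})+(P_{\Omega^\perp}(Z^i)-P_{\Omega^\perp}(Z^{i+1}))\|_{\Fr}^2$ into its orthogonal $\Omega$ and $\Omega^\perp$ components, the chain of inequalities there in fact yields
\[
\mathcal{L}(Z^i)\geq \mathcal{L}(Z^{i+1})+\tfrac{1}{2}\|P_{\Omega^\perp}(Z^i-Z^{i+1})\|_{\Fr}^2.
\]
Since $\mathcal{L}$ is bounded below (by $\mathcal{L}^*$, or simply by $0$) and $\mathcal{L}(Z^i)$ is non-increasing, this telescopes: summing over $i$ bounds $\sum_i \tfrac{1}{2}\|P_{\Omega^\perp}(Z^i-Z^{i+1})\|_{\Fr}^2$ by $\mathcal{L}(Z^0)-\mathcal{L}^*<\infty$, so the summands vanish and $\|P_{\Omega^\perp}(Z^i-Z^{i+1})\|_{\Fr}\to 0$.

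The final step bootstraps this partial statement to the full difference using the non-expansiveness bound from the first paragraph: shifting indices there gives $\|Z^{i+1}-Z^i\|_{\Fr}\leq \|P_{\Omega^\perp}(Z^i-Z^{i-1})\|_{\Fr}$, whose right-hand side tends to zero, proving~\eqref{secondstatement}. I expect this bootstrapping to be the main (if elementary) obstacle. Non-expansiveness of $S_{\Lambda}$ on its own only guarantees that $\|Z^i-Z^{i-1}\|_{\Fr}$ is bounded and monotone; it does not force it to zero. The energy argument, in turn, directly controls only the $\Omega^\perp$-component $\|P_{\Omega^\perp}(Z^i-Z^{i+1})\|_{\Fr}$. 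The key observation closing the gap is that the next iterate depends on the previous one only through its $\Omega^\perp$-part, so this partial control already caps the full consecutive difference.
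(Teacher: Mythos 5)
Your proof is correct. The first statement is established exactly as in the paper: non-expansiveness of $S_{\Lambda}$ (Lemma~\ref{contraction}) applied to the two imputed targets, which differ only by $P_{\Omega^\perp}(Z^i-Z^{i-1})$, followed by the fact that a coordinate projection is a Frobenius contraction. For the second statement your route diverges from the paper's in a mild but genuine way. The paper decomposes $Z^i-Z^{i+1}$ into its $\Omega$- and $\Omega^\perp$-components and kills each separately: the $\Omega^\perp$-part via the same energy identity you use (that $\mathcal{L}(Z^i)-\mathcal{L}(Z^{i+1})\geq \tfrac12\|P_{\Omega^\perp}(Z^i-Z^{i+1})\|_{\Fr}^2$, extracted from the proof of Lemma~\ref{monotone}), and the $\Omega$-part via a squeeze argument — since $\|Z^i-Z^{i+1}\|_{\Fr}$ is monotone and bounded, it converges, and the sandwich $\|Z^{i}-Z^{i+1}\|_{\Fr}^2\leq\|P_{\Omega^\perp}(Z^{i-1}-Z^{i})\|_{\Fr}^2\leq\|Z^{i}-Z^{i-1}\|_{\Fr}^2$ from~\eqref{middle}--\eqref{last} then forces the Pythagorean complement $\|P_{\Omega}(Z^{i}-Z^{i-1})\|_{\Fr}^2$ to vanish. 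You avoid that second claim entirely by observing that the iteration map sees $Z^{i-1}$ only through $P_{\Omega^\perp}(Z^{i-1})$, so the already-established vanishing of the $\Omega^\perp$-increments bounds the full increment via $\|Z^{i+1}-Z^i\|_{\Fr}\leq\|P_{\Omega^\perp}(Z^i-Z^{i-1})\|_{\Fr}\to 0$. Your telescoping of the energy inequality (bounding $\sum_i\tfrac12\|P_{\Omega^\perp}(Z^i-Z^{i+1})\|_{\Fr}^2$ by $\mathcal{L}(Z^0)-\mathcal{L}^*$) is also a slight variant of the paper's limit argument, and even gives a quantitative summability statement the paper does not record. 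The net effect is a shorter proof of~\eqref{secondstatement} with one fewer claim; the paper's version has the minor virtue of exhibiting the convergence of each component explicitly.
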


\begin{proof}
	We have 
	\begin{align}
	&\|Z^{i}-Z^{i+1}\|_{\Fr}^2\nonumber \\&=\| S_{\Lambda}\left(P_{\Omega^\perp}(Z^{i-1})+R_{\Omega}\right)   -S_{\Lambda}\left(P_{\Omega^\perp}(Z^{i})+R_{\Omega}\right)    \|_{\Fr}^2\nonumber\\
	&\leq \|\left(P_{\Omega^\perp}(Z^{i-1})+R_{\Omega}\right)   -\left(P_{\Omega^\perp}(Z^{i})+R_{\Omega}\right)    \|_{\Fr}^2\nonumber\\
	& \quad \quad\quad \text{By Lemma~\ref{contraction}}\nonumber\\
	&= \|P_{\Omega^\perp}(Z^{i-1}) -P_{\Omega^\perp}(Z^{i}) \|_{\Fr}^2 \label{middle}\\
	&\leq \|Z^{i}-Z^{i-1}\|_{\Fr}^2,\label{last}
	\end{align}
	which proves the first statement~\eqref{decreasefrob}.
	
	As for the second statement~\eqref{secondstatement}, it will follow from the following two claims: 
	
	\textit{Claim 1:} $P_{\Omega}(Z^{i}-Z^{i+1})\rightarrow 0$.
	\textit{Claim 2:} $P_{\Omega^\perp}(Z^{i}-Z^{i+1})\rightarrow 0$.
	
	\textit{Proof of Claim 1:}
	Note that by inequality~\eqref{decreasefrob}, the sequence $\|Z^{i}-Z^{i+1}\|_{\Fr}$ must converge. In particular, $\|Z^{i}-Z^{i+1}\|_{\Fr}-\|Z^{i}-Z^{i-1}\|_{\Fr}\rightarrow 0$, and by inequalites~\eqref{middle} and~\eqref{last}, $$\|P_{\Omega^\perp}(Z^{i-1}) -P_{\Omega^\perp}(Z^{i}) \|_{\Fr}- \|Z^{i}-Z^{i-1}\|_{\Fr}\rightarrow 0,$$
	from which we conclude that $$\|P_{\Omega}(Z^{i})-P_{\Omega}(Z^{i+1})\|_{\Fr}^2\rightarrow 0.$$ Claim 1 follows. 
	
	\textit{Proof of Claim 2:} 
	We know by inequality~\eqref{monotoneq} that $\mathcal{L}(Z^{i})$ must converge, and thus $\mathcal{L}(Z^{i})-\mathcal{L}(Z^{i+1})\rightarrow 0$, from which it follows that \begin{align}
	\label{bla}
	Q(Z^{i+1}|Z^i)  - Q(Z^{i+1}|Z^{i+1})\rightarrow 0 .
	\end{align} Now, 
	\begin{align}
	&Q(Z^{i+1}|Z^i)  - Q(Z^{i+1}|Z^{i+1})\nonumber\\
	&=\frac{1}{2}\|R_{\Omega}+P_{\Omega^\perp}(Z^i)-Z^{i+1}\|_{\Fr}^2
	\nonumber \\& \quad \quad \quad \quad \quad \quad\quad \quad +\sum_{k,l}\lambda_{k,l}\|(X^{(k)})^\top Z^{i+1} Y^{(l)}\|_{*}\nonumber\\&-\frac{1}{2}\|R_{\Omega}+P_{\Omega^\perp}(Z^{i+1})-Z^{i+1}\|_{\Fr}^2
	\nonumber \\& \quad \quad \quad \quad \quad \quad \quad\quad-\sum_{k,l}\lambda_{k,l}\|(X^{(k)})^\top Z^{i+1} Y^{(l)}\|_{*}\nonumber\\
	&	=\frac{1}{2}\|P_{\Omega^\perp}(Z^{i+1})-P_{\Omega^\perp}(Z^{i})\|_{\Fr}^2, 
	\end{align}
	which, together with~\eqref{bla}, implies claim 2.
\end{proof}

The next step is to prove that each limit point of the sequence $Z^{i}$ is a solution to the optimization problem~\eqref{theopt}. To prove this, we will need the following lemma: 
\begin{lemma}
	\label{little5}
	Let $Z_{n_i}\rightarrow Z^{\infty}$ be a convergent subsequence of $Z_{i}$. 
	
	Let $p_{n_i}\in \partial  \sum_{k,l}\|(X^{(k)})^\top Z^iY^{(l)}\|_{*}$ be a sequence of subgradients of our regularizer $\sum_{k,l}\| M^{(k,l)}\|_{*}$ evaluated at $Z^i$. There exists a convergent subsequence of $p_{m_i}$ which converges to some $$p\in  \partial  \sum_{k,l}\|(X^{(k)})^\top Z^{\infty}Y^{(l)}\|_{*},$$
	a subgradient of our regularizer, evaluated at the limit $Z^{\infty}$.
\end{lemma}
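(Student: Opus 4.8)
The plan is to recognize the statement as an instance of the standard closed-graph (outer semicontinuity) property of the subdifferential of a finite convex function, preceded by a compactness argument that produces the convergent subsequence. Throughout, write $g(Z)=\sum_{k,l}\|(X^{(k)})^\top Z Y^{(l)}\|_{*}$ for the regularizer, regarded as a function on the finite-dimensional space $\mathbb{R}^{m\times n}$. Since the nuclear norm is finite everywhere and $Z\mapsto (X^{(k)})^\top Z Y^{(l)}$ is linear, $g$ is a finite convex function, hence continuous and locally Lipschitz, and its subdifferential is locally bounded.

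First I would secure the boundedness needed to extract a limit. By the Moreau--Rockafellar sum rule together with the chain rule for composition with a linear map, any subgradient can be written as
\begin{align*}
p=\sum_{k,l} X^{(k)} G^{(k,l)} (Y^{(l)})^\top, \qquad G^{(k,l)}\in \partial\|\cdot\|_{*}\big((X^{(k)})^\top Z Y^{(l)}\big).
\end{align*}
Every subgradient $G^{(k,l)}$ of the nuclear norm has spectral norm at most $1$, hence Frobenius norm at most $\sqrt{\min(m,n)}$; and since the columns of $X^{(k)}$ and $Y^{(l)}$ are orthonormal, left- and right-multiplication preserve the Frobenius norm, so $\|X^{(k)} G^{(k,l)} (Y^{(l)})^\top\|_{\Fr}=\|G^{(k,l)}\|_{\Fr}$. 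This yields a uniform bound $\|p\|_{\Fr}\leq KL\sqrt{\min(m,n)}$. In particular the sequence $(p_{n_i})$ is bounded, so by Bolzano--Weierstrass in $\mathbb{R}^{m\times n}$ there is a subsequence $p_{m_i}$ (with $(m_i)$ a subsequence of $(n_i)$) converging to some $p$; and because $(m_i)$ is a subsequence of $(n_i)$, we still have $Z_{m_i}\to Z^{\infty}$.

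Next I would pass to the limit in the defining inequality. For each fixed $W\in\mathbb{R}^{m\times n}$, the relation $p_{m_i}\in\partial g(Z_{m_i})$ gives
\begin{align*}
g(W)\ \geq\ g(Z_{m_i})+\langle p_{m_i},\, W-Z_{m_i}\rangle .
\end{align*}
Letting $i\to\infty$ and using the continuity of $g$ together with $Z_{m_i}\to Z^{\infty}$ and $p_{m_i}\to p$, the right-hand side converges to $g(Z^{\infty})+\langle p,\, W-Z^{\infty}\rangle$. As $W$ was arbitrary, this is precisely the subgradient inequality characterizing $p\in\partial g(Z^{\infty})$, which is the desired conclusion.

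The only delicate point, and hence the main obstacle, is the uniform boundedness of the subgradients used to extract the convergent subsequence; this is exactly where the orthonormality of the auxiliary matrices and the spectral-norm bound on nuclear-norm subgradients enter, so that one cannot simply invoke a $1$-Lipschitz estimate (the nuclear norm is only $\sqrt{\min(m,n)}$-Lipschitz with respect to the Frobenius norm). Once this compactness is in place, the remainder is the routine closed-graph property of convex subdifferentials, requiring nothing beyond continuity of $g$ and linearity of the inner product.
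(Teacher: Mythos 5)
Your proof is correct, but it takes a genuinely different route from the paper's. The paper works with the explicit SVD-based description of the nuclear-norm subdifferential, $\partial\|A\|_{*}=\{UV^\top+W:\ U^\top W=0=VW^\top,\ \|W\|_{\sigma}\leq 1\}$: it extracts a subsequence along which the $W$-parts converge (compactness of the spectral-norm unit ball, plus continuity of the spectral norm to keep $\|W\|_\sigma\le 1$ in the limit) and then argues that the $\sum_{k,l}U_{k,l}^{m_i}(V_{k,l}^{m_i})^\top$ parts converge to $\sum_{k,l}U_{k,l}V_{k,l}^\top$ because $Z^{n_i}\to Z^\infty$, so that the limit visibly belongs to the explicitly described set $\partial\sum_{k,l}\|(X^{(k)})^\top Z^\infty Y^{(l)}\|_{*}$. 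You instead invoke the abstract closed-graph (outer semicontinuity) property of the subdifferential of a finite convex function: uniform boundedness of the subgradients --- obtained from the dual-norm bound $\|G^{(k,l)}\|_\sigma\leq 1$ together with the fact that multiplication by the orthonormal auxiliary matrices is a Frobenius isometry --- yields a convergent subsequence by Bolzano--Weierstrass, and passing to the limit in the subgradient inequality, using only continuity of the regularizer, identifies the limit as a subgradient at $Z^\infty$. Your route is somewhat more robust, since it sidesteps the question of whether the SVD factors $U_{k,l}^{m_i},V_{k,l}^{m_i}$ behave continuously along the sequence (the factors of an SVD are not continuous functions of the matrix when singular values collide or vanish), which is the one step the paper's argument treats briefly; the paper's route, in exchange, delivers the limiting subgradient in the explicit $UV^\top+W$ form. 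Both arguments establish the lemma, and both feed into Proposition~\ref{lemma5soft} in the same way.
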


\begin{proof}
	First, recall from~\cite{subdifnu} and~\cite{softimpute} that the set of subgradients of the nuclear norm of a matrix $A$ is given by $$\partial \|A\|_{*}=\left\{UV^\top+W,U^\top W=0=VW^\top,\|W\|_{\sigma}\leq 1    \right\},$$
	where $UDV^\top$ is the SVD of the matrix $A$. 
	Using the chain rule and the fact that the side information matrices $X^{(k)},Y^{(l)}$ are constant, we can calculate the set of subgradients of our regularizer evaluated at both $Z^i$ and $Z^\infty$ as follows: 
	\begin{align}
	\label{element}
	& \quad \quad \quad \quad \quad 	\partial \sum_{k,l}\|(X^{(k)})^\top Z^iY^{(l)}\|_{*} \\&= \Bigg\{  \sum_{k,l}U_{k,l}^i(V_{k,l}^i)^\top+W_{k,l}^i,(U_{k,l}^i)^\top W_{k,l}^i=0,\nonumber \\&\quad \quad\quad \quad  \quad\quad \quad \quad  \quad W_{k,l}^iV_{k,l}^i=0,\|W_{k,l}^i\|_{\sigma}\leq 1      \Bigg\} \nonumber 
	\end{align}
	and
	\samepage{
		\begin{align}
		\label{elementagain}
		&\quad \quad  \quad \quad \quad \quad \partial \sum_{k,l}\|(X^{(k)})^\top Z^iY^{(l)}\|_{*}\\&= \Bigg\{  \sum_{k,l}U_{k,l}V_{k,l}^\top+W_{k,l},U_{k,l}^\top W_{k,l}=0,\nonumber \\&\quad \quad\quad \quad  \quad\quad \quad \quad  \quad W_{k,l}V_{k,l}=0,\|W_{k,l}\|_{\sigma}\leq 1     \Bigg\} ,\nonumber 
		\end{align}
		where $U_{k,l}D_{k,l}V_{k,l}^\top$  (resp. $U_{k,l}^iD_{k,l}^i(V_{k,l}^i)^\top$) is the singular value decomposition of  $(X^{(k)})^\top Z^\infty Y^{(l)}$(resp. $(X^{(k)})^\top Z^iY^{(l)}$).}
	
	By compactness, there exists a subsequence $m_i$ of $n_i$ such that $W^{m_i}$ converges to a value $W$. By continuity of the spectral norm, we also have $\|W\|_{*}\leq 1$. Furthermore, it follows from the convergence of $Z^{n_i}$ (and in particular, of $Z^{m_i}$) to $Z^\infty$ that $\sum_{k,l}U_{k,l}^{m_i}(V_{k,l}^{m_i})^\top \rightarrow  \sum_{k,l}U_{k,l}(V_{k,l})^\top$. The result follows. 
	
\end{proof}

\begin{proposition}
	\label{lemma5soft}
	Every limit point of the sequence $(Z^{i})_{i\in \mathbb{N}}$  defined in~\eqref{defseq} is a stationary point of the loss function $\mathcal{L}(Z)=\frac{1}{2}\|\tilde{Z}-Z\|_{\Fr}^2+ \sum_{k=1}^K\sum_{l=1}^L\left\|(X^{(k)})^\top Z Y^{(l)}  \right\|_{*}$ defined in~\eqref{thenewprob1}. Hence, it is also a solution to the fixed point equation 
	\begin{align}
	Z=S_{\Lambda}\left( R_{\Omega}+P_{\Omega^\perp }(Z)  \right).
	\end{align}
\end{proposition}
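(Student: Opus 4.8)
The plan is to follow the strategy of Lemma~5 in \cite{softimpute}, exploiting the first‑order optimality condition satisfied by each iterate together with the subgradient‑convergence statement of Lemma~\ref{little5} to pass to the limit. Let $Z^\infty$ be a limit point of $(Z^i)$ and let $Z^{n_i}\to Z^\infty$ be a convergent subsequence. The starting observation is that, by Proposition~\ref{fullyknown}, each iterate $Z^{i+1}=S_\Lambda(R_\Omega+P_{\Omega^\perp}(Z^i))$ is the unique minimizer of the convex fully‑known objective $A\mapsto Q(A|Z^i)$. Writing the stationarity condition of this subproblem at its minimizer $A=Z^{i+1}$ gives
\begin{align}
0=\left(Z^{i+1}-R_\Omega-P_{\Omega^\perp}(Z^i)\right)+p_i,\nonumber
\end{align}
where $p_i$ is a subgradient of the weighted regularizer $\sum_{k,l}\lambda_{k,l}\|(X^{(k)})^\top A Y^{(l)}\|_*$ evaluated at $A=Z^{i+1}$.

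First I would pass this identity to the limit along $n_i$, which requires controlling both summands. For the quadratic part, the second statement of Lemma~\ref{decreasefrob1} gives $Z^{i+1}-Z^i\to 0$, so $Z^{n_i+1}\to Z^\infty$ as well; combined with linearity (hence continuity) of $P_{\Omega^\perp}$ this yields $Z^{n_i+1}-R_\Omega-P_{\Omega^\perp}(Z^{n_i})\to Z^\infty-R_\Omega-P_{\Omega^\perp}(Z^\infty)=-P_\Omega(R-Z^\infty)$. For the subgradient part I would invoke Lemma~\ref{little5} (the fixed positive weights $\lambda_{k,l}$ play no role in the argument) on the sequence $(p_{n_i})$: after passing to a further subsequence it converges to some $p$ that is again a subgradient of the regularizer, now evaluated at $A=Z^\infty$. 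Taking limits in the displayed identity then produces
\begin{align}
P_\Omega(R-Z^\infty)=p\in\partial\sum_{k,l}\lambda_{k,l}\|(X^{(k)})^\top A Y^{(l)}\|_*\Big|_{A=Z^\infty},\nonumber
\end{align}
which is exactly the statement that $0$ lies in the subdifferential of the masked objective $\mathcal{L}$ of problem~\eqref{theopt} at $Z^\infty$; since that objective is convex, $Z^\infty$ is a global minimizer.

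To deduce the fixed‑point equation I would read the same optimality condition backwards: the relation $P_\Omega(R-Z^\infty)=p$ with $p$ a subgradient at $Z^\infty$ is precisely the first‑order condition characterizing $Z^\infty$ as the minimizer of $A\mapsto Q(A|Z^\infty)$, which by Proposition~\ref{fullyknown} means $Z^\infty=S_\Lambda(R_\Omega+P_{\Omega^\perp}(Z^\infty))$. Alternatively, this fixed‑point identity follows in one line from the continuity of $S_\Lambda$ established in Lemma~\ref{contraction}: since $Z^{n_i+1}=S_\Lambda(R_\Omega+P_{\Omega^\perp}(Z^{n_i}))$, both sides converge (the right‑hand side by continuity, the left‑hand side to $Z^\infty$ as noted above), and uniqueness of limits gives the equation directly.

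The hard part will be the passage to the limit in the subgradient term. Because the subdifferential of the nuclear norm is genuinely set‑valued and does not vary continuously in its base point, one cannot simply take limits of $p_{n_i}$; the real work is to guarantee that the limit of a convergent sequence of subgradients taken at the points $Z^{n_i+1}$ is still a subgradient at $Z^\infty$. This is exactly what Lemma~\ref{little5} secures, via compactness of the spectral‑norm unit ball for the $W$‑component of each subgradient together with continuity of the $UV^\top$‑component. Everything else—controlling the quadratic term and rewriting $R_\Omega+P_{\Omega^\perp}(Z^\infty)-Z^\infty=P_\Omega(R-Z^\infty)$—is routine.
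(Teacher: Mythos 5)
Your proposal is correct and follows essentially the same route as the paper's proof: write the first-order optimality condition of the fully-known subproblem at each iterate, use Lemma~\ref{decreasefrob1} to control the quadratic part and Lemma~\ref{little5} to pass the subgradients to the limit, and obtain the fixed-point equation from the continuity of $S_{\Lambda}$ (Lemma~\ref{contraction}). The only differences are cosmetic (a shift of index in which iterate carries the subgradient, and your correct side remark that the fixed weights $\lambda_{k,l}$ do not affect Lemma~\ref{little5}).
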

\begin{proof}
	Let $Z^\infty$ be such a limit point. There exists a subsequence $Z^{n_i}$ such that 	$Z^{n_i}\rightarrow Z^\infty$.

	By Lemma~\ref{decreasefrob1} ,we have $Z^{n_i}-Z^{n_i-1}\rightarrow 0$, which by continuity of the operator $S_{\Lambda}$ implies that 
	\begin{align}
	\label{eqn:easystep}
	R_\Omega+P_{\Omega^\perp}(Z^{n_i-1})-Z^{n_i}\rightarrow R_{\Omega}-P_{\Omega}(Z^{\infty}).
	\end{align}

	Now, note that by definition of $Z^i$, \begin{align*}
	\forall i, \quad 0&\in \partial Q(Z|Z^{i-1})\\&=-(P_{\Omega}(R)+P_{\Omega}(Z^{i-1})-Z^i) \\&\quad \quad \quad + \partial  \sum_{k,l}\|(X^{(k)})^\top Z^iY^{(l)}\|_{*}.
	\end{align*}
	
	Thus, we can choose, for all $i$, a  $p_i\in \partial  \sum_{k,l}\|(X^{(k)})^\top Z^iY^{(l)}\|_{*}$ such that $p_i-(P_{\Omega}(R)+P_{\Omega}(Z^{i-1})-Z^i) =0$.
	Now, by Lemma~\ref{little5}, there exists a subsequence $Z^{m_i}$  of $Z^{n_i}$ such that $p_{m_i}\rightarrow p$ for some 
	\begin{align}
	\label{eqn:pingrad}
	p\in \partial  \sum_{k,l}\|(X^{(k)})^\top Z^\infty Y^{(l)}\|_{*}.
	\end{align} 
	
	Putting equations~\eqref{eqn:easystep} and~\eqref{eqn:pingrad} together, we obtain 
	\begin{align}
	0=p_{m_i}-(P_{\Omega}(R)+P_{\Omega}(Z^{m_i-1})-Z^{m_i}) \nonumber \\\rightarrow p-R_{\Omega}-P_{\Omega}(Z^{\infty}).
	\end{align}
	Thus, $0$ is a subgradient of $\mathcal{L}$ evaluated at $Z^\infty$. The first statement of the Proposition follows. 
	
	As for the second statement, note that \begin{align}
	\label{thissss}
	Z^{m_i}=S_{\Lambda}\left(R_{\Omega}+P_{\Omega^\perp}(Z^{m_i-1})\right).
	\end{align}
	
	Furthermore, by Lemma~\ref{decreasefrob1}, $Z^{m_i}- Z^{m_i-1}\rightarrow 0$, and therefore $Z^{m_i-1}\rightarrow Z^\infty$. Thus, using the continuity of the generalized singular value thresholding operator, we obtain by passing to the limits in~\eqref{thissss}:
	\begin{align}
	Z^\infty=S_{\Lambda}\left(   R_{\Omega}+P_{\Omega^\perp}(Z^{\infty})      \right),
	\end{align}
	as expected.
\end{proof}

\begin{proof}[Proof of Theorem~\ref{convergence}]
	In Proposition~\ref{lemma5soft}, we have already proved that any limit point of the sequence $(Z^{i})_{i\in \mathbb{N}}$  (defined in equation~\eqref{defseq}) is a stationary point of the loss function, and therefore a solution to the optimization problem~\eqref{theopt}. Thus, the only thing left to prove is that the sequence $(Z^{i})_{i\in \mathbb{N}}$ converges: indeed, if that is the case, its limit will be its (only) limit point, and will be a solution to problem~\eqref{theopt}.
	
	Let us first dispense with the following simple observation: by Lemma~\ref{monotone}, for any $i$, we have \begin{align}
	\label{compactnessargument}
	\mathcal{L}(Z^i)\leq \mathcal{L}(Z^0).
	\end{align}
	Since the objective function $\mathcal{L}$ is a continuous function of the matrix $Z$, the set of matrices $Z$ satisfying equation~\eqref{compactnessargument} is compact. Thus, by compactness, there exists at least one limit point $\bar{Z}$.

	Now, by the continuity of $S_{\Lambda}$ and the definition of $Z^i$, we have, for any $i$: 
	\begin{align}
	\label{lastkey}
	&\|\widebar{Z}-Z^i\|_{\Fr}^2\nonumber \\
	&=\left\|S_{\Lambda}\left(  R_\Omega +P_{\Omega^\perp}(\widebar{Z})\right)  -S_{\Lambda}\left(  R_\Omega +P_{\Omega^\perp}(Z^{i-1})\right)       \right\|_{\Fr}^2\nonumber \\
	&\leq \left\|\left(  R_\Omega +P_{\Omega^\perp}(\widebar{Z})\right)  -\left(  R_\Omega +P_{\Omega^\perp}(Z^{i-1})\right)       \right\|_{\Fr}^2\nonumber \\
	&=\|P_{\Omega^\perp}(\widebar{Z}-Z^{i-1})\|_{\Fr}^2\leq  \|\widebar{Z}-Z^{i-1}\|_{\Fr}^2,
	\end{align}
	where at the first line, we have used Proposition~\ref{lemma5soft} and the definition of $Z^i$.
	
	We will now show that the sequence $(Z^{i})_{i\in \mathbb{N}}$ actually converges to $\widebar{Z}$. To do this, we proceed by contradiction. Assume $Z^i$ doesn't converge towards $\bar{Z}$. By definition of convergence, this implies that there must exist an $\epsilon_*>0$ such that there exists an infinite subsequence $Z^{I_1},Z^{I_2},\ldots$ such that for all $i$, $\|Z^{I_i}-\widebar{Z}\|_{\Fr}\geq \epsilon_*$. Since the subsequence $Z^{I_1},Z^{I_2}, \ldots$ is contained in the compact set $$\mathcal{C}_{\epsilon_*}:=\left\{Z: \mathcal{L}(Z)\leq \mathcal{L}(Z^0) \> \land \> \|Z^{I_i}-\bar{Z}\|_{\Fr}\geq \epsilon_*\right\},$$ it must have a limit point $\widetilde{Z}\in \mathcal{C}_{\epsilon_*}$ inside that set. In particular, we have \begin{align}
	\label{eq:farapart}
	\|\widetilde{Z}-\widebar{Z}\|_{\Fr}\geq \epsilon_*
	\end{align}

	Set $\epsilon= \frac{\epsilon_*}{3}$. Since $\widebar{Z}$ is a limit point of $(Z^{i})_{i\in \mathbb{N}}$, there certainly exists an index $k$ such that 
	\begin{align}
	\label{eq:Zk}
	\|Z^{k}-\widebar{Z}\|_{\Fr}\leq \epsilon.
	\end{align}
	Since $\widetilde{Z}$ is also a limit point of $(Z^{i})_{i\in \mathbb{N}}$ (specifically, a limit point of the subsequence $(Z^{I_i})_{i\in \mathbb{N}}$), there exists an index $l$ such that $l>k$ and 
	\begin{align}
	\label{eq:Zltilde}
	\|Z^{l}-\widetilde{Z}\|_{\Fr}\leq \epsilon.
	\end{align}
	On the other hand, since $l>k$ by iteratively applying equation~\eqref{lastkey} $l-k$ times, we obtain: 
	\begin{align}
	\label{eq:Zlbar}
	\|\widebar{Z}-Z^l\|_{\Fr}\leq \|\widebar{Z}-Z^k\|_{\Fr}\leq \epsilon,
	\end{align}
	where the last inequality follows from equation~\eqref{eq:Zk}.
	
	Now, by equations~\eqref{eq:Zlbar} and~\eqref{eq:Zltilde} and the triangle inequality, we obtain: 
	\begin{align}
	\|\widetilde{Z}-\widebar{Z}\|_{\Fr}&\leq \|\widetilde{Z}-Z^{l}\|_{\Fr}+  \|Z^l-\widebar{Z}\|_{\Fr}\\&\leq \epsilon+\epsilon=2\epsilon=\frac{2\epsilon_*}{3} <\epsilon_*,
	\end{align}
	which is in contradiction with equation~\eqref{eq:farapart}. Thus, we deduce by contradiction that $Z^i$ indeed converges to its only limit point $\widebar{Z}$ (which we refer to as $Z^\infty$ in the rest of the appendix). As explained at the beginning of the proof, this, together with Proposition~\ref{lemma5soft}, implies Theorem~\ref{convergence}, as required.

\end{proof}

We can now proceed with the proof of our Theorem~\ref{rate} on the worst-case convergence.
\begin{proof}[Proof of Theorem~\ref{rate}]
	The proof is exactly the same as that of theorem 2 in~\cite{softimpute} (and also takes inspiration from~\cite{nesterov}), and we reformulate it into our notation here for the sake of completeness only. 
	
	For $\theta\in [0,1]$, we write $Z^i(\theta)$ for $(1-\theta)Z^i +\theta Z^\infty$.  Note that by convexity of our loss function $\mathcal{L}$, we have $\mathcal{L}(Z^i(\theta))\leq (1-\theta)\mathcal{L}(Z^i) +\theta \mathcal{L}(Z^\infty)$. 
	
	Note also that we have 
	\begin{align}
	&\|P_{\Omega^\perp}(Z^i-Z^i(\theta))\|_{\Fr}^2=\theta^2\|P_{\Omega^\perp}(Z^i-Z^\infty)\|_{\Fr}^2\nonumber \\ &\leq \theta^2 \|Z^i-Z^\infty\|_{\Fr}^2\leq \theta^2 \|Z^0-Z^\infty\|_{\Fr}^2,
	\end{align}
	where we have used Lemmas~\ref{decreasefrob1} and~\ref{contraction}.
	
	Using these facts and the definition in the construction of the sequence $Z^i$, we can derive the following key inequalities:  
	
	\begin{align}
	\label{thekey}
	&\mathcal{L}(Z^{i+1}) \nonumber\\&= \min_Z  \left[  \mathcal{L}(Z)+\frac{1}{2}\|Z-Z^i\|_{\Fr}^2    \right] \nonumber \\&\leq  \min_{\theta\in [0,1]}\left[  \mathcal{L}(Z^i(\theta))+\frac{1}{2}\|Z^i(\theta)-Z^i\|_{\Fr}^2      \right]\nonumber \\
	&\leq \min_{\theta\in [0,1]}\Bigg[     \mathcal{L}(Z^i)+\theta(\mathcal{L}(Z^\infty)-\mathcal{L}(Z^i)) \\& \quad \quad \quad \quad \quad \quad \quad \quad \quad \quad \quad +\frac{1}{2}\theta^2\|Z^0-Z^\infty\|_{\Fr}^2        \Bigg].\nonumber 
	\end{align}
	The last expression is minimised for $\theta=\theta^i$ where 
	\begin{align}
	\label{expression}
	\theta^i=\min\left(\frac{\mathcal{L}(Z^i)-\mathcal{L}(Z^\infty)}{\|Z^0-Z^\infty\|_{\Fr}^2},1\right).
	\end{align}
	
	(If $\|Z^0-Z^\infty\|_{\Fr}^2=0$, then $Z^i=Z^\infty\quad \forall i$ and there is nothing to prove.)
	
	Recall also that $\theta^i$ is a decreasing sequence (cf. Lemma~\ref{monotone}): if $\theta^i\leq 1$, then $\theta^j\leq 1 \quad \forall j>i$. Suppose $\theta^0= 1$. Then, plugging this back into equation~\eqref{thekey}, we obtain: 
	\begin{align}
	\label{oho}
	\mathcal{L}(Z^1)-\mathcal{L}(Z^\infty)\leq \frac{1}{2}\|Z^0-Z^\infty\|_{\Fr}^2,
	\end{align}
	and therefore $\theta^1\leq \frac{1}{2}$. Thus, in all cases, $\theta^i<1 \quad \forall i\geq 1$. Note also that if $\theta^0=1$, inequality~\eqref{BBB} is satisfied (this follows from inequality~\eqref{oho}).
	
	Now, for $i\geq 1$, we can just use the explicit expression~\eqref{expression} for $\theta$, which, plugged back into equation~\eqref{thekey}, gives: 
	
	\begin{align}
	\mathcal{L}(Z^{i+1})-\mathcal{L}(Z^{i})\leq -\frac{(\mathcal{L}(Z^{i})-\mathcal{L}(Z^{\infty}))^2}{2\|Z^0-Z^\infty\|_{\Fr}^2}.
	\end{align}
	Now, writing $\alpha_i$ for $(\mathcal{L}(Z^{i})-\mathcal{L}(Z^{\infty}))$ (which is a decreasing sequence, as shown by Lemma~\ref{decreasefrob1}) and using the above expression, we obtain
	\begin{align}
	\alpha_i&\geq \frac{\alpha_i^2}{2\|Z^0-Z^\infty\|_{\Fr}^2}+\alpha_{i+1}
	\nonumber\\& \geq \frac{\alpha_i\alpha_{i+1}}{2\|Z^0-Z^\infty\|_{\Fr}^2}+\alpha_{i+1},
	\end{align}
	which yields: 
	\begin{align}
	\alpha_{i+1}^{-1}\geq \frac{1}{2\|Z^0-Z^\infty\|_{\Fr}^2}+\alpha_{i}^{-1}.
	\end{align}
	Summing both sides for the index running from $1$ to $i-1$, we obtain: 
	\begin{align}
	\label{eq:alphaend}
	\alpha_{i}^{-1}\geq \frac{i-1}{2\|Z^0-Z^\infty\|_{\Fr}^2}+\alpha_{1}^{-1}.
	\end{align}
	Since $\theta_1<1$, by definition of $\theta_1$, we obtain $\frac{\alpha_1}{2\|Z^0-Z^\infty\|_{\Fr}^2}\leq \frac{1}{2}$. Plugging this back into equation~\eqref{eq:alphaend}, we obtain: 
	\begin{align}
	\alpha_i^{-1}&\geq  \frac{i-1}{2\|Z^0-Z^\infty\|_{\Fr}^2}+\alpha_{1}^{-1}\nonumber \\ &\geq \frac{i-1}{2\|Z^0-Z^\infty\|_{\Fr}^2}+\frac{1}{\|Z^0-Z^\infty\|_{\Fr}^2}\nonumber \\
	&= \frac{i+1}{2\|Z^0-Z^\infty\|_{\Fr}^2}, \nonumber 
	\end{align}
	which yields inequality~\eqref{BBB} after inverting both sides.

\end{proof}

\textbf{Remark:}
We use the notation $Z^\infty$ to refer to the limit of the sequence of iterates, instead of referring to 'the solution $Z^*$ of the optimization problem~\eqref{theopt}' because the solution is not necessarily unique and $Z^\infty$ may actually depend on the initialization. Indeed, the optimization problem~\eqref{theopt} is \textit{convex} but \textit{not strongly convex}. Thus, there can be several solutions, but each of them corresponds to the same value of the objective function. 
To check that the specific problem~\eqref{theopt} can indeed have several equivalent solutions, consider the following example. $X^{(1)}=Y^{(1)}=\Id$ (so that our algorithm coincides with Softimpute), $m=n=2$, and let the observed entries be $R_{2,1}=R_{1,2}=1$ (thus, $\Omega=\{(2,1);(1,2)\}$. Here are three equivalent solutions to the optimization problem with regularising parameter $\lambda$: 
\begin{align*}
A_-&=\left( \begin{array}{cc}
\lambda-1  & 1-\lambda  \\
1-\lambda  & \lambda -1
\end{array}\right);\\ 
A_+& =\left( \begin{array}{cc}
1-\lambda  & 1-\lambda  \\
1-\lambda   & 1-\lambda 
\end{array}\right); \quad \quad  \text{and} \\
A_0 &=\left( \begin{array}{cc}
0 & 1-\lambda \\
1-\lambda   & 0
\end{array}\right).
\end{align*}

In all three cases, the nuclear norm is $2-2\lambda$, and the value of the objective function is $\lambda(2-2\lambda)+\frac{1}{2}(\lambda^2+\lambda^2)$. It is also easy to check that those are actually solutions of~\eqref{theopt} because performing any extra iteration of the algorithm yields the same matrix: 
for $A_0$, after the imputation step we get the target \begin{align*}
&\left( \begin{array}{cc}
0 & 1 \\
1  & 0
\end{array}\right)\\&=  1\times \left( \begin{array}{c}
0  \\
1 
\end{array}\right) \left( \begin{array}{cc}
1  &0
\end{array}\right)+1 \times \left( \begin{array}{c}
1  \\
0 
\end{array}\right) \left( \begin{array}{cc}
0 &1
\end{array}\right),
\end{align*}
where the second line is the SVD. It is clear from the SVD that applying the singular value thresholding operator will return the matrix $A_0$. Thus, the algorithm converges exactly to $A_0$ in one (zero) iteration(s). 

For $A_+$, note that after the imputation step we get the target

\begin{align*}
& \left( \begin{array}{cc}
1-\lambda  & 1  \\
1   & 1-\lambda 
\end{array}\right)\\
&=(2-\lambda)\times  \left( \begin{array}{c}
1/\sqrt{2}  \\
1/\sqrt{2}
\end{array}\right) \left( \begin{array}{cc}
1/\sqrt{2}  &1/\sqrt{2}
\end{array}\right)\\&+\lambda \times  \left( \begin{array}{c}
-1/\sqrt{2}  \\
1/\sqrt{2}
\end{array}\right) \left( \begin{array}{cc}
1/\sqrt{2}  &- 1/\sqrt{2}
\end{array}\right),
\end{align*}
and it is clear that after applying the SVT operator we obtain 
\begin{align*}
&(2-2\lambda)\times  \left( \begin{array}{c}
1/\sqrt{2}  \\
1/\sqrt{2}
\end{array}\right) \left( \begin{array}{cc}
1/\sqrt{2}  &1/\sqrt{2}
\end{array}\right)\\&
=\left( \begin{array}{cc}
1-\lambda  & 1-\lambda  \\
1-\lambda   & 1-\lambda 
\end{array}\right) =A_+,
\end{align*}
as expected. A similar calculation shows that $A_-$ is also a solution. 

\section{Complexity and runtime analysis}
\label{complexity}

\noindent \textbf{Runtime analysis of SVT operations in Alg.~\ref{OMICimpute}:}
Whilst Algorithms~\ref{OMICimpute} and~\ref{BOMICLUSTERS2} theoretically require $KL$ SVD operations, in many instances of our model class (including BOMIC, OMIC+ and BOMIC+) most of the svd calculations are actually \textit{trivial}. Indeed, consider the target matrix $T=P_{\Omega^\top}(Z^{\old})+R_\Omega$. Note that $M^{(k,l)}:=(X^{(k)})^\top T Y^{(l)}$ is a $d_1^{(k)}\times d_2^{(l)}$ matrix. For instance, if $d_1^{(k)}=1$ or $d_2^{(l)}=1$ (cf. BOMIC+, $k=1$ or $l=1$), this matrix is a vector, making the computation of an SVD unnecessary. In fact, for any combination $(k,l)$ such that $d_1^{(k)}+d_2^{(l)}$ is small, it is easy to compute the small matrix $(X^{(k)})^\top T Y^{(l)}$ and perform its SVD through standard methods.

Figure~\ref{fig:my_label} is a graph which compares the runtimes of SVD calculations in Softimpute and our algorithm~\ref{OMICimpute}.
For each datapoint, we randomly selected a matrix with the following parameters $\text{rank} \in \{5,6,\cdots,10\};
m \in \{100,101,\cdots,1000\}; 
d^{(1)}_1=d^{(1)}_2 \in \{2,3,\cdots,\lceil 0.1 m \rceil\}$. More specifically, the users and items were each divided into $d^{(1)}_1=d^{(1)}_2$ communities and the matrices $X^{(1)},X^{(2)},X^{(3)},Y^{(1)},Y^{(2)},Y^{(3)}$ were constructed according to the standard procedure for BOMIC+ (see. Subsection~\ref{bomic+}). The matrices $M^{(k,l)}$ were chosen with iid Gaussian entries.

We then compared, on the one hand (left part of the figure) the following two operations: 
\begin{itemize}
	\item[1] Performing the SVD of the full matrix $R=\sum_{k,l}X^{(k)}M^{(k,l)}(Y^{(k)})^\top$,
	\item[2] Performing  the SVDs of \textit{all nine matrices} $M^{(k,l)}$ for $k,l\leq 3$;	\end{itemize}
and on the other hand (right part of the figure), the following two operations:

\begin{itemize}
	\item[1] One full iteration of the Softimpute algorithm, including imputation and singular value thresholding operator. 
	\item[2] One full iteration of our algorithm, including the imputation and the application of the generalized singular value thresholding operator. 
\end{itemize}

\begin{figure}
	\centering
	\includegraphics[width=0.97 \linewidth]{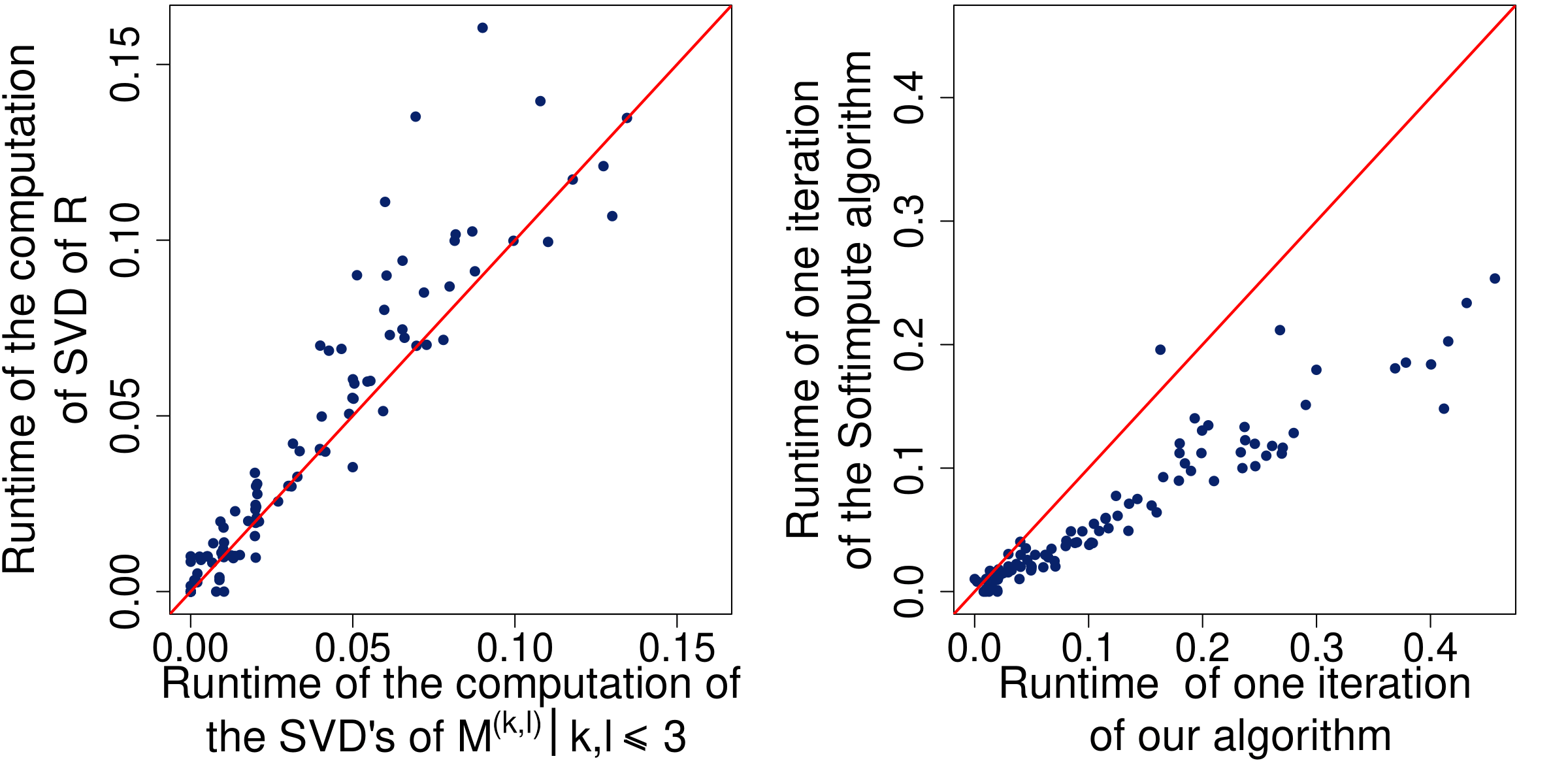}
	\caption{Runtime comparison of main computations required for one iteration of Softimpute and OMIC. The red line is the identity.}
	\label{fig:my_label}
\end{figure}
As we can see from the figure, the computational burden of all 9  SVDs required in our algorithm is not significantly bigger than that of the single SVD required in the SoftImpute implementation. Furthermore, although one full iteration of our algorithm is slower than one full iteration of the softimpute algorithm due to extra multiplication steps, this appears to be the case only by a small constant factor.

\noindent \textbf{Formal complexity analysis:}
We provide an efficient implementation for the special cases BOMIC, OMIC+, and BOMIC+. In those three cases, the number of iterations required at each step of both  Algorithm~\ref{BOMICLUSTERS2} as well as the SVD calculation (Algorithm~\ref{SVDCalc}) depend on the many practicalities related to various warm starts applied in both cases. However, it is possible to write down the complexity of performing one iteration.

At each iteration of Algorithm~\ref{BOMICLUSTERS2}, the key step is the SVT operation using Algorithm~\ref{SVDCalc} (the only other operation being an assignment of $O(|\Omega|)$ entries). For each iteration inside Algorithm~\ref{SVDCalc}, the complexity can be computed from the following operations which are each required a fixed number of times (here as usual, $r$ is the fixed maximum rank set as a hyperparameter):
\begin{itemize}
	\item Multiplying each column of a matrix in $\R^{m\times r}$ or $\R^{n\times r}$ by a different constant (e.g. line 12,17). Cost: $O((m+n)r).$
	\item Computing the SVD of a $\R^{m\times r}$ or $\R^{n\times r}$ matrix (e.g., lines 16,26). Cost: $O(r^3+(m+n)r^2)$.
	\item Performing projections onto the spaces corresponding to $X^{(k)}$ or $Y^{(l)}$ via the procedure from line 1. Cost: $O(m+n)$.
	\item Multiplying $r$ vectors by the current target (see lines 14,19, 24). Cost: $O(|\Omega|r+(m+n)r^2)$.
\end{itemize}
Since $r\leq m+n$, this yields an overall complexity of $O(KL[|\Omega|r+(m+n)r^2])$. Note that $KL\leq 9$, so that the complexity is $O(9[|\Omega|r+(m+n)r^2])=O(|\Omega|r+(m+n)r^2)$, the same as the SoftImpute algorithm~\cite{softimpute}.

\section{Proofs of generalization bounds}
\textbf{Notation:} In this section, we assume the entries are sampled with i.i.d. noise, so that observations of entry $R_{i,j}$ are of the form $R_{i,j}+\delta_{i,j}$ for $\delta_{i,j}\sim \Delta_{i,j}$ for some noise distribution $\Delta_{i,j}$\footnote{Furthermore, $R_{i,j}$  and $\Delta_{i,j}$ are defined so that $R_{i,j}=\argmin_y \mathbb{E}_{\delta_{i,j}}(\ell(R_{i,j}+\delta_{i,j},y))$. }. Thus, $N$ i.i.d. observations indexed by $\alpha\in \{1,2,\ldots,N\}$ are denoted by $R_{i_\alpha,j_\alpha}+\delta_\alpha$ where $(i_\alpha,j_\alpha)$ is the $\alpha$'th i.i.d. choice of entry, and each $\delta_\alpha$ is drawn from $\Delta_{i_\alpha,j_\alpha}$ independently. The loss function $\ell:\mathbb{R}\times \mathbb{R}\rightarrow \mathbb{R}^+$ is bounded by a constant $B$, with Lipschitz constant bounded by $L_{\ell}$\footnote{These conditions are satisfied, for instance, for a hinge loss which could be used to estimate the probability of predicting within a given accuracy, or for the squared loss if one additionally assumes a fixed upper bound on all entries and predictions (which is a reasonable assumption in practice).}. For all $k\leq K, l\leq L, i\leq m,j\leq n$, we will write $\mathbf{x}^k_i$ (resp. $\mathbf{y}^k_j$ ) for the $i$th row (resp. $j$th column) of the matrix $X^{(k)}$ (resp. $Y^{(l)}$ ),   $\mathcal{X}^{(k)}$ for $\max_{i=1}^m \|\mathbf{x}^k_i\|_{2}=\max_{i=1}^m \|X^{(k)}_{i,\nbull}\|_{2}$ and $\mathcal{Y}^{(l)}$  for $\max_{i=1}^n \|\mathbf{y}^l_i\|_{2}$. 
For a predictor $f:\{1,2,\ldots,m\}\times \{1,2,\ldots,n\}\rightarrow \mathbb{R}$, we will write $\mathcal{R}(f)$ for the expected risk $\mathbb{E}_{(i,j)\sim\mathcal{D}}(\ell(f(i,j),R_{i,j}+\delta_{i,j})$ and $\hat{\mathcal{R}}(f)$ for the empirical risk  $(1/N)\sum_{\alpha=1}^N\ell(f(i,j),R_{i_\alpha,j_\alpha}+\delta_\alpha)$. Table~\eqref{thiss} summarizes all the notations used in this appendix and in the paper.

\subsection{Proof of bounds in the distribution-free case}

First, let us recall the following lemma from~\cite{mostrelated,ReallyUniform1}.
\begin{lemma}
	\label{mostcomplex}
	Let $\ell$ be a loss function bounded by $B$ and with Lipschitz constant bounded by $L$. Suppose we are given a matrix $R\in \mathbb{R}^{m\times n}$, which is observed with i.i.d. noise, so that observing entry $(i,j)$ results in an output of $R_{i,j}+\delta$ where $\delta\sim \Delta_{i,j}$ where the $\Delta_{i,j}$ are distributions. Let $F_{\mathcal{M}}$ be the set of matrices $\tilde{R}$ with $\|\tilde{R}\|_{*}\leq \mathcal{M}$. Let us write the data-dependent Rademacher complexity for $N$ samples indexed by $\alpha\in\{1,2,\ldots,N\}$ by  $\rad_N(F_{\mathcal{M}})=\mathbb{E}\left(\sup_{\tilde{R}\in F_{\mathcal{M}}}\frac{1}{N}\sum_{\alpha=1}^N\sigma_{\alpha}\ell(R_{(i_\alpha,j_\alpha)}+\delta_\alpha,\tilde{R})\right)$, where the $\sigma_\alpha$ are independent Rademacher random variables, and the $(i_\alpha,j_\alpha)$ are entries sampled independently. 
	
	We have the following bound on the expected complexity $\rad=\mathbb{E}_{\Omega}(\rad_N(F_{\mathcal{M}}))$: 
	\begin{align}
	&\rad=\mathbb{E}_{\Omega}(\rad_N(F_{\mathcal{M}}))\nonumber\leq \sqrt{\frac{9\mathcal{M}B\mathcal{C}L(\sqrt{d_1}+\sqrt{d_2})}{N}}.
	\end{align}
	Here, $\mathcal{C}$ is the universal constant from~\cite{Rafal2005}.
\end{lemma}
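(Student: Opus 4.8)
The plan is to start from the symmetrized quantity in the definition of $\rad_N(F_{\mathcal M})$ and dispose of the loss and of the trace-norm ball in turn, finally reducing everything to the expected spectral norm of a random matrix, which is exactly what the estimate of~\cite{Rafal2005} controls. Writing $\ell_\alpha(\tilde R)=\ell(R_{(i_\alpha,j_\alpha)}+\delta_\alpha,\tilde R_{i_\alpha,j_\alpha})$, the object to bound is $\sup_{\tilde R\in F_{\mathcal M}}\frac1N\sum_\alpha\sigma_\alpha\ell_\alpha(\tilde R)$. The first observation is that each summand is simultaneously $L$-Lipschitz in the single entry it reads off and bounded by $B$; it is this \emph{double} control that is responsible for the square-root shape of the final bound, since a naive contraction would peel off only the factor $L$ and leave a bound that is \emph{linear} in $\mathcal M$, which is weaker in the regime where $\mathcal M$ is large.

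Second, I would linearise. Because the prediction at sample $\alpha$ is a linear functional of $\tilde R$, namely $\langle \tilde R,A_\alpha\rangle$ for a fixed rank-one matrix $A_\alpha$ (the outer product of the two side-information vectors attached to the pair $(i_\alpha,j_\alpha)$), the duality between the nuclear and spectral norms gives
\begin{align*}
\sup_{\|\tilde R\|_*\le\mathcal M}\Big\langle \tilde R,\sum_\alpha\sigma_\alpha A_\alpha\Big\rangle=\mathcal M\,\Big\|\sum_\alpha\sigma_\alpha A_\alpha\Big\|_{\sigma}.
\end{align*}
The matrix $G=\sum_\alpha\sigma_\alpha A_\alpha$ lives in the $d_1\times d_2$ feature space, so once the loss has been removed the problem collapses to bounding $\mathbb E_{\Omega,\sigma}\|G\|_{\sigma}$. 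This is precisely the quantity estimated in~\cite{Rafal2005}, whose bound supplies the universal constant $\mathcal C$ and the dimensional factor $\sqrt{d_1}+\sqrt{d_2}$.

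Rather than a one-shot contraction, the loss is best removed by a chaining / Dudley entropy-integral argument. The increments of the process $\tilde R\mapsto\frac1N\sum_\alpha\sigma_\alpha\ell_\alpha(\tilde R)$ are sub-Gaussian with variance proxy controlled by $\min\{L^2\,|\tilde R_{i_\alpha,j_\alpha}-\tilde R'_{i_\alpha,j_\alpha}|^2,\,B^2\}$ for two competitors $\tilde R,\tilde R'$: the Lipschitz constant governs the metric entropy of $F_{\mathcal M}$ in the empirical $L_2$ metric through the spectral estimate above, while the bound $B$ caps the diameter over which the entropy integral is taken. Feeding the $\sqrt{d_1}+\sqrt{d_2}$ estimate into the entropy integral and truncating it at the level $B$ produces the geometric-mean combination $\sqrt{\mathcal M B\mathcal C L(\sqrt{d_1}+\sqrt{d_2})/N}$, with the explicit constant $9$ absorbing the numerical factors from the contraction/chaining steps and from~\cite{Rafal2005}.

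The main obstacle is the spectral-norm estimate for $G$: its entries are neither independent nor identically placed (sampled positions may repeat, and the per-row and per-column counts are themselves random), so invoking~\cite{Rafal2005} cleanly requires first conditioning on the multiset $\Omega$ of sampled positions, bounding $\mathbb E_\sigma\|G\|_{\sigma}$ for fixed positions, and only then taking the expectation over $\Omega$ while keeping the $\sqrt{d_1}+\sqrt{d_2}$ dependence sharp. The secondary delicate point is the bookkeeping of how boundedness and Lipschitzness interact, so that the final estimate comes out in the square-root form $\sqrt{\,\cdot\,/N}$ rather than the looser linear-in-$\mathcal M$ bound that a direct application of the contraction principle would yield.
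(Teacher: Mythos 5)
First, for calibration: the paper does not actually prove this lemma --- it is introduced with ``let us recall the following lemma from~\cite{mostrelated,ReallyUniform1}'' and imported as a black box, so the relevant comparison is with the proof in those sources (the Shamir--Shalev-Shwartz distribution-free trace-norm argument). Your outline gets several structural points right: the duality $\sup_{\|\tilde R\|_*\le\mathcal M}\langle\tilde R,G\rangle=\mathcal M\|G\|_{\sigma}$, the role of Lata{\l}a's estimate from~\cite{Rafal2005} as the source of the $\mathcal{C}(\sqrt{d_1}+\sqrt{d_2})$ factor, and --- crucially --- the observation that a one-shot contraction would leave a bound linear in $\mathcal M$, which is too weak. (A minor slip: in this lemma there are no side-information vectors; $A_\alpha$ is just $e_{i_\alpha}e_{j_\alpha}^{\top}$ and $d_1\times d_2$ is the ambient matrix dimension.)

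The genuine gap is in the step where everything happens: how $B$ and $L$ are combined. Your proposed mechanism --- a Dudley entropy integral whose diameter is truncated at $B$ --- is not the argument in the cited source and is not substantiated; with the only generic entropy estimate available for the trace-norm ball on an arbitrary sample configuration (a Sudakov-type bound $\log N(\delta)\lesssim N\,w^2/\delta^2$ coming from the worst-case width $w$), the truncated integral still returns a bound linear in $\mathcal M$ up to logarithms, not the geometric mean $\sqrt{\mathcal M B\mathcal C L(\sqrt{d_1}+\sqrt{d_2})/N}$. Relatedly, your ``main obstacle'' paragraph mislocates the difficulty: applying Lata{\l}a after conditioning on the positions is routine (the aggregated signs at distinct entries are independent), but for an adversarial sampling distribution --- say, all mass on a single entry or a single column --- one has $\mathbb{E}_{\Omega}\mathbb{E}_{\sigma}\|G\|_{\sigma}=\Theta(\sqrt{N})$ with \emph{no} dimensional improvement, so the $\sqrt{d_1}+\sqrt{d_2}$ dependence cannot be ``kept sharp'' by careful conditioning; it is simply false at that stage, and contraction applied to such a sample provably loses the claimed bound. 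The actual proof works by a threshold/splitting argument on the sample (equivalently, a smoothing of the sampling distribution): the boundedness $B$ is invoked \emph{before} any contraction to pay for the samples falling in over-represented rows and columns, contraction plus duality plus Lata{\l}a is applied only to the well-spread remainder (whose row and column counts are controlled by the threshold, which is where $\sqrt{d_1}+\sqrt{d_2}$ enters), and optimizing the threshold produces the geometric mean of $B$ and $L\mathcal M(\sqrt{d_1}+\sqrt{d_2})/N$. Without that splitting idea your sketch cannot be completed into a proof of the stated bound.
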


Using this, we can show the following lemma for side information by absorbing the variation between entries corresponding to the same communities into the "noise" of an auxiliary problem to which we apply Lemma~\ref{mostcomplex}.

\begin{lemma}
	\label{lem:absorbasnoise}
	Let $X\in \mathbb{R}^{m\times A}$ and  $Y\in \mathbb{R}^{n\times B}$ be auxiliary matrices whose columns are indicator functions of distinct sets forming partitions $\{c_1,c_2,\ldots,c_A\}$  and $\{s_1,\ldots,s_B\}$ of $\{1,2,\ldots,n\}$ and $\{1,2,\ldots,m\}$ respectively. Set $t>0$ and consider the function class $\mathcal{F}_t:=\left\{ XMY^\top \big| \|M\|_*\leq t \right\}$. The Rademacher complexity $\mathcal{F}_N(\mathcal{F}_t)$ satisfies $$\mathcal{R}_N(\mathcal{F}_t)\leq \sqrt{\frac{9tC\mathcal{C}L\left[\sqrt{a}+\sqrt{b}\right]}{N}}.$$
	In particular, if we consider instead $\mathcal{G}_t:=\left\{ XMY^\top \big| \rank(M)\leq r \right\}$, we obtain: 
	$$\mathcal{R}_N(\mathcal{G}_t)\leq \sqrt{\frac{9\sqrt{Cabr}B\mathcal{C}L\left[\sqrt{a}+\sqrt{b}\right]}{N}},$$
	where $C$ is a bound on the predicted entries. 
	
\end{lemma}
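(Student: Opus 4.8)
The plan is to reduce the Rademacher complexity of $\mathcal{F}_t$ to that of an ordinary nuclear-norm ball on an $a\times b$ matrix and then invoke Lemma~\ref{mostcomplex}. Write $f:\{1,\dots,m\}\to\{1,\dots,a\}$ and $g:\{1,\dots,n\}\to\{1,\dots,b\}$ for the maps sending each index to the unique part of its partition. Since the columns of $X$ and $Y$ are indicator functions, every $M\in\mathbb{R}^{a\times b}$ satisfies $(XMY^\top)_{i,j}=M_{f(i),g(j)}$; that is, each predictor in $\mathcal{F}_t$ is constant on the blocks $f^{-1}(p)\times g^{-1}(q)$, where it equals $M_{p,q}$. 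Consequently the predictor $(i,j)\mapsto (XMY^\top)_{i,j}$ factors through the community map $(i,j)\mapsto(f(i),g(j))$, and $\mathcal{F}_t$ is in exact bijection with the reduced nuclear-norm ball $\{\tilde R\in\mathbb{R}^{a\times b}:\|\tilde R\|_*\le t\}$ via $\tilde R=M$.

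First I would push the sampling distribution forward through $(f,g)$: drawing $(i_\alpha,j_\alpha)\sim\mathcal{D}$ and recording $(p_\alpha,q_\alpha)=(f(i_\alpha),g(j_\alpha))$ yields i.i.d.\ draws of reduced entries, while the original noisy targets $R_{i_\alpha,j_\alpha}+\delta_\alpha$ serve as noisy observations of the reduced matrix. The key point is that the within-block variation of the ground truth is folded into the effective noise: conditionally on a reduced entry $(p,q)$, the observed value $R_{i_\alpha,j_\alpha}+\delta_\alpha$ is a well-defined i.i.d.\ random variable, so this is a legitimate instance of the noisy-observation model underlying Lemma~\ref{mostcomplex}, with some reduced ground truth and per-entry noise distribution. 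Because the quantity seen by the loss depends on $M$ only through $M_{p_\alpha,q_\alpha}$, and the supremum runs over the same set of matrices, for each fixed sample the data-dependent Rademacher complexity of $\mathcal{F}_t$ coincides with that of the reduced nuclear-norm ball; taking expectations and applying Lemma~\ref{mostcomplex} with $\mathcal{M}=t$ and reduced dimensions $d_1=a$, $d_2=b$ gives the first claimed bound.

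For the rank-constrained class $\mathcal{G}_t$ I would convert the rank constraint into a nuclear-norm constraint. If $\rank(M)\le r$ and the predicted entries are bounded by $C$, then $|M_{p,q}|=|(XMY^\top)_{i,j}|\le C$ for any representative $i,j$, whence $\|M\|_{\Fr}\le C\sqrt{ab}$ and $\|M\|_*\le\sqrt{r}\,\|M\|_{\Fr}\le C\sqrt{abr}$. Thus $\mathcal{G}_t\subseteq\mathcal{F}_{C\sqrt{abr}}$, and since Rademacher complexity is monotone in the radius, substituting $t=C\sqrt{abr}$ into the first bound yields the second (up to routine bookkeeping of the constants and powers of $C$).

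The main obstacle is conceptual rather than computational: one must check that replacing the genuine completion problem on $R$ by the reduced problem on $M$ does not violate the hypotheses of Lemma~\ref{mostcomplex}. Concretely, the effective noise in the reduced problem now carries the entire within-community spread of $R$, and one has to verify that (i) this effective noise is i.i.d.\ across repetitions of a given reduced entry, and (ii) the loss bound $B$ and Lipschitz constant $L$ still apply. Both hold because the hypotheses of Lemma~\ref{mostcomplex} constrain only $\ell$, allow arbitrary per-entry noise distributions, and its bound is independent of the reduced ground truth. Everything else reduces to the standard estimate $\|M\|_*\le\sqrt{\rank(M)}\,\|M\|_{\Fr}$.
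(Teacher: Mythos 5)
Your proposal is correct and follows essentially the same route as the paper: the paper's (one-sentence) proof likewise passes to the reduced $a\times b$ problem, absorbs the within-community variation of $R$ into the per-entry noise distributions allowed by Lemma~\ref{mostcomplex}, and applies that lemma with $\mathcal{M}=t$, $d_1=a$, $d_2=b$; your treatment of $\mathcal{G}_t$ via $\|M\|_*\le\sqrt{r}\,\|M\|_{\Fr}\le C\sqrt{abr}$ matches the standard conversion the paper uses elsewhere. You simply spell out the measure-theoretic bookkeeping (the pushforward through the community maps and the identification of the empirical Rademacher averages) that the paper leaves implicit.
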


\begin{proof}
	This follows from Lemma~\ref{mostcomplex} applied to the modified problem where for $u\leq A,v\leq B$, the observations $R_{u,v}$ are distributed according to the distribution of $R_{i,j}+\delta_{i,j}$ where $(i,j)$ is drawn from $\mathcal{D}$ conditioned on $i\in c_u, j\in c_v$ (note that Lemma~\ref{mostcomplex} allows for the observations of $R_{i,j}$ to be perturbed by random variables with distributions $\Delta_{i,j}$ conditioned on $(i,j)$, the differences between the values of the ground truth matrix at different pairs $(i,j)$ where the communities of $i$ and $j$ are fixed can be absorbed into this perturbation).
\end{proof}

\begin{proposition}
	\label{prop:hadamardetc}
	Let $A\in \mathbb{R}^{m\times n}$ be a matrix and let $v\in \R^{m}$ and $w\in \R^{n}$ be two vectors. We have 
	\begin{align}
	\|vw^\top \odot A\|_*\leq  \max_{i,j} |v_i||w_j|\|A\|_*
	\end{align}
	where $\odot$ denotes the Hadamard (entry wise) product.
\end{proposition}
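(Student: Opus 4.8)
The plan is to rewrite the Hadamard product $vw^\top \odot A$ as a two-sided diagonal rescaling of $A$, and then invoke the standard submultiplicativity of the nuclear norm under left and right multiplication by the spectral norm.

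First I would observe that $vw^\top \odot A = \diag(v)\, A\, \diag(w)$, since the $(i,j)$ entry of either side equals $v_i w_j A_{i,j}$. Writing $D_v = \diag(v)$ and $D_w = \diag(w)$, the problem thus reduces to bounding $\|D_v A D_w\|_*$.

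The key ingredient is the inequality $\|B A C\|_* \leq \|B\|_\sigma \|A\|_* \|C\|_\sigma$, valid for any conformable $B,C$, where $\|\cdot\|_\sigma$ denotes the spectral norm. To establish it I would use the SVD $A = \sum_k \sigma_k u_k z_k^\top$ with orthonormal families $\{u_k\}$, $\{z_k\}$, so that $BAC = \sum_k \sigma_k (Bu_k)(C^\top z_k)^\top$. By the triangle inequality for $\|\cdot\|_*$ together with the fact that a rank-one matrix $pq^\top$ satisfies $\|pq^\top\|_* = \|p\|_2\|q\|_2$, we get
\begin{align*}
\|BAC\|_* &\leq \sum_k \sigma_k \|Bu_k\|_2\,\|C^\top z_k\|_2 \\
&\leq \|B\|_\sigma\,\|C\|_\sigma \sum_k \sigma_k = \|B\|_\sigma\,\|C\|_\sigma\,\|A\|_*,
\end{align*}
where we used $\|u_k\|_2 = \|z_k\|_2 = 1$.

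Finally I would apply this with $B = D_v$ and $C = D_w$. Since the spectral norm of a diagonal matrix equals the largest absolute value among its diagonal entries, we have $\|D_v\|_\sigma = \max_i |v_i|$ and $\|D_w\|_\sigma = \max_j |w_j|$, and hence $\|D_v\|_\sigma\|D_w\|_\sigma = \max_{i,j}|v_i||w_j|$, which yields the claim. The only step requiring any genuine argument is the submultiplicativity inequality sketched above; the rest is bookkeeping, so I do not anticipate a substantial obstacle.
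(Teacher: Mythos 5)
Your proof is correct, but it takes a genuinely different route from the paper's. You first identify $vw^\top\odot A=\diag(v)\,A\,\diag(w)$ and then invoke the H\"older-type submultiplicativity $\|BAC\|_*\leq\|B\|_{\sigma}\|A\|_*\|C\|_{\sigma}$, which you establish directly from the SVD of $A$ via the triangle inequality for $\|\cdot\|_*$ and the identity $\|pq^\top\|_*=\|p\|_2\|q\|_2$; all of these steps check out. The paper instead works with the variational characterization $\|Z\|_*=\min_{BC^\top=Z}\|B\|_{\Fr}\|C\|_{\Fr}$ (its Lemma on the nuclear norm as a minimum over factorizations, which it needs elsewhere anyway): it replaces a factorization $BC^\top=A$ by the factorization $(\diag(v)B)(\diag(w)C)^\top$ of $vw^\top\odot A$ and uses $\|\diag(v)B\|_{\Fr}\leq\max_i|v_i|\,\|B\|_{\Fr}$. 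The two arguments are of comparable length; yours yields as a by-product the general and reusable inequality $\|BAC\|_*\leq\|B\|_{\sigma}\|A\|_*\|C\|_{\sigma}$ for arbitrary conformable $B,C$ (not just diagonal scalings), while the paper's stays entirely within the factorization framework it has already set up and avoids introducing the spectral norm. Either proof would serve.
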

\begin{proof}
	
	By an equivalent formulation of the nuclear norm~\ref{nuclearfro} (see. also ~\cite{Srebro})
	\begin{align*}
	&\|vw^\top \odot A\|_*\\&= \min\big (\|B\|_{\Fr}\|C\|_{\Fr}\big | \\ & \quad\quad\quad \quad \quad \quad BC^\top =vw^\top \odot A\big) \\
	&\leq \min\big (\|\diag(v)B\|_{\Fr}\|\diag(w)C\|_{\Fr}\big |\\ & \quad \quad\quad\quad \quad \quad  \diag(v)B(\diag(w)C)^\top=vw^\top \odot A\big)\\
	&=\min\big (\|\diag(v)B\|_{\Fr}\|\diag(w)C\|_{\Fr}\big |\\ & \quad \quad\quad\quad \quad \quad (vw^\top) \odot (BC^\top)=vw^\top \odot A\big)\\
	&\leq \min\big (\|\diag(v)B\|_{\Fr}\|\diag(w)C\|_{\Fr}\big | \\ & \quad \quad\quad\quad \quad \quad  BC^\top= A\big)\\
	&\leq \min\big (\max_i |v_i|\|B\|_{\Fr}\max_j |w_j|\|\diag(w)C\|_{\Fr}\big | \\ & \quad\quad\quad\quad \quad \quad  BC^\top= A\big)\\
	&=\max_{i,j} |v_i||w_j|\|A\|_*,
	\end{align*}
	as expected.

\end{proof}

We are now in a position to present the main result.

\begin{theorem}
	Consider the OMIC+ setting from Section~\ref{omic+} (in particular, $K=L=2$).  Let $\mathcal{K}$ denote the maximum ratio between the sizes of any two user or item communities. 
	Choose some $\mathcal{M}_{k,l}$  and $C_{k,l}$ such that $
	\|R^{(k,l)}\|_{*}\leq \mathcal{M}_{k,l}$  and $\max_{i,j}|R^{(k,l)}_{i,j}|\leq C_{k,l}$ for all $(k,l)$. 
	Let $\hat{f}$ be the solution to the optimization problem \begin{align}
	\label{optim}
	&\minimize \quad \hat{\mathcal{R}}(f) \> \text{s.t.}  \>\forall  k,l,\>  \nonumber \\&f=\sum_{k,l} (X^{(k)}M^{(k,l)}(Y^{(l)})^{\top}); \>  \|M^{(k,l)}\|_{*} \leq \mathcal{M}_{k,l}; \>\nonumber \\&\text{and}\> \|X^{(k)}M^{(k,l)}(Y^{(l)})^{\top}\|_\infty \leq C_{k,l}\quad \forall k,l,j,i
	\end{align}
	
	\label{CommunityPrecise2}
	With probability $\geq 1-\delta$ over the draw of the training set, the solution to the optimization problem~\eqref{optim}  satisfies 
	\begin{align}
	\label{eq:finalbounds1}
	&	\mathcal{R}(\hat{f})\leq 2L_\ell\sqrt{\frac{9\mathcal{C} }{N}}\bigg(\frac{1}{\sqrt{c}}\sqrt{C_{1,1}\mathcal{M}_{1,1}}\left[\sqrt{a}+\sqrt{b}\right]^{1/2}\nonumber \\& +\frac{1}{\sqrt[4]{c}}\sqrt{C_{1,2}\mathcal{M}_{1,2}}\left[\sqrt{a}+\sqrt{n}\right]^{1/2}\nonumber \\&+\frac{1}{\sqrt[4]{c}}\sqrt{C_{2,1}\mathcal{M}_{2,1}}\left[\sqrt{m}+\sqrt{b}\right]^{1/2}\nonumber \\
	&+\sqrt{C_{2,2}\mathcal{M}_{2,2}}\left[\sqrt{m}+\sqrt{n}\right]^{1/2}\bigg)\nonumber \\
	&+2B\sqrt{\frac{\log(1/\delta)}{2M}} +\mathcal{E}.
	\end{align}
	
	Expressed in terms of matrix ranks instead, we obtain: 
	\begin{align}
	\label{eq:finalbounds}
	&	\mathcal{R}(\hat{f})\leq 2L_\ell\sqrt{\frac{9\mathcal{C} }{N}}\bigg(\sqrt{\mathcal{K}}C_{1,1}\sqrt[4]{abr_{1,1}}\left[\sqrt{a}+\sqrt{b}\right]^{1/2}\nonumber \\& \quad \quad \quad \quad \quad  +\sqrt[4]{\mathcal{K}}C_{1,2}\sqrt[4]{anr_{1,2}}\left[\sqrt{a}+\sqrt{n}\right]^{1/2}\nonumber \\&\quad \quad \quad \quad  \quad +\sqrt[4]{\mathcal{K}}C_{2,1}\sqrt[4]{mbr_{2,1}}\left[\sqrt{m}+\sqrt{b}\right]^{1/2}\nonumber \\&\quad \quad \quad \quad \quad 
	+C_{2,2}\sqrt[4]{mnr_{2,2}}\left[\sqrt{m}+\sqrt{n}\right]^{1/2}\bigg)\nonumber \\
	&\quad \quad \quad \quad \quad +2B\sqrt{\frac{\log(1/\delta)}{2M}} +\mathcal{E}.
	\end{align}
	(Here $\mathcal{C}$ is an absolute constant and $c$ is the number of elements in the smallest community. )
\end{theorem}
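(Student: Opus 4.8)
The plan is to run the standard symmetrization argument for the empirical risk minimizer $\hat f$ of \eqref{optim} and then to carefully bound the Rademacher complexity of the constrained hypothesis class $\mathcal{F}$ of predictors $\sum_{k,l}X^{(k)}M^{(k,l)}(Y^{(l)})^\top$. First I would note that the ground truth $R$ is itself feasible for \eqref{optim} (its components $R^{(k,l)}$ satisfy the trace-norm bounds $\mathcal{M}_{k,l}$ and the entrywise bounds $C_{k,l}$ by construction), so by optimality $\hat{\mathcal{R}}(\hat f)\le\hat{\mathcal{R}}(R)$; a two-sided Hoeffding/McDiarmid concentration of $\hat{\mathcal{R}}(R)$ around $\mathcal{R}(R)=\mathcal{E}$ then yields the $2B\sqrt{\log(1/\delta)/(2N)}$ and $\mathcal{E}$ terms, and the remaining excess risk is controlled by $\rad(\ell\circ\mathcal{F})$.

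The crucial structural step is to peel off the loss by Ledoux--Talagrand contraction, $\rad(\ell\circ\mathcal{F})\le L_\ell\,\rad(\mathcal{F})$, and then to split the \emph{loss-free} complexity exactly as a sum over the four index pairs, $\rad(\mathcal{F})=\sum_{k,l}\rad(\mathcal{F}_{k,l})$. This equality (not merely an inequality) holds because the four latent matrices $M^{(k,l)}$ are unconstrained relative to one another, so the supremum of the sum separates; the mutual Frobenius-orthogonality of the component subspaces (Proposition \ref{prop:unique}) is what guarantees the four terms are genuinely independent degrees of freedom. It then remains to bound each single-term complexity $\rad(\mathcal{F}_{k,l})$ using the entry-bounded trace-norm Rademacher estimates behind Lemmas \ref{mostcomplex} and \ref{lem:absorbasnoise}; the $\sqrt{C_{k,l}\mathcal{M}_{k,l}}$ shape of the four summands in \eqref{eq:finalbounds1} is exactly the Srebro--Shraibman-type interplay between the trace-norm radius $\mathcal{M}_{k,l}$ and the entry bound $C_{k,l}$, while the $L_\ell$ outside the square root is the contraction factor.

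For the fully specific term $(2,2)$, orthonormality of the columns of $X^{(2)},Y^{(2)}$ gives $\|X^{(2)}M^{(2,2)}(Y^{(2)})^\top\|_*=\|M^{(2,2)}\|_*\le\mathcal{M}_{2,2}$, so $\mathcal{F}_{2,2}$ embeds into the entry-bounded trace-norm ball in $\mathbb{R}^{m\times n}$ and Lemma \ref{mostcomplex} applies directly with $d_1=m,d_2=n$, producing the $[\sqrt m+\sqrt n]^{1/2}$ summand. The three community terms require the absorb-as-noise reduction of Lemma \ref{lem:absorbasnoise}, preceded by a normalization bookkeeping step: writing each normalized indicator block as $X^{(1)}=\bar X^{(1)}D_X^{-1}$ with $\bar X^{(1)}$ the raw indicator matrix and $D_X=\diag(\sqrt{|c_u|})$, I move the diagonal into the latent matrix and control its trace norm via Proposition \ref{prop:hadamardetc}, which shows that normalizing multiplies the effective radius by $1/\sqrt{c_X}$ per community side. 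For $(1,1)$ both sides are rescaled, so the radius becomes $\mathcal{M}_{1,1}/\sqrt{c_Xc_Y}$ and Lemma \ref{lem:absorbasnoise} applies with $a$ user- and $b$ item-communities, giving the $\tfrac1{\sqrt c}\sqrt{C_{1,1}\mathcal{M}_{1,1}}[\sqrt a+\sqrt b]^{1/2}$ summand. For the mixed terms $(1,2)$ and $(2,1)$ only one side carries community structure, and I would treat the full-resolution side as a partition into \emph{singletons} (identity indicator matrix), absorbing the arbitrary orthonormal $Y^{(2)}$ (resp. $X^{(2)}$) into the latent matrix without changing its trace norm; Lemma \ref{lem:absorbasnoise} then runs with $b$ replaced by $n$ (resp. $a$ by $m$) and only one factor $1/\sqrt{c}$, i.e. an overall $1/\sqrt[4]{c}$ after the square root. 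Summing the four bounds gives \eqref{eq:finalbounds1}; the rank version \eqref{eq:finalbounds} follows by $\mathcal{M}_{k,l}=\|R^{(k,l)}\|_*\le\sqrt{r_{k,l}}\|R^{(k,l)}\|_{\Fr}$ together with $\|R^{(k,l)}\|_{\Fr}=\|X^{(k)}R^{(k,l)}(Y^{(l)})^\top\|_{\Fr}\le\sqrt{mn}\,C_{k,l}$, converting the $c$-factors into $\mathcal{K}$-factors through $m\le\mathcal{K}ca$ and $n\le\mathcal{K}cb$.

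I expect the main obstacle to be the mixed terms together with the normalization bookkeeping. Lemma \ref{lem:absorbasnoise} is stated only for two-sided indicator partitions, so one must justify, for $(1,2)$ and $(2,1)$, both that the full-resolution side can be encoded as singleton communities and that the arbitrary orthonormal basis on that side is absorbed into the latent matrix with no inflation of the trace norm, while simultaneously tracking through Proposition \ref{prop:hadamardetc} exactly how the column normalization rescales the radius -- this is what pins down the delicate $1/\sqrt c$ versus $1/\sqrt[4]{c}$ (and eventually $\sqrt{\mathcal{K}}$ versus $\sqrt[4]{\mathcal{K}}$) exponents, which are easy to get wrong. A smaller but real point is ensuring the exact split of the loss-free complexity into four independent per-term pieces, which is why the contraction step must precede the decomposition.
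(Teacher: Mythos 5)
Your proposal is correct and follows essentially the same route as the paper's proof: Talagrand contraction followed by subadditivity of Rademacher complexity over the four component classes, the absorb-as-noise reduction of Lemma~\ref{lem:absorbasnoise} for the community-indexed terms, the Hadamard-product bound of Proposition~\ref{prop:hadamardetc} to account for column normalization (yielding the $c^{-1/2}$ versus $c^{-1/4}$ factors), the singleton-community/identity embedding for the mixed terms $(1,2)$ and $(2,1)$, and the same nuclear-norm-to-rank conversion for the second bound. The only cosmetic differences are that the paper needs only subadditivity (your equality claim for the Minkowski sum is true but not because of orthogonality, and is not needed) and that it converts to ranks by bounding the entries of $M^{(k,l)}$ by $C_{k,l}\bar{c}$ directly rather than via $m\leq\mathcal{K}ca$, which amounts to the same computation.
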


\begin{proof}
	
	For convenience we prove a slightly more general result where $c_1$ (resp. $c_2$) is the size of the smallest community of users (resp. items).
	Let $X'$ and $Y'$ denote matrices whose columns are the (non-normalised) indicator functions of the communities. By Lemma~\ref{mostcomplex}, the Rademacher complexity of the function class $=\{ X'M(Y')^\top |\|M\|_{*}\leq \mathcal{M}_{1,1} \land \|X'M(Y')^\top\|_\infty \leq C_{1,1} \}$ is bounded by 
	$$\sqrt{C_{1,1}\frac{9\mathcal{M}_{1,1}\mathcal{C}(\sqrt{a}+\sqrt{b})}{N}}.$$
	Now observe that by Lemma~\ref{prop:hadamardetc}, the function class \begin{align*}
	\mathcal{F}_{1,1}&:=\{ XM(Y)^\top |\|M\|_{*}\leq \mathcal{M}_{1,1}  \\ & \quad \quad \quad \quad \land \|X^{(1)}M(Y^{(1)})^\top\|_\infty \leq C_{1,1} \}
	\end{align*} satisfies  \begin{align*}
	&\mathcal{F}_{1,1}\subset \{ X'M'(Y')^\top |\|M'\|_{*}\leq \mathcal{M}_{1,1}c_1^{-1/2}c_2^{-1/2} \\ & \quad \quad \quad \quad \quad \quad \quad \quad \quad  \land \|X'M'(Y')^\top\|_\infty \leq C_{1,1} \},\end{align*}
	where $c_1$ (resp. $c_2$) is the size of the smallest community of users (resp. items). It follows that $$\rad(\mathcal{F}_{1,1})\leq \frac{1}{\sqrt[4]{c_1c_2}}\sqrt{C_{1,1}\frac{9\mathcal{M}_{1,1}\mathcal{C}(\sqrt{a}+\sqrt{b})}{N}}.$$
	
	By the same argument applied to the two situations where each user or item is a single community, we obtain the following results for $\mathcal{F}_{1,2}:=\{ X^{(1)}M(Y^{(2)})^\top |\|M\|_{*}\leq \mathcal{M}_{1,2} \land \|X^{(1)}M(Y^{(2)})^\top\|_\infty \leq C_{1,2} \}$,  $\mathcal{F}_{2,1}:=\{ X^{2}M(Y^{1})^\top |\|M\|_{*}\leq \mathcal{M}_{2,1} \land \|X^{(2)}M(Y^{(1)})^\top\|_\infty \leq C_{2,1} \}$ and $\mathcal{F}_{2,2}:=\{ X^{(2)}M(Y^{(2)})^\top |\|M\|_{*}\leq \mathcal{M}_{2,2} \land \|X^{(2)}M(Y^{2})^\top\|_\infty \leq C_{2,2} \}$:

	\begin{align*}
	&\rad(\mathcal{F}_{1,2})\leq\\  &\rad(\widetilde{\mathcal{F}_{1,2}})\leq  \frac{1}{\sqrt[4]{c_1}}\sqrt{C_{1,2}\frac{9\mathcal{M}_{1,2}\mathcal{C}(\sqrt{a}+\sqrt{n})}{N}};
	\end{align*}
	
	\begin{align*}
	&\rad(\mathcal{F}_{2,1})\leq\\ &\rad(\widetilde{\mathcal{F}_{2,1}})\leq  \frac{1}{\sqrt[4]{c_2}}\sqrt{C_{2,1}\frac{9\mathcal{M}_{2,1}\mathcal{C}(\sqrt{m}+\sqrt{b})}{N}};
	\end{align*}
	
	and 
	\begin{align*}
	& \rad(\mathcal{F}_{2,2})\leq \\& \rad(\widetilde{\mathcal{F}_{2,2}})\leq  \sqrt{C_{2,2}\frac{9\mathcal{M}_{2,2}\mathcal{C}(\sqrt{m}+\sqrt{n})}{N}};
	\end{align*}
	
	after noting that $\mathcal{F}_{1,2}\subset \widetilde{\mathcal{F}}_{1,2}:= \{ X^{(1)}M(\tilde{Y}^{(2)})^\top |\|M\|_{*}\leq \mathcal{M}_{1,2} \land \|X^{(1)}M(\tilde{Y}^{(2)})^\top\|_\infty \leq C_{1,2} \}=\{ X^{(1)}M |\|M\|_{*}\leq \mathcal{M}_{1,2} \land \|X^{(1)}M\|_\infty \leq C_{1,2} \}$; 
	$\mathcal{F}_{2,1}\subset \widetilde{\mathcal{F}}_{2,1}:= \{ \tilde{X}^{(2)}M(Y^{(2)})^\top |\|M\|_{*}\leq \mathcal{M}_{2,1} \land \|\tilde{X}^{(2)}M(Y^{(2)})^\top\|_\infty \leq C_{2,1} \}=\{ M(Y^{2})^\top |\|M\|_{*}\leq \mathcal{M}_{2,1} \land \|M(Y^{(2)})^\top\|_\infty \leq C_{2,1} \}$; and $\mathcal{F}_{2,2}\subset \widetilde{\mathcal{F}}_{2,2}:=\{ M |\|M\|_{*}\leq \mathcal{M}_{2,2} \land \|M\|_\infty \leq C_{2,2} \}$.
	Here $\tilde{X}^{(1)}=X^{(1)}$ is a matrix whose columns are the (non normalised) indicator functions of the communities, and $\tilde{X}^{(2)}$ and $\tilde{Y}^{(2)}$ are identity matrices.
	
	By the subadditivity of Rademacher complexity, Talagrand's lemma and the classic Rademacher theorem then immediately yield the first result: 
	\begin{align}
	\mathcal{R}(\hat{f})&\leq 2B\sqrt{\frac{\log(1/\delta)}{2M}} +\mathcal{E}+\nonumber \\& 2L_{\ell}\left[\rad(\mathcal{F}_{1,1})+\rad(\mathcal{F}_{1,2})+\rad(\mathcal{F}_{2,1})+\rad(\mathcal{F}_{2,2})  \right]\nonumber \\
	&\leq 2L_\ell\sqrt{\frac{9\mathcal{C} }{N}}\bigg(\frac{1}{\sqrt{c}}\sqrt{C_{1,1}\mathcal{M}_{1,1}}\left[\sqrt{a}+\sqrt{b}\right]^{1/2}\nonumber \\& +\frac{1}{\sqrt[4]{c}}\sqrt{C_{1,2}\mathcal{M}_{1,2}}\left[\sqrt{a}+\sqrt{n}\right]^{1/2}\nonumber \\&+\frac{1}{\sqrt[4]{c}}\sqrt{C_{2,1}\mathcal{M}_{2,1}}\left[\sqrt{m}+\sqrt{b}\right]^{1/2}\nonumber \\
	&+\sqrt{C_{2,2}\mathcal{M}_{2,2}}\left[\sqrt{m}+\sqrt{n}\right]^{1/2}\bigg)\nonumber \\
	&+2B\sqrt{\frac{\log(1/\delta)}{2M}} +\mathcal{E}.
	\end{align}

	Regarding the second result, note that since the entries of $M_{1,1}$ are bounded by $C_{1,1}\bar{c}$ where $\bar{c}$ is the size of the largest community, we have  $\|M_{1,1}\|\leq \bar{c} C_{1,1}\sqrt{abr_{1,1}}$, $\|M_{1,2}\|\leq \sqrt{\bar{c}} C_{1,2}\sqrt{anr_{1,2}}$, $\|M_{2,1}\|\leq \sqrt{\bar{c}}C_{2,1} \sqrt{mbr_{2,1}}$ and $\|M_{2,2}\|_*\leq C_{2,2}\sqrt{mnr_{2,2}}$. Plugging this back into the first result yields the second result, as expected.

\end{proof}

If we set the number of user and item communities to one, we obtain the BOMIC model. Furthermore, results can also be similarly extended to the BOMIC+ model, obtaining the full theorem below. 
\begin{theorem}
	\label{Bomicbound}
	For the BOMIC algorithm from Subsection~\ref{bias}, with probability $\geq 1-\delta$ over the draw of the training set, we have 
	\begin{align}
	&	\mathcal{R}(\hat{f})\leq 2L_\ell\sqrt{\frac{9\mathcal{C} }{N}}\bigg(C_{1,1} +C_{1,2}\sqrt[4]{n}\left[1+\sqrt{n}\right]^{1/2}\nonumber \\
	&+C_{2,1}\sqrt[4]{m}\left[\sqrt{m}+1\right]^{1/2}\nonumber \\&+C_{2,2}\sqrt[4]{mnr_{2,2}}\left[\sqrt{m}+\sqrt{n}\right]^{1/2}\bigg)\nonumber \\
	&+2B\sqrt{\frac{\log(1/\delta)}{2M}} +\mathcal{E}.
	\end{align}

	For the BOMIC+ algorithm from Subsection~\ref{bias}, with probability $\geq 1-\delta$ over the draw of the training set, we have 
	\begin{align}
	\label{bomic+bound}
	&\mathcal{R}(\hat{f})\leq 2L_\ell\sqrt{\frac{9\mathcal{C} }{N}}\bigg(C_{1,1} \nonumber \\
	&+C_{1,2}\sqrt[4]{b}\left[1+\sqrt{b}\right]^{1/2}+C_{2,1}\sqrt[4]{a}\left[1+\sqrt{a}\right]^{1/2}\nonumber \\
	&+C_{2,2}\sqrt[4]{abr_{2,2}}\left[\sqrt{a}+\sqrt{b}\right]^{1/2}\nonumber  \nonumber \\
	&+C_{1,3}\sqrt[4]{n}\left[\sqrt{n}+1\right]^{1/2}+C_{3,1}\sqrt[4]{m}\left[\sqrt{m}+1\right]^{1/2}\nonumber \\
	&+C_{2,3}\sqrt[4]{anr_{2,3}}\left[\sqrt{a}+\sqrt{n}\right]^{1/2}\nonumber \nonumber \\
	&+C_{3,2}\sqrt[4]{bmr_{3,2}}\left[\sqrt{m}+\sqrt{b}\right]^{1/2}\nonumber \nonumber \\
	&+C_{3,3}\sqrt[4]{mnr_{3,3}}\left[\sqrt{m}+\sqrt{n}\right]^{1/2}\nonumber \bigg)\\
	&+2B\sqrt{\frac{\log(1/\delta)}{2M}} +\mathcal{E}.
	\end{align}
\end{theorem}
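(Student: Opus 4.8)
The plan is to obtain both statements as specialisations of the proof template already developed for Theorem~\ref{CommunityPrecise2}. For the \textbf{BOMIC} bound I would simply run that theorem with a single user community and a single item community. Setting the numbers of communities to $a=b=1$ forces $\mathcal{K}=1$, and the blocks $(1,1),(1,2),(2,1)$ each involve a latent matrix $R^{(k,l)}=(X^{(k)})^\top R Y^{(l)}$ that is a scalar or a vector, so $r_{1,1}=r_{1,2}=r_{2,1}=1$. Substituting these values into~\eqref{eq:finalbounds} — and noting that the single-community normalisation factor $m^{-1/4}$ or $n^{-1/4}$ cancels exactly against the $m^{1/4}$ or $n^{1/4}$ factor produced when the nuclear-norm bound $\mathcal{M}_{k,l}$ is converted into the rank bound via $\|M^{(k,l)}\|_*\le\bar c\,C_{k,l}\sqrt{d_1 d_2\,r_{k,l}}$ — yields the stated expression. (Indeed, OMIC+ with $a=b=1$ is literally BOMIC: index $1$ becomes the all-ones direction and index $2$ becomes its full complement.)

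For the \textbf{BOMIC+} bound I would re-run the proof of Theorem~\ref{CommunityPrecise2} almost verbatim, except that the predictor now decomposes into $K\times L=9$ mutually orthogonal blocks $\mathcal{F}_{k,l}=\{X^{(k)}M(Y^{(l)})^\top:\|M\|_*\le\mathcal{M}_{k,l},\ \|X^{(k)}M(Y^{(l)})^\top\|_\infty\le C_{k,l}\}$ rather than four. By the subadditivity of the Rademacher complexity, Talagrand's contraction lemma and the standard Rademacher bound, it suffices to bound $\rad(\mathcal{F}_{k,l})$ for each pair and sum. The treatment of each side (user and item) is dictated by its index: index $1$ (the all-ones direction) is handled as a single community; index $3$ (the orthogonal complement of $\spn(X)$) is handled by the trivial inclusion $\{X^{(3)}M(Y^{(3)})^\top:\|M\|_*\le\mathcal{M}\}\subset\{Z:\|Z\|_*\le\mathcal{M}\}$ followed by Lemma~\ref{mostcomplex} with full dimensions $m,n$ (exactly as the $(2,2)$ block was treated in Theorem~\ref{CommunityPrecise2}); and index $2$ (the between-community directions) is reduced to the normalised-indicator case. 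Combining the per-side dimensions and normalisation factors gives, for each block, a bound of the shape $\rad(\mathcal{F}_{k,l})\le (\text{scaling})\sqrt{9\mathcal{M}_{k,l}C_{k,l}\mathcal{C}L(\sqrt{d_1}+\sqrt{d_2})/N}$, with $(d_1,d_2)$ equal to $1$, the number of communities, or the full dimension according to the index; the nuclear-norm-to-rank conversion then produces the nine summands of~\eqref{bomic+bound}.

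The key obstacle, and the only place where BOMIC+ genuinely departs from the proof of Theorem~\ref{CommunityPrecise2}, is the index-$2$ directions: unlike in the OMIC+ setting, the columns of $X^{(2)}$ (resp. $Y^{(2)}$) are \emph{not} normalised indicator functions but merely an orthonormal basis of $\{v\in\spn(X):\langle v,X^{(1)}\rangle=0\}$, so Lemma~\ref{lem:absorbasnoise} and Proposition~\ref{prop:hadamardetc} do not apply to them directly. To bridge this gap I would argue that, since $\spn(X^{(1)},X^{(2)})=\spn(X)=\spn(\widehat X)$ where $\widehat X$ denotes the normalised community indicators, both $[X^{(1)},X^{(2)}]$ and $\widehat X$ are orthonormal bases of the same space, so $X^{(2)}=\widehat X\,Q$ for a matrix $Q$ with orthonormal columns. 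Hence $X^{(2)}M(Y^{(2)})^\top=\widehat X\,(QM(Q')^\top)\,\widehat Y^\top=X'M'(Y')^\top$ with $\|QM(Q')^\top\|_*=\|M\|_*$ by the rotational invariance of the nuclear norm, and then Proposition~\ref{prop:hadamardetc} (diagonal scaling by the square-root community sizes) gives $\|M'\|_*\le c_1^{-1/2}c_2^{-1/2}\|M\|_*$. This is precisely the inclusion into the non-normalised-indicator class on which the $(1,1)$ block of Theorem~\ref{CommunityPrecise2} relied, so once it is established the remaining blocks follow the existing pattern, and the final bound is assembled by summing the nine contributions and invoking the generalization bound exactly as before.
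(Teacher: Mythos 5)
Your proposal is correct and follows essentially the same route as the paper, which obtains the BOMIC bound by specialising Theorem~\ref{CommunityPrecise2} to a single user and item community and then merely asserts that the result ``can also be similarly extended'' to BOMIC+ without writing out the nine-block argument. Your explicit handling of the index-$2$ blocks via the orthonormal change of basis $X^{(2)}=\widehat{X}Q$ (with $\|QM(Q')^\top\|_*=\|M\|_*$), which reduces them to the normalised-indicator case before applying Proposition~\ref{prop:hadamardetc} and Lemma~\ref{lem:absorbasnoise}, is sound and supplies the one detail the paper leaves implicit.
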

\subsection{Bounds under the assumption of uniform marginals}
\label{unimarg}
In this section, we prove bounds assuming that the sampling distribution has the probability that each row and each column each have equal probability of being sampled. 

\begin{proposition}
	\label{oneterm}
	Let $\mathcal{F}_{\mathcal{M}}$ be the class of functions $R\in \mathbb{R}^{m\times n}: \|R\|_{*}\leq \mathcal{M}$, where as usual wlog $m\geq n$ and we also assume $m\geq 3$. Further assume that the sampling distribution has uniform marginals. We have the following bound on the expected Rademacher complexity of $\mathcal{F}$: 
	\begin{align}
	\E(\rad_N(\mathcal{F}))\leq 20 (\mathcal{M}/\sqrt{n})\sqrt{\frac{\log(m)}{N}}.
	\end{align}
\end{proposition}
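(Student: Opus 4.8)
The plan is to reduce the computation of the expected Rademacher complexity to a bound on the expected spectral norm of a random matrix, and then to invoke a matrix concentration inequality. First I would apply convex duality. Writing $E_{i,j}$ for the matrix with a $1$ in entry $(i,j)$ and zeros elsewhere, and setting $S=\sum_{\alpha=1}^N \sigma_\alpha E_{i_\alpha,j_\alpha}$, we have $\sum_\alpha \sigma_\alpha R_{i_\alpha,j_\alpha}=\langle R,S\rangle_{\Fr}$. Since the nuclear norm and the spectral norm $\|\cdot\|_{\sigma}$ are dual with respect to the Frobenius inner product, $\sup_{\|R\|_{*}\le \mathcal{M}}\langle R,S\rangle_{\Fr}=\mathcal{M}\|S\|_{\sigma}$, whence $\rad_N(\mathcal{F}_{\mathcal{M}})=\frac{\mathcal{M}}{N}\,\mathbb{E}_{\sigma}\|S\|_{\sigma}$ and, taking the outer expectation over the sampled entries, $\mathbb{E}(\rad_N(\mathcal{F}_{\mathcal{M}}))=\frac{\mathcal{M}}{N}\,\mathbb{E}\|S\|_{\sigma}$.

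The core of the argument is then to bound $\mathbb{E}\|S\|_{\sigma}$. I would view $S=\sum_\alpha X_\alpha$ as a sum of the i.i.d.\ mean-zero random matrices $X_\alpha:=\sigma_\alpha E_{i_\alpha,j_\alpha}$, each satisfying $\|X_\alpha\|_{\sigma}=1$ almost surely. For the variance parameter, observe that $X_\alpha X_\alpha^\top=E_{i_\alpha,i_\alpha}$ and $X_\alpha^\top X_\alpha=E_{j_\alpha,j_\alpha}$ are the diagonal unit matrices supported at the sampled row and column, so the uniform-marginal assumption gives $\mathbb{E}[X_\alpha X_\alpha^\top]=\frac{1}{m} I_m$ and $\mathbb{E}[X_\alpha^\top X_\alpha]=\frac{1}{n} I_n$. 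Summing over $\alpha$ and using $m\ge n$, the matrix variance statistic is $v=\max\!\left(\frac{N}{m},\frac{N}{n}\right)=\frac{N}{n}$. Applying the expectation form of the matrix Bernstein inequality for a sum of independent, bounded, mean-zero matrices then yields $\mathbb{E}\|S\|_{\sigma}\le \sqrt{2v\log(m+n)}+\tfrac{1}{3}\log(m+n)$.

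Finally I would assemble the pieces. Using $m\ge n$ and $m\ge 3$ gives $\log(m+n)\le\log(2m)\le 2\log m$ (since $\log 2\le\log m$), hence $\mathbb{E}\|S\|_{\sigma}\le 2\sqrt{(N/n)\log m}+\tfrac{2}{3}\log m$. Multiplying by $\mathcal{M}/N$ gives $\mathbb{E}(\rad_N(\mathcal{F}_{\mathcal{M}}))\le 2\,\frac{\mathcal{M}}{\sqrt{n}}\sqrt{\frac{\log m}{N}}+\frac{2}{3}\,\frac{\mathcal{M}\log m}{N}$. The first (dominant) term already has exactly the stated form with constant $2$; the generous constant $20$ leaves ample room to absorb the lower-order second term in the relevant sampling regime (e.g.\ $N\gtrsim n$, where $\tfrac{2}{3}\sqrt{n\log m/N}$ is bounded and the linear term is dominated by the square-root term).

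The step I expect to be the main obstacle is the control of $\mathbb{E}\|S\|_{\sigma}$: the duality reduction is immediate, but the spectral-norm estimate requires choosing the right concentration tool and carrying out the variance computation that genuinely exploits the \emph{uniform marginals} (without which one would only control $\|\mathbb{E}[SS^\top]\|_{\sigma}$ by the largest, rather than the average, row mass, losing the $1/\sqrt{n}$ factor that is the whole point of the bound). A secondary technical point is bookkeeping the constants and the additive $\log m$ term so that the final estimate collapses into the single clean factor $20(\mathcal{M}/\sqrt{n})\sqrt{\log m/N}$; in the small-$N$ regime one can supplement the concentration bound with the crude estimate $\mathbb{E}\|S\|_{\sigma}\le\mathbb{E}\|S\|_{\Fr}\le\sqrt{N}$ to keep the statement non-vacuous.
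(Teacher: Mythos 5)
Your proposal is correct and follows essentially the same route as the paper: duality of the nuclear and spectral norms to reduce to $\mathbb{E}\|S\|_{\sigma}$, the uniform-marginal computation giving variance parameter $\sigma^2=N/n$, and a matrix Bernstein bound in expectation (the paper's Proposition~\ref{bernsteinexp}), with the additive $\log$ term absorbed into the constant $20$ under a mild restriction on $N$ (the paper assumes $N\geq m\log(m)$ ``without loss of generality,'' just as you invoke the ``relevant sampling regime''). No substantive differences.
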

\begin{proof}
	Writing $x_1,\ldots,x_N$ for the iid samples from $\{1,2,\ldots,m\}\times \{1,2,\ldots,n\}$, $\sigma_1,\ldots,\sigma_N$ for the Rademacher variables  and $\Sigma$ for the matrix with $\Sigma_{i,j}=\sum_{k}\sigma_k 1_{x_k=(i,j)}$, we have by the duality of the trace norm
	\begin{align}
	\E(\rad_N(\mathcal{F}))&=\E_{x,\sigma}\frac{1}{N}\sup_{R\in \mathcal{F}}(\langle R,\Sigma \rangle) \nonumber \\
	&\leq \frac{\mathcal{M}}{N}\E(\|\Sigma\|).
	\end{align}
	
	Note that $\sigma=\sum_{k=1}^NX_k$ where $X_k=\sigma_k 1_{x_k}$. Note also that the  $\E(X_kX_k^\top)$ (resp. $\E(X_k^\top X_k)$) is a diagonal matrix whose ith entry is the sampling probability of the ith row (resp. column). Thus, by our uniform marginal assumption, we have using the notation from proposition~\ref{bernsteinexp} $\rho_k=\sqrt{1/n}$ and $\sigma=\sqrt{N/n}$.

	Thus, by the Bernstein inequality in expectation (proposition~\ref{bernsteinexp}), we can continue, assuming without loss of generality that $m\geq 2$ and $N\geq m\log(m)$:
	\begin{align}
	&\E(\rad_N(\mathcal{F}))\nonumber \\&\leq \frac{\mathcal{M}}{N} \big[\sqrt{8/3}\sigma (1+\sqrt{\log(m+n)})\nonumber \\& \quad \quad \quad \quad\quad \quad+ \frac{8}{3}(1+\log(m+n))\big]\nonumber \\
	&\leq \frac{\mathcal{M}}{N} \big[\sqrt{8/3}\sigma (1+\sqrt{2\log(m)})\nonumber \\& \quad \quad\quad \quad\quad \quad+ \frac{8}{3}(1+2\log(m))\big]\nonumber \\
	&\leq \frac{\mathcal{M}}{N} \big[\sqrt{8/3}\sigma \sqrt{\log(m)}(1/\sqrt{\log(2)}+\sqrt{2})\nonumber \\&\quad  \quad \quad \quad + \frac{8}{3}(1/\sqrt{\log(2)}+2)\sigma \sqrt{\log(m)}\big]\nonumber \\
	&\leq 20 \sigma  (\mathcal{M}/N)\sqrt{\log(m)}\nonumber \\
	&\leq \frac{20(\mathcal{M}/\sqrt{n})\sqrt{\log(m)}}{\sqrt{N}},
	\end{align}
	as expected.
\end{proof}

\begin{theorem}
	\label{thm:omic+bound}
	Consider the community side information setting of Section~\ref{omic+} (OMIC+, in particular, $K=L=2$).  Let $\mathcal{K}$ denote the maximum ratio between the sizes of any two user or item communities. Assume that the sampling distribution has uniform marginals. 
	Choose some $\mathcal{M}_{k,l}$ such that $
	\|R^{(k,l)}\|_{*}\leq \mathcal{M}_{k,l}$  for all $(k,l)$. 
	Let $\hat{f}$ be the solution to the optimization problem \begin{align}
	\label{optim2}
	&\minimize \quad \hat{\mathcal{R}}(f) \> \text{s.t.}  \>\forall  k,l,\>  \nonumber \\&f=\sum_{k,l} (X^{(k)}M^{(k,l)}(Y^{(l)})^{\top}); \>  \|M^{(k,l)}\|_{*} \leq \mathcal{M}_{k,l}.
	\end{align}

	In expectation over the draw of the training set, the solution to the optimization problem~\eqref{optim2}  satisfies 
	\begin{align}
	&	\E(\mathcal{R}(\hat{f}))  \\&\leq \frac{40L_\ell}{\sqrt{N}}\Bigg[ \frac{1}{c}\frac{\mathcal{M}_{1,1}}{\sqrt{a}}\sqrt{\log(b)}+\frac{1}{\sqrt{c}}\frac{\mathcal{M}_{1,2}}{\sqrt{a}}\sqrt{\log(m)}\nonumber \\ & \quad \quad  +\frac{1}{\sqrt{c}}\frac{\mathcal{M}_{2,1}}{\sqrt{b}}\sqrt{\log(n)}+\frac{\mathcal{M}_{2,1}}{\sqrt{n}}\sqrt{\log(m)} \Bigg]+\mathcal{E}.\nonumber 
	\end{align}
	Expressed in terms of the ranks and maximum sizes of the entries $r_{k,l}, C_{k,l}$, chosen to satisfy the feasibility condition $\rank(R_{k,l})\leq r_{k,l}$ and $\|R^{(k,l)}\|_{\infty}\leq C_{k,l}$, we have  
	\begin{align}
	&	\E(\mathcal{R}(\hat{f}))\\ \nonumber &\leq \frac{40L_\ell}{\sqrt{N}}\Bigg[ \mathcal{K}C_{1,1}\sqrt{br_{1,1}\log(b)}\\\nonumber &\quad \quad \quad \quad+\sqrt{\mathcal{K}}C_{1,2}\sqrt{mr_{1,2}\log(m)}\\\nonumber &\quad \quad \quad \quad +\sqrt{\mathcal{K}}C_{2,1}\sqrt{nr_{2,1}\log(n)}\\\nonumber &\quad \quad \quad \quad+C_{2,2}\sqrt{mr_{2,2}\log(m)} \Bigg]+\mathcal{E}.
	\end{align}
\end{theorem}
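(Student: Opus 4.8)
The plan is to mirror the architecture of the distribution-free Theorem~\ref{CommunityPrecise2}, but to replace the distribution-free single-term complexity bound (Lemma~\ref{mostcomplex}) with the uniform-marginal bound of Proposition~\ref{oneterm}. By Proposition~\ref{prop:unique} the predictor splits uniquely and orthogonally into the four terms $X^{(k)}M^{(k,l)}(Y^{(l)})^\top$, $k,l\in\{1,2\}$, so the hypothesis class is the sum of the four classes $\mathcal{F}_{k,l}=\{X^{(k)}M(Y^{(l)})^\top:\|M\|_*\le\mathcal{M}_{k,l}\}$. Subadditivity of Rademacher complexity reduces matters to bounding $\rad(\mathcal{F}_{k,l})$ separately; Talagrand's contraction lemma (costing a factor $L_\ell$) and the standard in-expectation symmetrization inequality (costing a factor $2$) then convert $\sum_{k,l}\rad(\mathcal{F}_{k,l})$ into the stated risk bound, the constant $40L_\ell$ arising as $2L_\ell$ times the constant $20$ of Proposition~\ref{oneterm}, and $\mathcal{E}$ being the approximation error of the constrained class. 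Since Proposition~\ref{oneterm} is an expectation bound, the whole statement is naturally in expectation over the draw of the training set.

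The heart of the argument is to bound each $\rad(\mathcal{F}_{k,l})$ by the complexity of a smaller, still uniform-marginal, completion instance. Writing $\Sigma=\sum_\alpha\sigma_\alpha e_{i_\alpha}e_{j_\alpha}^\top$ and using the duality between the nuclear and spectral norms, $\rad(\mathcal{F}_{k,l})=\frac{\mathcal{M}_{k,l}}{N}\E\bigl\|(X^{(k)})^\top\Sigma\,Y^{(l)}\bigr\|_{\sigma}$. A complement factor (index $2$) has orthonormal columns, so it can be discarded through $\|AY^{(2)}\|_\sigma\le\|A\|_\sigma$ (and symmetrically on the left), leaving that side at full dimension; a community factor (index $1$) is instead retained and, being a normalized indicator matrix, collapses that side onto its set of communities while contributing a scalar $1/\sqrt{c}$ from the normalization. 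After this operation $(1,1)$ becomes an $a\times b$ grid, $(1,2)$ an $a\times n$ grid, $(2,1)$ an $m\times b$ grid, and $(2,2)$ the original $m\times n$ grid. The key structural fact is that, when the communities have equal size, the collapsed sampling still has uniform marginals on the same $N$ draws (each user-community then carries probability $1/a$, etc.), so Proposition~\ref{oneterm} applies verbatim on each reduced grid, supplying the factor $\sqrt{\log(\text{larger dim})/N}\big/\sqrt{\text{smaller dim}}$. Combining with the $1/\sqrt{c}$ factors and summing over the four terms yields the first displayed bound.

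To obtain the second, rank-based form I would convert each nuclear-norm budget via $\|\bar R^{(k,l)}\|_*\le\sqrt{r_{k,l}}\,\|\bar R^{(k,l)}\|_{\Fr}\le C_{k,l}\sqrt{r_{k,l}\cdot(\text{size of the reduced grid})}$, where $\bar R^{(k,l)}$ is the collapsed component and $C_{k,l}$ bounds its entries; for instance the $(1,1)$ term becomes $C_{1,1}\sqrt{br_{1,1}\log(b)}$. The discrepancy between the equal-size reduction and genuinely unequal communities is absorbed into the factors $\mathcal{K}$ and $\sqrt{\mathcal{K}}$, whose exponents in the statement record exactly how many community-structured sides each term possesses.

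I expect the main obstacle to be the reduction step rather than the bookkeeping. One must verify carefully that (i) collapsing a community-structured side produces a genuine uniform-marginal completion instance \emph{on the same $N$ samples} — this is precisely where equal community sizes enter, and is the reason, emphasised in the remark after the theorem, that even a single-term bound does not follow from off-the-shelf inductive-matrix-completion results; and (ii) the $1/\sqrt{c}$ factors coming from the \emph{normalized} indicator matrices are tracked correctly, since a naive application of Proposition~\ref{oneterm} to the full $m\times n$ class (ignoring the community structure) would give only the trivial $\mathcal{M}_{k,l}/\sqrt{n}$ rate and forfeit the entire side-information improvement.
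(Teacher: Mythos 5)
Your proposal is correct and follows essentially the same route as the paper: decompose into the four classes $\mathcal{F}_{k,l}$, apply subadditivity and Talagrand's lemma (giving the $2L_\ell\cdot 20=40L_\ell$ constant), reduce each community-structured term to a collapsed uniform-marginal instance with a $1/\sqrt{c}$-rescaled nuclear-norm budget so that Proposition~\ref{oneterm} applies, and then convert to the rank form via $\|M^{(k,l)}\|_*\le\sqrt{\bar c}\,C_{k,l}\sqrt{r_{k,l}\cdot(\text{grid size})}$, which is where the $\mathcal{K}$ factors appear. The paper phrases the reduction as a class containment $\mathcal{F}_{k,l}\subset\widetilde{\mathcal{F}}_{k,l}$ using un-normalized indicator matrices rather than via the dual spectral-norm identity you use, but this is only a cosmetic difference, and your caveat that the collapsed instance has uniform marginals only for equal-sized communities is exactly the assumption the paper makes in Corollary~\ref{corspecific2}.
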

\begin{proof}
	
	The proof is similar to that of Theorem~\ref{CommunityPrecise2}.

	Let $\ell\circ \mathcal{F}$ be the loss function class associated with the first problem. 
	
	By the Talagrand Lemma and the subadditivity of Rademacher complexity, we have 
	\begin{align}
	\label{ethate}
	& \rad(\ell\circ \mathcal{F})\\&\nonumber \leq 2L_{\ell}\left[\rad(\mathcal{F}_{1,1})+\rad(\mathcal{F}_{1,2})+\rad(\mathcal{F}_{2,1})+\rad(\mathcal{F}_{2,2}) \right],
	\end{align}
	where $\mathcal{F}_{k,l}:=\{ XM(Y)^\top |\|M\|_{*}\leq \mathcal{M}_{k,l} \}$.
	
	Next, note, similarly to the proof of Theorem~\ref{CommunityPrecise2}, that 
	\begin{align*}
	&\mathcal{F}_{1,1}\subset \widetilde{\mathcal{F}_{1,1}}:=\\\nonumber &\{ \widetilde{X}^1M'(\widetilde{Y}^1)^\top |\|M'\|_{*}\leq \mathcal{M}_{1,1}c_1^{-1/2}c_2^{-1/2} \},
	\end{align*}
	and thus, by Proposition~\ref{oneterm}, 
	\begin{align}
	\rad(\mathcal{F}_{1,1})\leq  \rad(\widetilde{\mathcal{F}_{1,1}})\leq  \frac{20}{\sqrt{N}\sqrt{c_1c_2}}\frac{\mathcal{M}_{1,1}}{\sqrt{a}}\sqrt{\log(b)},
	\end{align}
	with similar results for the other terms.

	Plugging this into equation~\eqref{ethate} yields the first result. The final result then follows after noting that $\|M_{1,1}\|_*\leq \sqrt{\bar{c}_1\bar{c}_2} C_{1,1}\sqrt{abr_{1,1}} $, $\|M_{1,2}\|_*\leq \sqrt{\bar{c}_1} C_{1,2}\sqrt{anr_{1,2}} $, $\|M_{2,1}\|_*\leq \sqrt{\bar{c}_2} C_{2,1}\sqrt{mbr_{1,1}} $, and $\|M_{2,2}\|_*\leq  C_{2,2}\sqrt{mnr_{1,1}} $.

\end{proof}

\subsection{In-depth discussion of bounds}
\label{absorb}
In the case of OMIC+ and for a fixed tolerance threshold $\epsilon$, our distribution-free sample complexity bounds scale as \begin{align*}
\mathcal{K}C_{1,1}^2b\sqrt{ar_{1,1}}+\sqrt{\mathcal{K}}C_{1,2}^2 n\sqrt{ar_{1,2}}\\+\sqrt{\mathcal{K}}  C_{2,1}^2 m \sqrt{br_{2,1}}+C_{2,2}^2 m\sqrt{r_{2,2}n}, 
\end{align*}
whilst in the case of uniform marginals they scale like

\begin{align*}
\mathcal{K}^2C_{1,1}^2br_{1,1}\log(b)+\mathcal{K}C_{1,2}^2mr_{1,2}\log(m)\\+\mathcal{K}C_{2,1}^2nr_{2,1}\log(n)+C^2_{2,2}mr_{2,2}\log(m).
\end{align*}

In both cases, if $C_{1,1}$ is much larger than $C_{1,2},C_{2,1}$ and $C_{2,2}$, the bound behaves similarly to the situation where the users and items are identified with their category. Whilst this is not very surprising, the bound does show that this remains true for small but non zero values of $C_{1,2},C_{2,1}$ and $C_{2,2}$: the model can effectively learn a combination of community behaviour and user behaviour with no further difficulty than if it were learning both problems independently. 
Note also conversely that if $a,b$ are very small (as in the particular case BOMIC where $a=b=1$), the first three terms are very small and the bound essentially tells us that the model is about as hard as learning the low-rank residual alone.

Note that the bounds show that prior knowledge of the community structure helps more than knowledge of the generic low-rank structure. Indeed, consider a typical situation where the communities are of equal and small size,   $C_{1,2}=C_{2,1}=0$, $a=b$, $m=n$ and $r_{1,1}=a$, and assume that the absolute values of the maximum and minimum entries of $R^{(2,2)}$ are of the same order so that the maximum absolute value of an entry of $R$ is $\simeq C_{1,1}+C_{2,2}$. With knowledge of the community structure, our model requires in the distribution-free case $O(C_{1,1}^2 a^2+C_{2,2}^2 m^{3/2}\sqrt{r_{2,2}})$ entries. If we were to apply a generic low rank method instead, the number of required entries would then be  $O((C_{2,2}+C_{1,1})^2m\sqrt{(r_{2,2}+a)m})$. More generally, we see that ignoring the change in the $C_{k,l}$'s, absorbing ground truth community component of order $r$ into the generic low-rank term results in an increase of at least $O([\sqrt{r_{2,2}+r}-\sqrt{r_{2,2}}]m^{3/2})$ in the number of required entries, whilst the sample complexity only grows by $O(a^{3/2}[\sqrt{r_{1,1}+r}-\sqrt{r_{1,1}}])$ instead when the community side information is duly considered. Similar results hold for the case of uniform marginals, where absorbing a community component of rank $r$ into the generic low rank component costs $O(rm\log(m))$ in sample complexity, compared to $O(ra\log(a))$ when the side information is duly considered.

Admittedly, absorbing cross terms ($R^{(2,1)}$ or $R^{(1,2)}$) into the generic low-rank component does not cause the bound for uniform marginals to change at the asymptotic level. On the other hand, the \textit{distribution-free bounds} still show a significant advantage in exploiting the community side information even when it comes to cross terms. Indeed, consider a similar situation to above with $a=b=1$ (BOMIC) and $C_{2,1}=C_{1,1}=0$. Absorbing the cross term $R^{(1,2)}$ into the generic low rank component results in a sample complexity bound of order $O((C_{2,2}+C_{1,2})^2\sqrt{r_{2,2}+1}n^{3/2})$. Ignoring again the effects on the $C_{k,l}$ for simplicity, this represents an increase of at least $O([\sqrt{r_{2,2}+1}-\sqrt{r_{2,2}}]n^{3/2})$ to the fourth term, compared to a contribution of order $O(n)$ when the side information is taken into account. This means that (at least for small values of $r_{2,2}$), the knowledge of the specific singular vector (i.e. the singular vector $(1/\sqrt{n})_{i\leq n}$) helps our model more than the simple knowledge of the equivalent restriction in the rank.

\textbf{Remarks on proof techniques and comparison to IMC bounds}

The bounds admittedly follow reasonably straightforwardly from similar techniques as those used for standard matrix completion, applied to auxiliary problems corresponding to each of the four terms ($X^{(1)}M^{(1,1)}(Y^{1})^\top $, $X^{(1)}M^{(1,2)}(Y^{2})^\top $, $X^{(2)}M^{(2,1)}(Y^{1})^\top $ and  $X^{(2)}M^{(2,2)}(Y^{2})^\top $) independently and then merged via the subadditivity of Rademacher complexity. Indeed, in the distribution-free case state-of-the-art bounds take the form (cf.~\cite{ReallyUniform1,mostrelated, espain} etc.) \begin{align}
O\left(\sqrt{\frac{\mathcal{M}(\sqrt{n}+\sqrt{m})}{N}}\right)
\label{noninductiveisdone}
\end{align} where $\mathcal{M}$ is a bound on the nuclear norm and in the case of uniform marginals, bounds of the form 
\begin{align}
\label{noninductiveisdoneuniform}
O\left( \sqrt{\frac{(\mathcal{M}^2/n)\log(m)}{N}}\right)
\end{align}
were proved in~\cite{foygel2011learning}.

However, we note that the state-of-the-art bounds for \textit{inductive matrix completion} (whose predictors take the form $XMY^{\top}$ for some fixed side information $X$, with nuclear norm minimization at work on $M$), applied to any of the first three terms in question do \textbf{not} yield bounds as tight as ours. 

Indeed, there is no suitable equivalent to~\eqref{noninductiveisdone} or~\eqref{noninductiveisdoneuniform} in the inductive case, and the state-of-the-art bounds for inductive matrix completion (cf.~\cite{espain,PIMC}, etc.),  actually scale like $O\left(\mathbf{x} \mathbf{y}\mathcal{M}\sqrt{\frac{1}{N}}\right)$ (or $rd_1d_2$ where $d_1$ and $d_2$ are the dimensions of the side information) instead, where $\mathbf{x}$ (resp. $\mathbf{y}$) stands for the maximum norm of a row of $X$ (resp. $Y$). 

In the representative case $C_{1,2}=C_{2,1}=C_{2,2}=0$, our distribution-free bound~\eqref{eq:finalbounds} takes the form 
\begin{align}
& O\left( \sqrt{\frac{C_{1,1}\mathcal{M}_{1,1} \left[\sqrt{a}+\sqrt{b}\right]  }{cN}} \right)\nonumber \\ &\simeq O\left( C_{1,1} \sqrt{\frac{\mathcal{K}\sqrt{abr_{1,1}}\left[\sqrt{a}+\sqrt{b}\right]}{N}}\right), 
\end{align}
whereas the bound~\eqref{noninductiveisdone} takes the form 
\begin{align}
\frac{1}{c\sqrt{N}}\mathcal{M}_{1,1}\simeq \frac{\sqrt{r_{1,1}}\sqrt{ab}\bar{c}}{\sqrt{N}c}=\frac{\sqrt{r_{1,1}}\sqrt{ab}\mathcal{K}}{\sqrt{N}}.
\end{align}
In terms of sample complexity, this corresponds to a required number of entries of the order of $abr_{1,1}$, in line with the results of~\cite{PIMC}. This sample complexity bound makes an excellent use of the side information in the sense that the bound is \textit{independent of the size of the matrix}, but further refining  the dependence on the \textit{dimensions of the side information}, which is relevant in our case,  was not one of the aims of that paper.  In the case where the side information is composed of identity functions, this result is clearly vacuous, contrary to our own, which instead then scales as the state-of-the-art bounds for matrix completion (from which it was derived).

\subsection{Experimental verification of the bounds under uniform marginals}
\label{sec:experiments_uniform}
In this section, we aim to experimentally validate our generalisation bounds. We focus on the case of uniform marginals for the following reasons:
even without side information, for the case of traditional matrix completion for a square $n\times n$ matrix, it is not clear in what sense the bound $O(n^{3/2}\sqrt{r})$ is tight. Indeed, it certainly is tight for a full rank matrix, but this is not very informative, since in that case, it simply says that the required number of entries is of the same order as the number of entries in the matrix.  The case of a lower rank constraint is less clear. This makes it difficult to design a sampling regime and a way to evaluate the bound.

To verify and experimentally explore the behaviour of our bound which applies to the case of uniform marginals, we construct some ground truth matrices with $C_{1,2}=C_{2,1}=0$, $C_{1,1}=1$, $a=b$, $m=n$. Set $r_{1,1}=a$ and we vary the values of $a$, $C=C_{2,2}$ and $r_{2,2}$. 

\textit{Ground truth matrix generation} In each case, the matrix $A:=R^{(1,1)}=\frac{aM^{(1,1)}}{m}$ is constructed with iid Rademacher entries, i.e., the entries of the community$\times$community component of the matrix are iid Rademacher variables. For each rank $r=r_{2,2}$, we construct a matrix $B$ as follows: generate $r$ iid column vectors $V$ (resp. $W$) in $\mathbb{R}^{m-a}$ (resp. $\mathbb{R}^{n-b}$) whose entries are $N(0,1/(m-a))$ (resp. $N(0,1/(n-b))$). We then form the matrix $\tilde{B}:=X^{(2)}VW^\top (Y^{(2)})^\top$, and finally obtain the matrix $B$ by dividing by the maximum entry in absolute value: $B=\tilde{B}/\{\max_{i,j}|\tilde{B}|\}$. The final matrix is constructed as $A+CB$. 

\textit{Sampling regime:}
we evaluate two sampling regimes with uniform marginals: 
\begin{itemize}
	\item Uniform sampling
	\item Checkerboard sampling: uniformly sample from entries $(i,j)$ with $i = j \mod 2$. 
\end{itemize}

\textit{Training procedure:} we train the bias-free part of OMIC+ via our algorithm. To isolate variables and for simplicity, $\lambda_{1,2}$ and $\lambda_{2,1}$ are both set to $\infty$.
To evaluate the performance of our algorithm on many sparsity regimes, we pick an ordering $O=\{O_1,\ldots,O_{(mn)!}\}$ (or $O=\{O_1,\ldots,O_{(mn/4)!}\}$ if in the other sampling regime) of all entries (here  $O_1\subset \ldots\subset O_{(mn)!}=\{1,2,\ldots m\}\times \{1,2,\ldots,n\}$ are increasing subsets of the set of entries. We train our algorithm for $\Omega$ taking each value in $O$, using the previous value as a warm start to dramatically reduce computation
For simplicity, hyperparameters $\lambda_{1,1}$ and $\lambda_{2,2}$ are determined through cross validation \textit{on a fixed, reasonably performing sparsity regime} once and for all. 

We set a tolerance threshold $\epsilon=0.1$ based on convergence analysis, and for each configuration of $a,r,C$, we compute the minimum $N_{\epsilon}$ such that the algorithm achieves RMSE $\leq \epsilon$ for the set $O_{N_{\epsilon}}$. 

Figure~\ref{fig:bound} is a graph of $N_{\epsilon}$ versus the bound in~\ref{corspecific2} for various values of $a,C,r,n$. We set $\epsilon=0.1$, $a=b$. We explored the following values: $a \in \{2,3,\cdots,8\}$
$r \in \{2,3,\cdots,8\}$
$C \in [0.5,2]$
$m \in \{100,101,\cdots,400\}$. For each datapoint, we choose a random combination of the above parameters, generate a random matrix accordingly, and compare the $N_{\epsilon}$ to our bound. 
\begin{centering}
	\begin{figure}
		\centering
		\includegraphics[width=0.97 \linewidth]{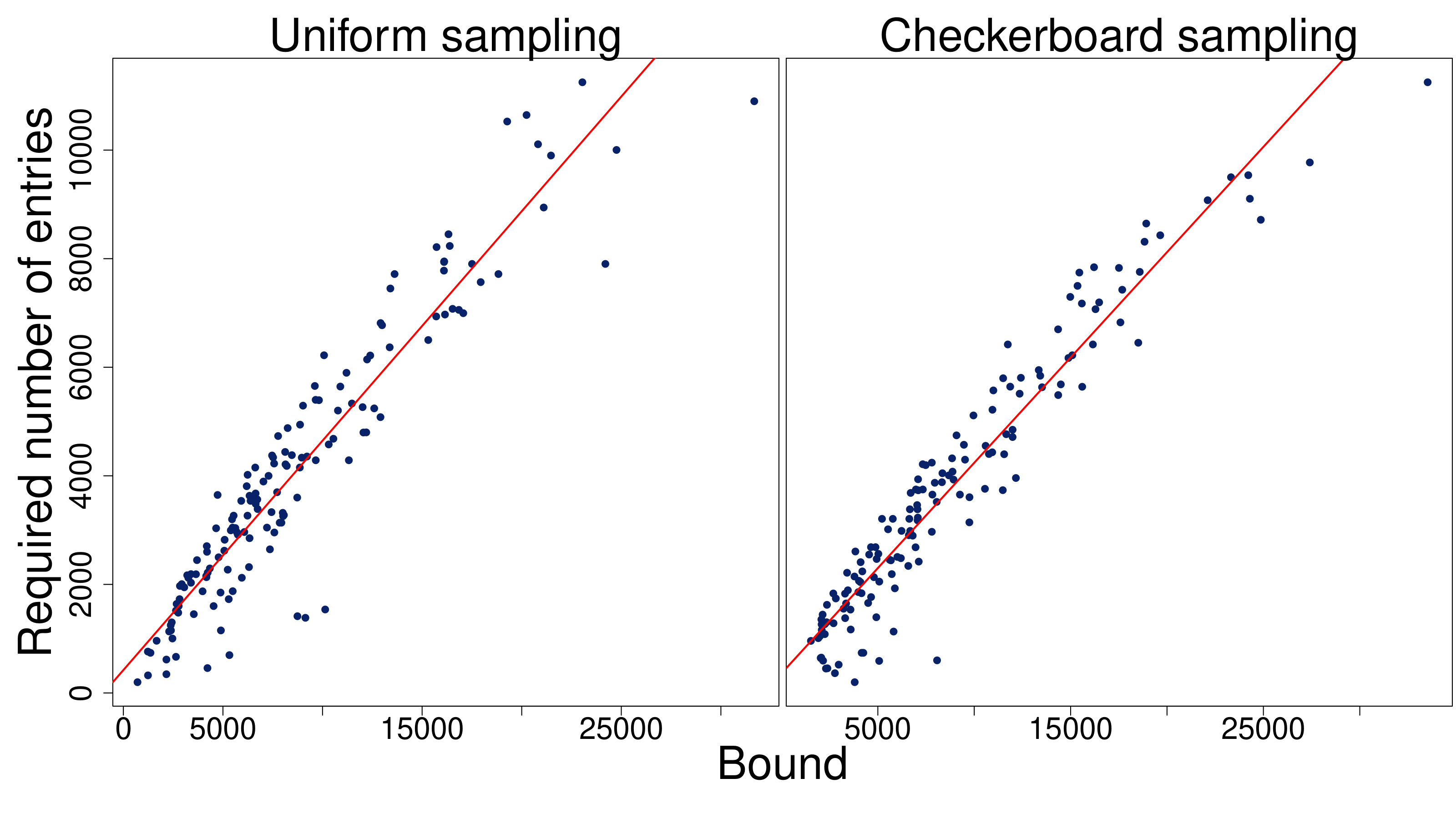}
		\caption{Comparison of our bound in~\ref{corspecific2} and the observed required number of entries to reach $0.1$ validation RMSE. The red lines are obtained through linear regression.}
		\label{fig:bound}
	\end{figure}
\end{centering}
We observe a very good match between the bound and the observed de facto sample complexity.

\section{Miscellaneous Lemmas}
Recall the definition of the Rademacher complexity of a function class $\mathcal{F}$:
\begin{definition}
	Let $\mathcal{F}$ be a class of real-valued functions with range $X$. Let also $S=(x_1,x_2,\ldots,x_n)\subset X$ be $n$ samples from the domain of the functions in $\mathcal{F}$. The empirical Rademacher complexity $\rad_S(\mathcal{F})$ of $\mathcal{F}$ with respect to $x_1,x_2,\ldots,x_n$ is defined by
	\begin{align}
	\rad_S(\mathcal{F}):=\mathbb{E}_{\delta}\sup_{f\in\mathcal{F}}\frac{1}{n}\sum_{i=1}^n  \delta_if(\mathbf{x}_i),
	\end{align}
	where $\delta=(\delta_1,\delta_2,\ldots,\delta_n)\in\{\pm 1\}^n$ is a set of $n$ i.i.d. Rademacher random variables (which take values $1$ or $-1$ with probability $0.5$ each).
\end{definition}
Recall the following classic theorem~\cite{rademach,contraction,Bartlettmend}:
\begin{theorem}
	\label{rademachh}
	Let $Z,Z_1,\ldots,Z_n$ be i.i.d. random variables taking values in a set $\mathcal{Z}$, and let $a<b$. Consider a set of functions $\mathcal{F}\in[a,b]^{\mathcal{Z}}$. $\forall \delta>0$, we have with probability $\geq 1-\delta$ over the draw of the sample $S$ that \begin{align}
	\label{almostthere}
	&\forall f \in \mathcal{F}, \quad \mathbb{E}(f(Z)) \\&\leq \frac{1}{n}\sum_{i=1}^nf(z_i)+2\mathbb{E}(\rad_{S}(\mathcal{F}))+(b-a)\sqrt{\frac{\log(2/\delta)}{2n}}. \nonumber 
	\end{align}
\end{theorem}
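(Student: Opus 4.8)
The plan is to follow the standard two-step route to uniform convergence via Rademacher complexity: a concentration step (McDiarmid's bounded-differences inequality) followed by a symmetrization step that introduces the Rademacher variables. I would begin by isolating the single-sided supremum deviation
\begin{align}
\Phi(S):=\sup_{f\in\mathcal{F}}\left(\mathbb{E}(f(Z))-\frac{1}{n}\sum_{i=1}^n f(z_i)\right),
\end{align}
so that inequality~\eqref{almostthere} is exactly a high-probability upper bound on $\Phi(S)$. Since every $f$ takes values in $[a,b]$, the idea is to bound $\Phi(S)$ in two independent ways: concentrate it around its mean, and then control that mean by $2\mathbb{E}(\rad_S(\mathcal{F}))$.

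For the concentration step I would first verify that $\Phi$ has bounded differences. Replacing a single coordinate $z_i$ by $z_i'$ changes the empirical average $\frac{1}{n}\sum_j f(z_j)$ by at most $(b-a)/n$ uniformly in $f$, and since $|\sup_f A_f-\sup_f B_f|\le \sup_f|A_f-B_f|$, the same increment bound $(b-a)/n$ is inherited by $\Phi$. Applying McDiarmid's inequality with these increments (so $\sum_i c_i^2=(b-a)^2/n$) then yields, with probability at least $1-\delta$,
\begin{align}
\Phi(S)\le \mathbb{E}(\Phi(S))+(b-a)\sqrt{\frac{\log(2/\delta)}{2n}}.
\end{align}
This part is routine; the displayed $\log(2/\delta)$ is a conservative form of the deviation term (a single application of McDiarmid in fact gives $\log(1/\delta)$).

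The heart of the argument, and the step I expect to require the most care, is the symmetrization bound $\mathbb{E}(\Phi(S))\le 2\mathbb{E}(\rad_S(\mathcal{F}))$. I would introduce a ghost sample $S'=(z_1',\ldots,z_n')$ drawn i.i.d. from the same law, rewrite $\mathbb{E}(f(Z))=\mathbb{E}_{S'}\left(\frac{1}{n}\sum_i f(z_i')\right)$, and pull this inner expectation out of the supremum by Jensen's inequality (convexity of the sup), giving $\mathbb{E}(\Phi(S))\le \mathbb{E}_{S,S'}\sup_f \frac{1}{n}\sum_i (f(z_i')-f(z_i))$. Because $(z_i,z_i')$ is exchangeable, each summand $f(z_i')-f(z_i)$ is symmetric, so inserting independent Rademacher signs $\sigma_i$ leaves the joint distribution unchanged; splitting the sum and using subadditivity of the supremum then bounds the quantity by $2\,\mathbb{E}_{S,\sigma}\sup_f \frac{1}{n}\sum_i \sigma_i f(z_i)=2\mathbb{E}(\rad_S(\mathcal{F}))$. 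Combining this estimate with the McDiarmid bound above and rearranging yields~\eqref{almostthere}. The only genuinely delicate points are the exchangeability/sign-insertion argument and the correct bookkeeping of which expectation is taken over which source of randomness; everything else is mechanical.
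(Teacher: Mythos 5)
Your proof is correct: the paper does not prove this theorem itself but cites it as a classical result, and your McDiarmid-plus-symmetrization argument is exactly the standard proof found in the cited references (e.g.\ Bartlett--Mendelson). Your side remark is also accurate --- since the bound is stated with the \emph{expected} Rademacher complexity $\mathbb{E}(\rad_S(\mathcal{F}))$ rather than the empirical one, a single application of McDiarmid suffices and $\log(1/\delta)$ would do; the $\log(2/\delta)$ in the statement is the conservative form inherited from the two-sided version.
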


With a simple extra integration argument, we obtain the following version in expectation: 
\begin{theorem}
	\label{rademachhexp}
	Let $Z,Z_1,\ldots,Z_n$ be i.i.d. random variables taking values in a set $\mathcal{Z}$, and let $a<b$. Consider a set of functions $\mathcal{F}\in[a,b]^{\mathcal{Z}}$. $\forall \delta>0$, we have in expectation over the draw of the sample $S$ that \begin{align}
	&\inf_{f \in \mathcal{F}}  \left (\mathbb{E}(f(Z)) -\frac{1}{n}\sum_{i=1}^nf(z_i)\right) \nonumber \\&\leq 2\mathbb{E}(\rad_{S}(\mathcal{F}))+3(b-a)\sqrt{\frac{1}{n}}.
	\end{align}
\end{theorem}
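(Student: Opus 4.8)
The plan is to obtain the expectation bound directly from the high-probability bound of Theorem~\ref{rademachh} by integrating its tail, rather than re-running a symmetrization argument; this is the "simple extra integration" alluded to. Write $c:=b-a$ and treat $\mathcal{A}:=2\mathbb{E}(\rad_S(\mathcal{F}))$ as a \emph{deterministic} shift, since it is already an expectation over $S$ and hence a constant. Setting
\[
V(S):=\sup_{f\in\mathcal{F}}\left(\mathbb{E}(f(Z))-\frac{1}{n}\sum_{i=1}^n f(z_i)\right)-\mathcal{A},
\]
the statement "with probability $\geq 1-\delta$, $\forall f\in\mathcal{F}$ the inequality~\eqref{almostthere} holds" is exactly equivalent to the tail bound $\mathbb{P}\!\left(V(S)>c\sqrt{\tfrac{\log(2/\delta)}{2n}}\right)\leq\delta$ for all $\delta\in(0,1]$. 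Note that each term lies in $[-c,c]$, so $V(S)\leq c$ is bounded and its expectation is well defined.

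Next I would invert the threshold by the change of variables $t=c\sqrt{\log(2/\delta)/(2n)}$, i.e.\ $\delta=2\exp(-2nt^2/c^2)$, which turns the tail into $\mathbb{P}(V(S)>t)\leq 2\exp(-2nt^2/c^2)$. This inversion is only legitimate once $\delta\leq 1$, that is for $t\geq t_0:=c\sqrt{\log 2/(2n)}$; for $t<t_0$ the bound from Theorem~\ref{rademachh} is vacuous and I would simply use $\mathbb{P}(V(S)>t)\leq 1$. Applying the layer-cake identity $\mathbb{E}[V(S)^+]=\int_0^\infty \mathbb{P}(V(S)>t)\,dt$ and splitting at $t_0$ gives
\[
\mathbb{E}[V(S)^+]\leq t_0+\int_0^\infty 2\exp\!\left(-\frac{2nt^2}{c^2}\right)dt = c\sqrt{\frac{\log 2}{2n}}+c\sqrt{\frac{\pi}{2n}}=\frac{c}{\sqrt{2n}}\left(\sqrt{\log 2}+\sqrt{\pi}\right),
\]
where the Gaussian integral is evaluated by substituting $u=t\sqrt{2n}/c$.

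Finally I would collapse the constant: since $(\sqrt{\log 2}+\sqrt{\pi})/\sqrt{2}\approx 1.84<3$, we get $\mathbb{E}[V(S)^+]\leq 3c/\sqrt{n}$. Because $V(S)\leq V(S)^+$ pointwise and $\inf_{f}(\cdot)\leq\sup_{f}(\cdot)$, taking the expectation over $S$ yields
\[
\mathbb{E}_S\!\left[\inf_{f\in\mathcal{F}}\left(\mathbb{E}(f(Z))-\frac{1}{n}\sum_{i=1}^n f(z_i)\right)\right]\leq \mathbb{E}_S[V(S)]+\mathcal{A}\leq 2\mathbb{E}(\rad_S(\mathcal{F}))+3(b-a)\sqrt{\frac{1}{n}},
\]
which is the claim. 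There is no serious obstacle here: the only point needing care is the sub-threshold region $t<t_0$, where the probability bound degenerates and must be replaced by the trivial bound $1$. This contributes precisely the $\sqrt{\log 2}$ term and is what forces the loose constant $3$ (as opposed to the tighter constant obtainable from a direct symmetrization argument); the rest is bookkeeping of the Gaussian integral and verifying that the accumulated constant stays below $3$.
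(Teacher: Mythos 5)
Your proof is correct, and it reaches the stated bound by a route that differs from the paper's in its final step. Both arguments start from the same place --- the high-probability bound of Theorem~\ref{rademachh}, reinterpreted as a tail bound on $V(S)=\sup_{f\in\mathcal{F}}\bigl(\mathbb{E}(f(Z))-\tfrac{1}{n}\sum_i f(z_i)\bigr)-2\mathbb{E}(\rad_S(\mathcal{F}))$ (the paper works with the $\inf$ directly, which is dominated by the $\sup$, so the two are interchangeable here) --- but they convert it to an expectation differently. The paper peels the tail discretely: it sets $\delta_i=2^{-i}$, partitions the sample space into the annuli $\widetilde{A}_i=A_i\setminus A_{i-1}$ with $A_i=\{X\leq\phi(\delta_i)\}$, and sums $\phi(\delta_i)\,2^{-(i-1)}$, arriving at the constant $5$. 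You instead invert the threshold continuously, $\delta(t)=2\exp(-2nt^2/(b-a)^2)$, and apply the layer-cake identity, splitting at $t_0=(b-a)\sqrt{\log 2/(2n)}$ where the bound ceases to be vacuous; the Gaussian integral then gives $(b-a)(\sqrt{\log 2}+\sqrt{\pi})/\sqrt{2n}\approx 1.85\,(b-a)/\sqrt{n}$. Your handling of the sub-threshold region (trivial bound $1$ on $[0,t_0]$) and the observation that $2\mathbb{E}(\rad_S(\mathcal{F}))$ is a deterministic shift are exactly the points that need care, and you got both right. One notable payoff of your version: it actually establishes the constant $3$ appearing in the theorem statement, whereas the paper's own dyadic computation only concludes with $5(b-a)\sqrt{1/n}$, so the printed proof and the printed statement are mismatched; your argument closes that gap. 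The continuous integration is also slightly cleaner in that it avoids the conditional-expectation bookkeeping over the events $\widetilde{A}_i$.
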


\begin{proof}
	Let $X=\inf_{f \in \mathcal{F}}  \left (\mathbb{E}(f(Z)) -\frac{1}{n}\sum_{i=1}^nf(z_i)\right)-2\mathbb{E}(\rad_{S}(\mathcal{F}))$. Let us also write $\phi(\delta)$ for $(b-a)\sqrt{\frac{\log(2/\delta)}{2n}}$. By~\eqref{almostthere}, we have \begin{align}
	\mathbb{P}\left(X \geq \phi(\delta)\right)\leq \delta
	\end{align}
	For all $i\geq 1$, let us write $A_i$ for the event $\{X \leq \phi(\delta_i)\}$ where $\delta_i=2^{-i}$. 
	Let us also write $\widetilde{A}_1:=A_1$ and for $i\geq 2$, $\widetilde{A}_i:=A_i\setminus A_{i-1}$. We have, for $i\geq 2$, $\mathbb{P}(\widetilde{A}_i)\leq \mathbb{P}(A_{i-1}^c)\leq \delta_{i-1}$, and for $i=1$, $\mathbb{P}(\widetilde{A}_1)\leq 1= \delta_{i-1}$.
	
	Thus, 
	\begin{align*}
	\E(X)&=\sum_{i=1}^\infty \E(X|\widetilde{A}_i)\P(\widetilde{A}_i)\\
	&\leq \sum_{i=1}^\infty \E(X|\widetilde{A}_i)\delta_{i-1}\\
	&=\sum_{i=1}^\infty \phi(\delta_{i})\frac{1}{2^{i-1}}\\
	&=\sum_{i=1}^\infty (b-a)\sqrt{\frac{1+i}{2n}}\frac{1}{2^{i-1}}\\&\leq (b-a)\sqrt{\frac{2}{n}}\sum_{i=1}^\infty  2^{-i/2}\\
	&\leq (b-a)\sqrt{\frac{2}{n}} \frac{1/\sqrt{2}}{1-1/\sqrt{2}}\\
	&=  (b-a)\sqrt{\frac{2}{n}} \frac{1}{\sqrt{2}-1}\\
	&\leq 5(b-a)\sqrt{\frac{1}{n}},
	\end{align*}
	as expected.

\end{proof}

\begin{proposition}[Cf.~\cite{SimplerMC}]
	\label{bernstein}
	Let $X_1,\ldots,X_S$ be independent, zero mean random matrices of dimension $m\times n$. For all $k$, assume $\|X_k\|\leq M$ almost surely, and denote $\rho_k^2=\max(\|\E(X_kX_k^\top) \|,\|\E(X^\top_kX_k) \|)$. For any $\tau>0$, 
	\begin{align}
	&\mathbb{P}\left(\left \|\sum_{k=1}^S X_k \right \|\geq \tau \right)\nonumber \\& \leq (m+n)\exp\left(  -\frac     {\tau^2/2}     {\sum_{k=1}^S\rho_k^2 +M\tau/3}      \right).
	\end{align}
	
\end{proposition}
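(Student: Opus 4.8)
The plan is to establish this as a standard matrix Bernstein inequality via the matrix Laplace transform method, reducing first to the Hermitian case through the self-adjoint dilation. For an $m\times n$ matrix $A$ I would define the $(m+n)\times(m+n)$ Hermitian matrix
\[
\mathcal{H}(A)=\begin{pmatrix} 0 & A \\ A^\top & 0 \end{pmatrix},
\]
and use the two elementary facts $\|A\|=\lambda_{\max}(\mathcal{H}(A))$ and $\mathcal{H}(A)^2=\diag(AA^\top,A^\top A)$. Applying the second identity to $X_k$ gives $\|\E(\mathcal{H}(X_k)^2)\|=\max(\|\E(X_kX_k^\top)\|,\|\E(X_k^\top X_k)\|)=\rho_k^2$. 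Since $\mathcal{H}$ is linear, $Y:=\sum_k\mathcal{H}(X_k)=\mathcal{H}(\sum_k X_k)$ is a sum of independent, zero-mean Hermitian matrices of operator norm $\leq M$, and $\|\sum_k X_k\|=\lambda_{\max}(Y)$. This reduces the claim to a tail bound on $\lambda_{\max}(Y)$.

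Next I would apply the matrix Markov bound: for every $\theta>0$,
\[
\mathbb{P}\big(\lambda_{\max}(Y)\geq\tau\big)\leq e^{-\theta\tau}\,\E\,\Tr\, e^{\theta Y}.
\]
The key step is to control the trace moment generating function. By Lieb's concavity theorem (equivalently, the subadditivity of the matrix cumulant generating function) one has $\E\,\Tr\,e^{\theta Y}\leq \Tr\exp\big(\sum_k\log\E\,e^{\theta\mathcal{H}(X_k)}\big)$. Each summand is controlled by the standard per-matrix estimate $\log\E\,e^{\theta W}\preceq g(\theta)\,\E(W^2)$, with $g(\theta)=\tfrac{\theta^2/2}{1-\theta M/3}$ valid for $0<\theta<3/M$ and any zero-mean Hermitian $W$ with $\|W\|\leq M$; this in turn follows from the scalar inequality $e^{\theta x}\leq 1+\theta x+g(\theta)x^2$ for $x\leq M$ via functional calculus together with $\E W=0$ and the operator monotonicity of $\log$. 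Summing, $\sum_k\log\E\,e^{\theta\mathcal{H}(X_k)}\preceq g(\theta)\sum_k\E(\mathcal{H}(X_k)^2)$, and since $\Tr\exp$ is monotone with respect to $\preceq$ on $(m+n)$-dimensional matrices,
\[
\Tr\exp\Big(g(\theta)\textstyle\sum_k\E(\mathcal{H}(X_k)^2)\Big)\leq (m+n)\exp\Big(g(\theta)\,\lambda_{\max}\big(\textstyle\sum_k\E(\mathcal{H}(X_k)^2)\big)\Big).
\]
Applying the triangle inequality to the positive semidefinite matrices $\E(\mathcal{H}(X_k)^2)$ gives $\lambda_{\max}(\sum_k\E(\mathcal{H}(X_k)^2))\leq\sum_k\rho_k^2=:v$, so that $\mathbb{P}(\|\sum_k X_k\|\geq\tau)\leq(m+n)e^{-\theta\tau+g(\theta)v}$; the choice $\theta=\tau/(v+M\tau/3)\in(0,3/M)$ then produces the exponent $-\tfrac{\tau^2/2}{v+M\tau/3}$, which is exactly the claimed bound.

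The only genuinely deep ingredient, and hence the main obstacle to a fully self-contained argument, is Lieb's concavity theorem underpinning the subadditivity of the matrix cumulant generating function; everything else is elementary functional calculus and a one-variable optimization. I note also that the proposition is stated with the loose variance proxy $\sum_k\rho_k^2$ rather than the sharp quantity $\|\sum_k\E(X_k^2)\|$, the two differing precisely by the triangle-inequality step used above. Consequently an alternative route is simply to invoke the sharp matrix Bernstein bound of~\cite{SimplerMC} and pass to the weaker proxy $\sum_k\rho_k^2$ in the final step, which is the approach best suited to the present paper.
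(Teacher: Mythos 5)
The paper does not actually prove this proposition: it is imported wholesale from the cited reference \cite{SimplerMC} (the ``Cf.''\ in the statement is the entirety of the paper's justification), and the only thing the paper does with it is integrate the tail bound to obtain the expectation version in Proposition~\ref{bernsteinexp}. Your final remark---that one may simply invoke the sharp matrix Bernstein bound with variance proxy $\bigl\|\sum_k\E(\mathcal{H}(X_k)^2)\bigr\|$ and then relax to $\sum_k\rho_k^2$ by the triangle inequality---is therefore exactly what the paper implicitly does, and it is the right call in context. Your main argument is nonetheless a correct and essentially complete account of the standard proof that the paper omits: the self-adjoint dilation identities, the reduction to $\lambda_{\max}$ of a sum of bounded zero-mean Hermitian matrices, the matrix Laplace transform combined with Lieb/subadditivity of the matrix cumulant generating function, the per-summand bound $\log\E e^{\theta W}\preceq g(\theta)\E(W^2)$ with $g(\theta)=\tfrac{\theta^2/2}{1-\theta M/3}$, and the choice $\theta=\tau/(v+M\tau/3)$ all check out, including the final algebra giving the exponent $-\tfrac{\tau^2/2}{v+M\tau/3}$. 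The one step worth flagging as genuinely nontrivial is the one you already flag (Lieb's theorem); the operator monotonicity of $\log$ needed to convert $\E e^{\theta W}\preceq\exp(g(\theta)\E W^2)$ into a bound on $\log\E e^{\theta W}$ is also doing real work, but you cite it. In short: correct, and strictly more than the paper provides.
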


\begin{proposition}
	\label{bernsteinexp}
	Under the assumptions of Proposition~\ref{bernstein},  writing $\sigma^2=\sum_{k=1}^S\rho_k^2$, we have 
	\begin{align}
	\E\left( \left \|\sum_{k=1}^S X_k \right \| \right)&\leq \sqrt{8/3}\sigma (1+\sqrt{\log(m+n)})\nonumber \\&+ \frac{8M}{3}(1+\log(m+n)).
	\end{align}
\end{proposition}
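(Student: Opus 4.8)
The plan is to convert the tail bound of Proposition~\ref{bernstein} into a bound on the expectation by means of the layer-cake formula. Write $W=\left\|\sum_{k=1}^S X_k\right\|$ and $D=m+n$, so that Proposition~\ref{bernstein} reads $\mathbb{P}(W\geq\tau)\leq D\exp\!\left(-\tfrac{\tau^2/2}{\sigma^2+M\tau/3}\right)$ for every $\tau>0$. Since $W\geq 0$, we have $\E(W)=\int_0^\infty \mathbb{P}(W\geq\tau)\,d\tau$, and the entire argument reduces to integrating this tail estimate against $d\tau$.

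The key step I would carry out is a reparametrisation that linearises the exponent. Solving $\tfrac{\tau^2/2}{\sigma^2+M\tau/3}=t$ for $\tau$ gives the root $\tau=\tfrac{Mt}{3}+\sqrt{\tfrac{M^2t^2}{9}+2\sigma^2 t}$, and applying $\sqrt{x+y}\leq\sqrt x+\sqrt y$ shows that the simpler function $\tau(t):=\sigma\sqrt{2t}+\tfrac{2Mt}{3}$ dominates this root. Because the exponent $g(\tau)=\tfrac{\tau^2/2}{\sigma^2+M\tau/3}$ is strictly increasing on $[0,\infty)$ (a one-line derivative check), $\tau(t)$ over-estimating the root at level $t$ forces $g(\tau(t))\geq t$, hence $\mathbb{P}(W\geq\tau(t))\leq De^{-t}$. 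As $t$ ranges over $[0,\infty)$, $\tau(t)$ is a smooth increasing bijection of $[0,\infty)$ with $\tau'(t)=\tfrac{\sigma}{\sqrt{2t}}+\tfrac{2M}{3}$, so changing variables in the layer-cake integral (and using the trivial bound $\mathbb{P}(\cdot)\leq 1$) yields
\[
\E(W)\leq\int_0^\infty \min\!\left(1,De^{-t}\right)\left(\frac{\sigma}{\sqrt{2t}}+\frac{2M}{3}\right)dt.
\]

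It then remains to split this integral at $t=L:=\log(m+n)$, where $\min(1,De^{-t})$ switches from $1$ to $De^{-t}$ (note $De^{-L}=1$). On $[0,L]$ the factor is $1$, and direct integration — the $t^{-1/2}$ piece is improper but convergent — gives $\sqrt 2\,\sigma\sqrt L+\tfrac{2M}{3}L$. On $[L,\infty)$ I would bound $\tfrac{1}{\sqrt{2t}}\leq\tfrac{1}{\sqrt{2L}}$ and use $\int_L^\infty De^{-t}\,dt=De^{-L}=1$ to obtain $\tfrac{\sigma}{\sqrt{2L}}+\tfrac{2M}{3}$. Collecting the four pieces produces
\[
\E(W)\leq \sqrt 2\,\sigma\sqrt L+\frac{\sigma}{\sqrt{2L}}+\frac{2M}{3}(L+1),
\]
which is in fact slightly stronger than the claimed inequality.

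Finally I would match this against the stated bound term by term, using $\sqrt 2\leq\sqrt{8/3}$ and $\tfrac23\leq\tfrac83$ for the leading $\sqrt L$ and $L$ terms, and $\tfrac{1}{\sqrt{2L}}\leq\sqrt{8/3}$ to absorb the residual $\sigma$-term into the constant $\sqrt{8/3}\,\sigma$ of the target — the latter being valid because $m+n\geq 2$ forces $L\geq\log 2>\tfrac{3}{16}$. The only genuinely delicate point is the reparametrisation: one must verify the monotonicity of $g$ and that $\tau(t)$ really does over-estimate the exact root, so that replacing the awkward denominator $\sigma^2+M\tau/3$ by the clean sub-gamma tail $De^{-t}$ is legitimate. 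Once that is in place, the improper integral and the constant-chasing are elementary and pose no real difficulty.
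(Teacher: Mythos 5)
Your proof is correct, and it takes a genuinely different route from the paper's. The paper splits the Bernstein denominator into the sub-Gaussian regime ($\tau M\leq\sigma^2$) and the sub-exponential regime ($\tau M\geq\sigma^2$), which yields the two-term tail $\mathbb{P}(Y\geq\tau)\leq\min\bigl(1,(m+n)e^{-3\tau^2/(8\sigma^2)}\bigr)+\min\bigl(1,(m+n)e^{-3\tau/(8M)}\bigr)$; it then integrates each summand of the layer-cake separately, truncating each integral at the point where the exponential factor drops below one. The factors $8/3$ in the statement are exactly the worst-case losses ($\sigma^2+M\tau/3\leq\tfrac{4}{3}\sigma^2$ or $\leq\tfrac{4}{3}M\tau$) incurred in each regime. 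You instead perform the standard sub-gamma tail inversion: solve the exponent for $\tau$, dominate the root by $\sigma\sqrt{2t}+\tfrac{2Mt}{3}$ via $\sqrt{x+y}\leq\sqrt{x}+\sqrt{y}$, and change variables in the layer-cake integral. Your verification of the two prerequisites for that substitution — monotonicity of the exponent $g$ (the derivative computation gives $g'(\tau)=\bigl(\tau\sigma^2+M\tau^2/6\bigr)/\bigl(\sigma^2+M\tau/3\bigr)^2\geq 0$) and the domination of the exact root — is exactly the right thing to check, and both hold. The payoff is a cleaner single integral and strictly better constants ($\sqrt{2}$ in place of $\sqrt{8/3}$ on the leading term and $\tfrac{2M}{3}$ in place of $\tfrac{8M}{3}$ on the logarithmic term), with the residual $\sigma/\sqrt{2L}$ term absorbed using $\log(m+n)\geq\log 2>3/16$, which always holds for a genuine matrix. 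Both arguments are elementary consequences of Proposition~\ref{bernstein} plus the layer-cake formula; yours is marginally sharper, the paper's avoids the change of variables at the cost of looser constants.
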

\begin{proof}
	The result in O notation is an exercise from~\cite{bookhighprob}, and a similar result is also mentioned in both~\cite{foygel2011learning} and~\cite{tropp}. 
	
	For completeness and to get the exact constants, we include a proof as follows. 
	
	Let $Y=\left  \|\sum_{k=1}^S X_k \right \|$. By Proposition~\ref{bernstein}, splitting into two cases depending on whether $\tau M\leq \sigma^2$ or $\tau M\geq \sigma^2$ we have 
	\begin{align}
	\label{thesource}
	\mathbb{P}(Y\geq \tau)\leq \min\left(1,(m+n)\exp\left[-\frac{3\tau^2}{8\sigma^2}\right]\right)\nonumber \\+\min\left(1,(m+n)\exp\left[-\frac{3\tau}{8M}\right]\right).
	\end{align}
	
	Now note that writing $\kappa$ for $\log(m+n)8M/3$, we have
	\begin{align}
	\label{easy1}
	&\int_{0}^\infty 1\land (m+n)\exp\left(-\frac{3\tau}{8M}\right) d     \tau \nonumber \\&\leq \int_{0}^\kappa  1\land (m+n)\exp\left(-\frac{3\tau}{8M}\right) d     \tau \nonumber\\& \quad \quad \quad + \int_{\kappa}^\infty (m+n)\exp\left(-\frac{3\tau}{8M}\right)  d     \tau\nonumber\\
	&\leq \kappa +\left[\frac{-8M}{3} (m+n)\exp\left(-\frac{3\tau}{8M}\right)   \right]^{\infty}_{\kappa}\nonumber\\
	&=\kappa+  \frac{8M(m+n)}{3} \exp\left(-\frac{3\kappa}{8M}\right) \nonumber\\
	&=\kappa + \frac{8M(m+n)}{3}=\frac{8M}{3}(1+\log(m+n)).
	\end{align}

	We also have, writing $\psi$ for $\sigma \sqrt{\log(m+n)8/3}$, 
	
	\begin{align}
	\label{hard1}
	&\int_{0}^\infty 1\land (m+n)\exp\left(-\frac{3\tau^2}{8\sigma^2}\right) d\tau \nonumber\\&\leq  \int_{0}^\psi  1d\tau+\int_{\psi}^\infty  (m+n)\exp\left(-\frac{3\tau^2}{8\sigma^2}\right) d\tau\nonumber\\
	&\leq \psi  + \int_{\psi}^\infty \exp\left(-\frac{3(\tau^2-\psi^2)}{8\sigma^2}\right) d\tau  \nonumber\\
	&\leq \psi  + \int_{\psi}^\infty \exp\left(-\frac{3(\tau-\psi)^2}{8\sigma^2}\right) d\tau \nonumber \\
	&\leq \psi +\sigma \sqrt{2\pi/3}\nonumber\\&=\sigma \left[ \sqrt{\log(m+n)8/3} +\sqrt{2\pi/3} \right]\nonumber\\&\leq \sqrt{8/3}\sigma (1+\sqrt{\log(m+n)}). 
	\end{align}
	
	Plugging inequalities~\eqref{easy1} and~\eqref{hard1} into equation~\eqref{thesource}, we obtain: 
	\begin{align}
	\mathbb{E}(Y)&\leq \int_{0}^\infty \mathbb{P}(Y\geq \tau)d\tau \nonumber \\
	&\leq \sqrt{8/3}\sigma (1+\sqrt{\log(m+n)})\\ \nonumber& \quad \quad \quad \quad + \frac{8M}{3}(1+\log(m+n)),
	\end{align}
	as expected.

\end{proof}

\section{Details of the matrix generation procedure for the synthetic data experiments}

Our generation procedure can be described as follows: let $\tilde{a}$ be the vector with components $\tilde{a}_{i}=i-\frac{m+1}{2}$ and let $\tilde{b}$ be the vector with components $\tilde{a}_{j}=i-\frac{n+1}{2}$. Let $a=\frac{\tilde{a}}{\|\tilde{a}\|}$ and $b=\frac{\tilde{b}}{\|\tilde{b}\|}$.  We also write $v_1\in \mathbb{R}^m$ for $(\frac{1}{\sqrt{m}},\frac{1}{\sqrt{m}},\ldots,\frac{1}{\sqrt{m}})^\top$ and $v_2\in \mathbb{R}^n$ for $(\frac{1}{\sqrt{n}},\frac{1}{\sqrt{n}},\ldots,\frac{1}{\sqrt{n}})^\top$ Then we define  $G=\frac{1}{2}av_2^\top +\frac{1}{2}v_1 b^\top$ and $S \in \R^{m \times n}$ where $S_{i,j} = (1 / mn)$, if $(i,j) \in \{1,\cdots,m/2\} \times \{1, \cdots,n/2\} \cup \{m/2+1,\cdots,m\} \times \{n/2+1,\cdots,n\}$, and $S_{i,j} = -(1 / mn)$ otherwise. Therefore, we can generate a matrix $R \in \R^{m \times n}$ as 
\begin{align}
\label{genMAt}
R(\alpha) = \alpha cG + (1-\alpha) cS,
\end{align}
where $\alpha \in [0,1]$ is a parameter that controls the relative intensities of the user/item biases and the non-inductive component, and $c$ is a scaling constant. Note that $G$ is composed of the sum of two terms. The first term is a matrix with all rows being equal, whilst the second term's columns are all equal. Thus $G$ is made up of user and item biases. On the other hand, the $S$ matrix can be divided in four blocks of equal sizes. The top left and bottom right blocks entries have a constant value of $(1/mn)$. The remaining block has entries with the value $-(1/mn)$.

To perform the experiments we needed to select the parameters $m, n, c$ and $\alpha$. We chose\footnote{For a small number of incoherent eigenvectors, which is the situation in the case described here, the choice $c=\sqrt{mn}$ ensure entries of size close to one.} $m=n=c=100$. The parameter $\alpha \in \{0, 0.25, 0.5, 0.75, 1\}$ was empirically selected in such a way that the expected intensity of the biases' component varied. Note that in the extremes of the $\alpha$ interval the generated matrix is just composed of one of the components.

To determine the number of observed entries and the sampling distribution, we considered two extra parameters: the percentage of observed entries $p_{\Omega}$ and a parameter $\gamma \in \mathbb{N}$ that manages the sparsity distribution. Given a fixed $p_{\Omega}$ we randomly selected $\gamma(p_{\Omega}mn/(\gamma+1))$ entries in the first $m/2$ rows and $(p_{\Omega}mn/(\gamma+1))$ in the remaining ones. The parameter $p_{\Omega}$ was varied in $\{0.15,0.30,0.50\}$  and the parameter $\gamma$ as varied in the range $\{1,4\}$ ($\gamma=1$ indicates uniformly sampled observations).

Note that for SoftImpute we need an extra post-processing step to estimate the biases. In this case, we calculate the matrix bias as the average of the SI-predicted matrix entries. After subtracting the SI matrix bias, we calculated the users and the items bias by averaging the columns and the rows, respectively.

\section{In-depth literature review}
A major step signaling the beginning of the construction of a formal theory of matrix completion was the introduction of the SoftImpute algorithm~\cite{softimpute}, which uses the nuclear norm as a regularizer. 
Around the same time, the field witnessed a series of breakthroughs in the study of how many entries are required to recover a low-rank matrix exactly~\cite{genius,CandesRecht} or approximately from noisy entries~\cite{noisy,Kolchinski}. Those works assume that the entries of the matrix are sampled uniformly. A simpler and more complete approach to the same results was provided in both~\cite{SimplerMC} and~\cite{physics}. The conclusion of the works on exact recovery is that if the entries are sampled uniformly, it is possible to recover the matrix with high probability assuming $O(\mu r n \log(n)^2)$ entries are observed, where $n$ is the dimension of the matrix, and $\mu$ is some notion of \textit{coherence}, which is $O(1)$ if the singular vectors have roughly equal components. 

Of course, there is a huge branch of literature focusing on the optimization aspect of matrix completion~\cite{optimization1,optimization2,optimization3,optimization4,optimization5,optimization6,optimizationCore,optimizationBest}.

In~\cite{netflixwin}, user biases were trained jointly with other methods, including methods taking time dependence into account, but no nuclear-norm regularization was used. 

Other works~\cite{wain,completingprovably,nonuniform,nonunif} have focused on the case of non uniformly sampled entries. The general form of the results obtained is (similarly to the uniform case) that $O(r n \log(n)^2)$ observed entries are sufficient. However, these results come at the cost of either making strong explicit assumptions on the distributions, sometimes with relevant constants showing up in the bounds, or strong modifications of the algorithm. 

The case of non-uniform entries with absolutely no assumption on the sampling distribution is an interesting one that commands a completely different approach. It was studied in~\cite{ReallyUniform1,ReallyUniform2}.
The most related work to ours is~\cite{mostrelated}, where the authors study, and provide generalization bounds for a model composed of a sum of an IMC term and a standard SoftImpute model. This model is a particular case of ours. Note  we require to adapt proofs to obtain bounds with a tighter dependence on the dimensions of both left and right side information for the bounds to be non trivial in case of user biases. Furthermore, no notion of interpretability or orthogonality was presented in~\cite{mostrelated}.

Inductive matrix completion~\cite{IMC,IMC1,IMC2,IMC3} is the problem of solving matrix completion with some side information: given some features $X\in \mathbb{R}^{m\times d_1}$ and $Y\in \mathbb{R}^{n \times d_2}$, it tries to find a low-rank matrix $M$ such that $R=XMY^{\top}$ approximates the observed matrix well. It has found many successful applications in recent years~\cite{IMCforDrug,IMCGenes,blogrec}. Theoretical guarantees were provided in~\cite{optimization2,IMCtheory1,IMCtheory2}. 
Note that in the basic model, successful IMC requires that the columns of $X$ (resp. $Y$) span the left (resp. right) singular vectors of the SVD of the ground truth matrix (this case is often referred to as "perfect" side information). In~\cite{mostrelated} the extended model $R=XMY^{\top}+N$, with nuclear-norm regularization applied to both $M$ and $N$ was proposed. 
Recently, progress was made in the direction of matrix completion with side information with the need to extract features jointly~\cite{ProvableIMC}.

The idea of nuclear-norm minimization was also extended to tensors in various ways~\cite{tensor,tensor2,Tensor3} as there is no unique approach which provides all the benefits the nuclear norm enjoys in the matrix case.

\subsection{Models with similarities to ours}
\label{sec:modelswithsim}
In this section, we explain the particulars of some recently proposed models which may be understood as using combinations of side information and generic low rank constraints, or other variations of parts of the ideas we propose. We note that in each case, substantial differences remain between the works in question and the present paper.

In~\cite{NICE}, the authors introduce a model with similarities (and differences) to both~\cite{mostrelated} and the present work: assuming one is given side information matrices $X$ and $Y$, the authors present the following model: first, the matrices $X$ and $Y$ are augmented by a columns of ones resulting in the matrices $\bar{X}=[X,\textbf{1}]$ and $\bar{Y}=[Y,\textbf{1}]$. Predictors then take the form $E=\bar{X}M(\bar{Y})^\top+\Delta$, with nuclear norm regularisation imposed on $E$ and $L^1$ (or nuclear norm) regularisation on $M$, and Frobenius norm regularization imposed on $\Delta$, with the constraint that $P_{\Omega}(E)=R_\Omega$ where $R_\Omega$ denotes the observed entries. Thus, similarly to~\cite{mostrelated}, the authors allow hyperparameters to decide how much to trust the side information by employing nuclear norm regularisation on \textit{both} the $M$ inside the IMC term \textit{and} nuclear norm regularisation on the whole predictor. However, the strategy remains different from~\cite{mostrelated}, which applies nuclear norm regularisation to a residual term instead. The authors prove generalisation bounds in the uniform sampling regime which scale as the bounds in~\cite{IMCtheory1} in the case where the corresponding realizability assumptions are satisfied and scale as the state-of-the-art bounds in~\cite{ReallyUniform1,mostrelated} in the case where the side information is not good and the model must rely purely on the generic nuclear norm regularisation. A similarity between that model and ours is the augmentation of the matrices $X$ and $Y$ by a vector of ones (which also modifies the corresponding realizability assumptions). As such the proposed predictors involve a combination of lower order terms (which depend on one of the side information terms but not the others: $xv^\top+yw^\top$ where $x,y$ are the side information vectors and $u,w$ are trainable parameters) and higher order terms ($xMy^\top$ where $M$ is trainable). Thus if the side information is categorical, the model predictors will be similar to the predictors in our model OMIC+. However, the extra interpretability and optimization benefits we reap from the cross-term orthogonality constraints are absent in that work. Further, the optimization problem remains different and the generalization bounds do not explicitly exploit community structure. Contrary to our models BOMIC and BOMIC+ (but similarly to our model OMIC+, to which it is more closely related), there are also actually no user or item biases in that work (unless the matrices $X$ and $Y$ include identity matrices as submatrices), although there is a matrix-wise bias.

In~\cite{Rev2.2}, the authors solve an explicit rank minimization problem under linear constraints on the matrix (this problem is now commonly referred to as 'Matrix Regression' (see also~\cite{Rechtmatrixequations})). In that formulation, instead of observing entries of the matrix, iid measurements of the form $v^\top Mw$ are made where $w,v$ are Gaussian vectors. As such, the problem is related to both inductive matrix completion (because the $v$s and $w$s play a similar role as side information vectors in IMC) and classic matrix completion (because different observations never correspond to the same $v$ or $w$ and the dimensions of $v$ and $w$ are the same as the ground truth matrix). In this context and with an explicit low-rank assumption the authors show a sample complexity bound of $O(r^3n\log(n))$, which is in line with bounds for classic matrix completion under a uniform sampling assumption. A similar problem with nuclear norm regularisation was studied in~\cite{IMCtheory2}, together with its link to IMC. As explained in our Theory section, none of the bounds in either of those works give tight bounds for the community setting. 
Later in~\cite{Rev2.1}, an ingenious new algorithm was proposed, reducing the sample complexity to $O((m+n)r)$.

In~\cite{Rev2.3}, the authors propose a very general optimization framework that encompasses both the matrix regression problem mentioned above and low rank matrix completion, as well as one-bit matrix completion.

\subsection{Matrix completion with graph side information}
In~\cite{Koren1,Koren2}, the authors propose a model based on user biases combined with neighborhood-based models.  In~\cite{SocialReg1,SocialReg2,SocialReg3,SocialReg4,SocialReg5,SocialReg6}, the authors construct various low-rank matrix completion problems with regularizers inspired from the graph side information (typically, the feature vectors of adjacent nodes in the graph are regularised to be close to each other).  In~\cite{Combined}, the authors ingeniously combine this idea with user biases. Notably, in~\cite{TheoryReg}, some generalization guarantees were provided for such regularization strategies. 

\subsection{Community discovery}

In~\cite{BinaryNIPS}, the authors propose a probabilistic model to solve binary matrix completion with graph side information based on the assumption that the users form communities: the assumption is that each user's rating is a noisy measurement of the preference of the cluster. The clusters are recovered from the graph information via the Stochastic Block Model (SBM), and the cluster preferences are then recovered from the observed data. Generalisation bounds and an asymptotic analysis are provided for this model. 
In~\cite{Vincent} a similar model with further twists such as the existence of atypical users and items is introduced, and a thorough and impressive complexity and generalization analysis is performed.

In~\cite{extra1}, the authors consider the problem of simultaneously clustering users and items in an efficient way based on \textit{a single fully observed matrix}. In~\cite{extra2}, another similar matrix factorization approach was provided to cluster users based on side information. 

Those works rely heavily on the more general problem of community discovery, which is concerned with recovering "groups" or clusters of users given some side information such as a graph of interaction~\cite{sbm1,sbm2,sbm3,sbm4,sbm5,sbm6,sbm7,sbm8}. These models typically rely on the assumption that the graph we observe is generated under the Stochastic Block Model, i.e., each edge in the graph is present or absent with a given probability that depends (only) on the cluster assignments of the two relevant nodes. We refer the reader to~\cite{sbm1} for a survey.  

We note that the above approaches are crucially different from ours in that \emph{they do not allow for non random user-specific behaviour within each cluster}: in all of these works (except~\cite{Vincent}), the behaviour of users/items is an independent noisy measurement of cluster behaviour, whilst in our model, users exhibit their own behaviour on top of the cluster-specific behaviour. In particular, in the works above, there is no difference between predicting the matrix and predicting the clusters, whilst in our setting, we usually assume the clusters are given and recover the matrix from them. In that respect, our setting is more similar to the regularization-based techniques~\cite{SocialReg1,SocialReg2,SocialReg3,SocialReg4,SocialReg5,SocialReg6}, but our method is different. The paper~\cite{Vincent} is to the best of our knowledge the only work that incorporates item specific behavior in a community detection context. They do so in a discrete fashion with the concept of "atypical" movies and users, whilst our approach is a continuous one, which includes the possibility of representing any matrix (at a regularization cost).

We note in passing that a different approach to extracting community information from graphs is offered by graph neural networks~\cite{gcnn1,gcnn2,gcnn3,gcnn4,gcnn5}.

\subsection{On some variants of the matrix completion problem} In~\cite{NewTheoryMC}, the authors study a different problem, closely related to matrix completion, which assumes the existence of an exact dictionary, then introduce a much weaker condition than uniform sampling and incoherence, and show that typical optimization algorithms can recover the matrix. Before that, in~\cite{EarlierNewTheory}, similar results were shown under different assumptions.
In~\cite{Rechtmatrixequations}, the author gives recovery guarantees for the more general problem of linear matrix equations. 
In~\cite{multiview}, the authors study a multi-view model for image data where a common low-rank representation of several different views of the data points is constructed, to be later fed to a matrix completion algorithm. 

Recently, \cite{Transfer1} and \cite{Transfer2} studied a form of transfer learning problem, where the same users rank different media such as movies, music, series etc. In~\cite{chen2017learning}, the authors perform \textit{non negative matrix completion} with multiple sources of side information through a regularization-based approach different form the IMC setting.

Very recently, progress was made in the direction of establishing theoretical guarantees for matrix completion with "side information" where features are extracted jointly through a shallow network~\cite{ProvableIMC}.  This opens up an interesting new avenue of research for matrix completion methods such as ours, which could perhaps be further combined with these techniques in the future. We note that this preliminary work~\cite{ProvableIMC} only deals with the case of fixed rank constraints.

\newpage

\begin{table*}[h!]
	\centering
	\begin{tabular}{c|c}
		Notation & Meaning  \\
		\midrule
		$\mathcal{D}$ & Sampling distribution over entries\\
		$N$ & Sample size\\
		$\{(i_1,j_1),(i_2,j_2),\ldots,(i_N,j_N)\}$ &  Set of observed entries \\
		$P_{\Omega}$ & Projection on the set of observed entries\\
		\small ($\{(i_\alpha,j_\alpha):\alpha\leq N\}=\Omega$)  & \\
		$P_{\Omega^\top}$ & Projection on complement of set of observed entries\\
		$X^{(k)}\in \mathbb{R}^{m\times d_1^{(k)}}$ ($k\leq K$) & $k$'th left side information matrix\\
		$Y^{(l)}\in \mathbb{R}^{n\times d_2^{(l)}}$ ($l\leq L$) & $l$'th right side information matrix\\
		$d_1^{(k)}$ & Dimension of $k$'th left side information \\
		$d_2^{(l)}$ & Dimension of $l$'th right side information \\
		$M^{(k,l)}\in \mathbb{R}^{d_1^{(k)}\times d_2^{(l)}}$ &  Matrices to be trained in our model $f_{i,j}=\sum_{k,l=1}^{K,L} X^{(k)}M^{(k,l)}(Y^{(l)})^\top$\\
		$\mathcal{M}_{k,l}$ &  Upper bound on $\|M^{(k,l)}\|_{*}$\\
		$R_{\Omega}$ & Matrix of observed entries \\
		$\|\nbull\|_{*}$ & Nuclear norm\\
		$\|\nbull\|_{\sigma}$ & Spectral Norm\\
		$f:\{1,\ldots,m\}\times \{1,\ldots,n\}\rightarrow \mathbb{R}$ & Prediction function \\
		$f_M$ & Prediction function $f_{i,j}=\sum_{k,l=1}^{K,L} X^{(k)}M^{(k,l)}(Y^{(l)})^\top$\\
		& ($M=(M_1,\ldots,M_{K,L})$)\\
		$\mathcal{R}(f)$ & Expected risk $\mathbb{E}_{(i,j)\sim\mathcal{D}}(\ell(f(i,j),R_{i,j}+\delta_{i,j}))$\\
		$\hat{\mathcal{R}}(f)$ & Empirical risk $\frac{\sum_{(i,j)\in \Omega}\ell(f(i,j),R_{i,j}+\delta_{i,j}))}{\#(\Omega)}=\sum_{\alpha=1}^{N}\frac{\ell(f_{i_\alpha,j_\alpha},R_{i_\alpha,j_\alpha}+\delta_{i,j})}{N}$\\
		$\hat{f}$ &  Solution to optimisation problem~\eqref{optim}\\
		$\ell$ & Loss function\\
		$B$  & Upper bound on  value of the loss function $\ell$ \\
		$L_{\ell}$ & Upper bound on the Lipschitz constant of the loss function  $\ell$ \\
		$R\in \mathbb{R}^{m\times n}$ & Ground truth matrix (can be observed with noise)\\
		$\delta_{i,j}, \delta_\alpha$ (if $(i_\alpha,j_\alpha)=(i,j)$) & Sample from noise distribution of entry $R_{i,j}$, \\ &(follows distribution $\Delta_{i,j}$ independently at each draw )\\
		$R^{(k,l)}=(X^{(k)})^\top R Y^{(l)}$ & Core matrix in $(k,l)$ component of ground truth matrix\\
		$X^{(k)}R^{(k,l)}(Y^{(l)})^\top$ & $(k,l)$ component of ground truth matrix\\
		$C_{k,l}$ & Upper bound on entries of   $X^{(k)}R^{(k,l)}(Y^{(l)})^\top$ (or $R$)\\
		$r_{k,l}$ & Upper bound on rank of $X^{(k)}R^{(k,l)}(Y^{(l)})^\top$ (equivalently $R^{(k,l)}$)\\
		$C$ & Constant upper bound on the entries of the ground truth matrix $R$\\
		$\mathbf{x}^k_i$ & $i$th row of the matrix $X^{(k))}$\\
		$\mathbf{y}^k_i$ & $i$th row of the matrix $Y^{(k))}$\\
		$\mathcal{X}_k$ & $\max_{i=1}^m \|\mathbf{x}^k_i\|_{2}=\max_{i=1}^m \|X^{(k)}_{i,\nbull}\|_{2}$\\
		$\mathcal{Y}_l$ & $\max_{i=1}^n \|\mathbf{y}^l_i\|_{2}=\max_{i=1}^m \|Y^{(l)}_{i,\nbull}\|_{2}$\\
		$\mathcal{E}$ & Optimal risk $\mathbb{E}_{(i,j)\sim \mathcal{D}}(\ell(R_{i,j}+\delta_{i,j},R_{i,j}))$\\
		$\tilde{X}^{(k)}$ & Matrix whose columns come from completing $X^{(k)}$ into an orthonormal basis\\
		$\tilde{Y}^{(l)}$ & Matrix whose columns come from completing $Y^{(l)}$ into an orthonormal basis\\
		$S_{\lambda}$ & Singular value thresholding operator from earlier work\\
		$ \Lambda=(\lambda_{k,l})_{k\leq K; l\leq  L}$ & A set of hyperparameters \\
		$S_{\Lambda}$ & Generalised SVTO defined in~\eqref{SingThresh}
	\end{tabular}
	\caption{\small Table of notations for quick reference}
	\label{thiss}
\end{table*}

% Can use something like this to put references on a page
% by themselves when using endfloat and the captionsoff option.
\ifCLASSOPTIONcaptionsoff
\newpage
\fi

% trigger a \newpage just before the given reference
% number - used to balance the columns on the last page
% adjust value as needed - may need to be readjusted if
% the document is modified later
%\IEEEtriggeratref{8}
% The "triggered" command can be changed if desired:
%\IEEEtriggercmd{\enlargethispage{-5in}}

% references section

% can use a bibliography generated by BibTeX as a .bbl file
% BibTeX documentation can be easily obtained at:
% http://mirror.ctan.org/biblio/bibtex/contrib/doc/
% The IEEEtran BibTeX style support page is at:

\bibliographystyle{IEEEtran}
% argument is your BibTeX string definitions and bibliography database(s)

\bibliography{TNNLS-2020-P-15336.R2-bibliography}

% Generated by IEEEtran.bst, version: 1.14 (2015/08/26)
\begin{thebibliography}{100}
\providecommand{\url}[1]{#1}
\csname url@samestyle\endcsname
\providecommand{\newblock}{\relax}
\providecommand{\bibinfo}[2]{#2}
\providecommand{\BIBentrySTDinterwordspacing}{\spaceskip=0pt\relax}
\providecommand{\BIBentryALTinterwordstretchfactor}{4}
\providecommand{\BIBentryALTinterwordspacing}{\spaceskip=\fontdimen2\font plus
\BIBentryALTinterwordstretchfactor\fontdimen3\font minus
  \fontdimen4\font\relax}
\providecommand{\BIBforeignlanguage}[2]{{%
\expandafter\ifx\csname l@#1\endcsname\relax
\typeout{** WARNING: IEEEtran.bst: No hyphenation pattern has been}%
\typeout{** loaded for the language `#1'. Using the pattern for}%
\typeout{** the default language instead.}%
\else
\language=\csname l@#1\endcsname
\fi
#2}}
\providecommand{\BIBdecl}{\relax}
\BIBdecl

\bibitem{recomenderMC}
Z.~Kang, C.~Peng, and Q.~Cheng, ``Top-n recommender system via matrix
  completion,'' 2016.

\bibitem{MCforRS}
Y.~{Koren}, R.~{Bell}, and C.~{Volinsky}, ``Matrix factorization techniques for
  recommender systems,'' \emph{Computer}, vol.~42, no.~8, pp. 30--37, 2009.

\bibitem{SocialNet}
C.-J. Hsieh, K.-Y. Chiang, and I.~S. Dhillon, ``Low rank modeling of signed
  networks,'' in \emph{Proceedings of the 18th ACM SIGKDD International
  Conference on Knowledge Discovery and Data Mining}, ser. KDD ’12, New York,
  NY, USA, 2012, p. 507–515.

\bibitem{jirasek2020machine}
F.~Jirasek, R.~A.~S. Alves, J.~Damay, R.~A. Vandermeulen, R.~Bamler, M.~Bortz,
  S.~Mandt, M.~Kloft, and H.~Hasse, ``Machine learning in thermodynamics:
  Prediction of activity coefficients by matrix completion,'' \emph{The Journal
  of Physical Chemistry Letters}, vol.~11, no.~3, pp. 981--985, 2020.

\bibitem{softimpute}
R.~Mazumder, T.~Hastie, and R.~Tibshirani, ``Spectral regularization algorithms
  for learning large incomplete matrices,'' \emph{J. Mach. Learn. Res.},
  vol.~11, p. 2287–2322, Aug. 2010.

\bibitem{genius}
E.~J. Cand\`{e}s and T.~Tao, ``The power of convex relaxation: Near-optimal
  matrix completion,'' \emph{IEEE Trans. Inf. Theor.}, vol.~56, no.~5, p.
  2053–2080, May 2010.

\bibitem{CandesRecht}
E.~J. Cand{\`e}s and B.~Recht, ``Exact matrix completion via convex
  optimization,'' \emph{Foundations of Computational Mathematics}, vol.~9,
  no.~6, p. 717, 2009.

\bibitem{noisy}
R.~Keshavan, A.~Montanari, and S.~Oh, ``Matrix completion from noisy entries,''
  in \emph{Advances in Neural Information Processing Systems 22}, Y.~Bengio,
  D.~Schuurmans, J.~D. Lafferty, C.~K.~I. Williams, and A.~Culotta, Eds.\hskip
  1em plus 0.5em minus 0.4em\relax Curran Associates, Inc., 2009, pp. 952--960.

\bibitem{Kolchinski}
V.~Koltchinskii, K.~Lounici, and A.~B. Tsybakov, ``Nuclear-norm penalization
  and optimal rates for noisy low-rank matrix completion,'' \emph{Ann.
  Statist.}, vol.~39, no.~5, pp. 2302--2329, 10 2011.

\bibitem{book}
C.~C. Aggarwal, \emph{Recommender Systems: The Textbook}, 1st~ed.\hskip 1em
  plus 0.5em minus 0.4em\relax Springer Publishing Company, Incorporated, 2016.

\bibitem{netflixwin}
R.~M. Bell, Y.~Koren, and C.~Volinsky, ``The bellkor solution to the netflix
  prize,'' 2007.

\bibitem{IMC}
X.~Zhang, S.~Du, and Q.~Gu, ``Fast and sample efficient inductive matrix
  completion via multi-phase procrustes flow,'' in \emph{Proceedings of the
  35th International Conference on Machine Learning}, ser. Proceedings of
  Machine Learning Research, J.~Dy and A.~Krause, Eds., vol.~80.\hskip 1em plus
  0.5em minus 0.4em\relax Stockholmsmässan, Stockholm Sweden: PMLR, 10--15 Jul
  2018, pp. 5756--5765.

\bibitem{IMC1}
M.~Herbster, S.~Pasteris, and L.~Tse, ``Online matrix completion with side
  information,'' \emph{CoRR}, vol. abs/1906.07255, 2019.

\bibitem{IMC2}
A.~K. Menon and C.~Elkan, ``Link prediction via matrix factorization,'' in
  \emph{Machine Learning and Knowledge Discovery in Databases}.\hskip 1em plus
  0.5em minus 0.4em\relax Springer Berlin Heidelberg, 2011, pp. 437--452.

\bibitem{IMC3}
T.~Chen, W.~Zhang, Q.~lu, K.~Chen, Z.~Zheng, and Y.~Yu, ``Svdfeature: A toolkit
  for feature-based collaborative filtering,'' \emph{The Journal of Machine
  Learning Research}, 2012.

\bibitem{PIMC}
P.~Jain and I.~S. Dhillon, ``Provable inductive matrix completion,''
  \emph{CoRR}, vol. abs/1306.0626, 2013.

\bibitem{mostrelated}
K.-Y. Chiang, I.~S. Dhillon, and C.-J. Hsieh, ``Using side information to
  reliably learn low-rank matrices from missing and corrupted observations,''
  \emph{J. Mach. Learn. Res.}, 2018.

\bibitem{gaillard2015time}
J.~Gaillard and J.-M. Renders, ``Time-sensitive collaborative filtering through
  adaptive matrix completion,'' in \emph{Advances in Information Retrieval},
  A.~Hanbury, G.~Kazai, A.~Rauber, and N.~Fuhr, Eds.\hskip 1em plus 0.5em minus
  0.4em\relax Cham: Springer International Publishing, 2015, pp. 327--332.

\bibitem{tassisa18}
A.~Tasissa and R.~Lai, ``Low-rank matrix completion in a general non-orthogonal
  basis,'' 2018.

\bibitem{NICE}
\BIBentryALTinterwordspacing
J.~Lu, G.~Liang, J.~Sun, and J.~Bi, ``A sparse interactive model for matrix
  completion with side information,'' in \emph{Advances in Neural Information
  Processing Systems}, D.~Lee, M.~Sugiyama, U.~Luxburg, I.~Guyon, and
  R.~Garnett, Eds., vol.~29.\hskip 1em plus 0.5em minus 0.4em\relax Curran
  Associates, Inc., 2016. [Online]. Available:
  \url{https://proceedings.neurips.cc/paper/2016/file/093b60fd0557804c8ba0cbf1453da22f-Paper.pdf}
\BIBentrySTDinterwordspacing

\bibitem{Rev2.2}
\BIBentryALTinterwordspacing
Q.~Zheng and J.~Lafferty, ``A convergent gradient descent algorithm for rank
  minimization and semidefinite programming from random linear measurements,''
  in \emph{Advances in Neural Information Processing Systems}, C.~Cortes,
  N.~Lawrence, D.~Lee, M.~Sugiyama, and R.~Garnett, Eds., vol.~28.\hskip 1em
  plus 0.5em minus 0.4em\relax Curran Associates, Inc., 2015. [Online].
  Available:
  \url{https://proceedings.neurips.cc/paper/2015/file/32bb90e8976aab5298d5da10fe66f21d-Paper.pdf}
\BIBentrySTDinterwordspacing

\bibitem{Rechtmatrixequations}
B.~Recht, M.~Fazel, and P.~A. Parrilo, ``Guaranteed minimum-rank solutions of
  linear matrix equations via nuclear norm minimization,'' \emph{SIAM Review},
  vol.~52, no.~3, pp. 471--501, 2010.

\bibitem{Rev2.3}
\BIBentryALTinterwordspacing
L.~Wang, X.~Zhang, and Q.~Gu, ``{A Unified Computational and Statistical
  Framework for Nonconvex Low-rank Matrix Estimation},'' in \emph{Proceedings
  of the 20th International Conference on Artificial Intelligence and
  Statistics}, ser. Proceedings of Machine Learning Research, A.~Singh and
  J.~Zhu, Eds., vol.~54.\hskip 1em plus 0.5em minus 0.4em\relax Fort
  Lauderdale, FL, USA: PMLR, 20--22 Apr 2017, pp. 981--990. [Online].
  Available: \url{http://proceedings.mlr.press/v54/wang17b.html}
\BIBentrySTDinterwordspacing

\bibitem{donoho}
D.~L. {Donoho}, ``De-noising by soft-thresholding,'' \emph{IEEE Transactions on
  Information Theory}, 1995.

\bibitem{softimputeALS}
\BIBentryALTinterwordspacing
T.~Hastie, R.~Mazumder, J.~D. Lee, and R.~Zadeh, ``Matrix completion and
  low-rank svd via fast alternating least squares,'' \emph{Journal of Machine
  Learning Research}, vol.~16, no. 104, pp. 3367--3402, 2015. [Online].
  Available: \url{http://jmlr.org/papers/v16/hastie15a.html}
\BIBentrySTDinterwordspacing

\bibitem{he2016ups}
R.~He and J.~McAuley, ``Ups and downs: Modeling the visual evolution of fashion
  trends with one-class collaborative filtering,'' in \emph{proceedings of the
  25th international conference on world wide web}, 2016, pp. 507--517.

\bibitem{DVN/JGH1HA_2019}
\BIBentryALTinterwordspacing
N.~Yang, ``{Douban movie and NetEase music datasets and model code},'' 2019.
  [Online]. Available: \url{https://doi.org/10.7910/DVN/JGH1HA}
\BIBentrySTDinterwordspacing

\bibitem{devlin2018bert}
J.~Devlin, M.-W. Chang, K.~Lee, and K.~Toutanova, ``Bert: Pre-training of deep
  bidirectional transformers for language understanding,'' \emph{arXiv preprint
  arXiv:1810.04805}, 2018.

\bibitem{wan2019fine}
M.~Wan, R.~Misra, N.~Nakashole, and J.~McAuley, ``Fine-grained spoiler
  detection from large-scale review corpora,'' \emph{arXiv preprint
  arXiv:1905.13416}, 2019.

\bibitem{fazel}
M.~Fazel, ``Matrix rank minimization with applications,'' \emph{PhD Thesis},
  2002.

\bibitem{Srebro}
``Learning with matrix factorizations.''

\bibitem{subdifnu}
G.~Watson, ``Characterization of the subdifferential of some matrix norms,''
  \emph{Linear Algebra and its Applications}, vol. 170, pp. 33 -- 45, 1992.

\bibitem{nesterov}
Y.~Nesterov, ``Gradient methods for minimizing composite objective function,''
  Université catholique de Louvain, Center for Operations Research and
  Econometrics (CORE), CORE Discussion Papers 2007076, 2007.

\bibitem{ReallyUniform1}
O.~Shamir and S.~Shalev-Shwartz, ``Collaborative filtering with the trace norm:
  Learning, bounding, and transducing,'' in \emph{Proceedings of the 24th
  Annual Conference on Learning Theory}, ser. Proceedings of Machine Learning
  Research, vol.~19.\hskip 1em plus 0.5em minus 0.4em\relax PMLR, 2011, pp.
  661--678.

\bibitem{Rafal2005}
R.~Latała, ``Some estimates of norms of random matrices,'' \emph{Proceedings
  of the American Mathematical Society}, vol. 133, no.~5, pp. 1273--1282, 2005.

\bibitem{espain}
P.~{Giménez-Febrer}, A.~{Pagès-Zamora}, and G.~B. {Giannakis},
  ``Generalization error bounds for kernel matrix completion and
  extrapolation,'' \emph{IEEE Signal Processing Letters}, vol.~27, pp.
  326--330, 2020.

\bibitem{foygel2011learning}
R.~Foygel, O.~Shamir, N.~Srebro, and R.~R. Salakhutdinov, ``Learning with the
  weighted trace-norm under arbitrary sampling distributions,'' in
  \emph{Advances in Neural Information Processing Systems 24}, J.~Shawe-Taylor,
  R.~S. Zemel, P.~L. Bartlett, F.~Pereira, and K.~Q. Weinberger, Eds.\hskip 1em
  plus 0.5em minus 0.4em\relax Curran Associates, Inc., 2011, pp. 2133--2141.

\bibitem{rademach}
C.~Scott, ``Rademacher complexity,'' \emph{Lecture Notes}, vol. Statistical
  Learning Theory, 2014.

\bibitem{contraction}
R.~Meir and T.~, ``Generalization error bounds for bayesian mixture
  algorithms,'' \emph{J. Mach. Learn. Res.}, vol.~4, no. null, p. 839–860,
  Dec. 2003.

\bibitem{Bartlettmend}
P.~L. Bartlett and S.~Mendelson, ``Rademacher and gaussian complexities: Risk
  bounds and structural results,'' in \emph{Computational Learning Theory},
  D.~Helmbold and B.~Williamson, Eds.\hskip 1em plus 0.5em minus 0.4em\relax
  Berlin, Heidelberg: Springer Berlin Heidelberg, 2001, pp. 224--240.

\bibitem{SimplerMC}
B.~Recht, ``A simpler approach to matrix completion,'' \emph{J. Mach. Learn.
  Res.}, vol.~12, no. null, p. 3413–3430, Dec. 2011.

\bibitem{bookhighprob}
\BIBentryALTinterwordspacing
R.~Vershynin, ``High-dimensional probability,'' 2019. [Online]. Available:
  \url{https://www.math.uci.edu/~rvershyn/papers/HDP-book/HDP-book.pdf}
\BIBentrySTDinterwordspacing

\bibitem{tropp}
J.~A. Tropp, ``User-friendly tail bounds for sums of random matrices,''
  \emph{Found. Comput. Math.}, vol.~12, no.~4, p. 389–434, Aug. 2012.

\bibitem{physics}
D.~Gross, Y.-K. Liu, S.~T. Flammia, S.~Becker, and J.~Eisert, ``Quantum state
  tomography via compressed sensing,'' \emph{Phys. Rev. Lett.}, vol. 105, p.
  150401, Oct 2010.

\bibitem{optimization1}
H.~Zhang, L.~Cheng, and W.~Zhu, ``A lower bound guaranteeing exact matrix
  completion via singular value thresholding algorithm,'' \emph{Applied and
  Computational Harmonic Analysis}, vol.~31, no.~3, pp. 454 -- 459, 2011.

\bibitem{optimization2}
P.~Jain, P.~Netrapalli, and S.~Sanghavi, ``Low-rank matrix completion using
  alternating minimization,'' \emph{Proceedings of the Annual ACM Symposium on
  Theory of Computing}, 12 2012.

\bibitem{optimization3}
C.-J. Hsieh and P.~Olsen, ``Nuclear norm minimization via active subspace
  selection,'' \emph{31st Internat ional Conference on Machine Learning, ICML
  2014}, vol.~1, pp. 871--882, 01 2014.

\bibitem{optimization4}
Q.~{Yao} and J.~T. {Kwok}, ``Accelerated and inexact soft-impute for
  large-scale matrix and tensor completion,'' \emph{IEEE Transactions on
  Knowledge and Data Engineering}, vol.~31, no.~9, pp. 1665--1679, Sep. 2019.

\bibitem{optimization5}
Y.~Yang, M.~Pesavento, S.~Chatzinotas, and B.~Ottersten, ``Parallel and hybrid
  soft-thresholding algorithms with line search for sparse nonlinear
  regression,'' 09 2018, pp. 1587--1591.

\bibitem{optimization6}
T.~Hastie, R.~Mazumder, J.~D. Lee, and R.~Zadeh, ``Matrix completion and
  low-rank svd via fast alternating least squares,'' \emph{Journal of Machine
  Learning Research}, vol.~16, no. 104, pp. 3367--3402, 2015.

\bibitem{optimizationCore}
J.-F. Cai, E.~J. Candès, and Z.~Shen, ``A singular value thresholding
  algorithm for matrix completion,'' \emph{SIAM Journal on Optimization},
  vol.~20, no.~4, pp. 1956--1982, 2010.

\bibitem{optimizationBest}
C.-J. Hsieh and P.~Olsen, ``Nuclear norm minimization via active subspace
  selection,'' in \emph{Proceedings of the 31st International Conference on
  Machine Learning}, ser. Proceedings of Machine Learning Research, E.~P. Xing
  and T.~Jebara, Eds., vol.~32, no.~1.\hskip 1em plus 0.5em minus 0.4em\relax
  Bejing, China: PMLR, 22--24 Jun 2014, pp. 575--583.

\bibitem{wain}
S.~Negahban and M.~J. Wainwright, ``Restricted strong convexity and weighted
  matrix completion: Optimal bounds with noise,'' \emph{J. Mach. Learn. Res.},
  vol.~13, no.~1, p. 1665–1697, May 2012.

\bibitem{completingprovably}
Y.~Chen, S.~Bhojanapalli, S.~Sanghavi, and R.~Ward, ``Completing any low-rank
  matrix, provably,'' \emph{Journal of Machine Learning Research}, vol.~16,
  no.~94, pp. 2999--3034, 2015.

\bibitem{nonuniform}
Y.~Wan, J.~Yi, and L.~Zhang, ``Matrix completion from non-uniformly sampled
  entries,'' 06 2018.

\bibitem{nonunif}
N.~Srebro and R.~R. Salakhutdinov, ``Collaborative filtering in a non-uniform
  world: Learning with the weighted trace norm,'' in \emph{Advances in Neural
  Information Processing Systems 23}, J.~D. Lafferty, C.~K.~I. Williams,
  J.~Shawe-Taylor, R.~S. Zemel, and A.~Culotta, Eds.\hskip 1em plus 0.5em minus
  0.4em\relax Curran Associates, Inc., 2010, pp. 2056--2064.

\bibitem{ReallyUniform2}
O.~Shamir and S.~Shalev-Shwartz, ``Matrix completion with the trace norm:
  Learning, bounding, and transducing,'' \emph{Journal of Machine Learning
  Research}, vol.~15, pp. 3401--3423, 2014.

\bibitem{IMCforDrug}
R.~Li, Y.~Dong, Q.~Kuang, Y.~Wu, Y.~Li, M.~Zhu, and M.~Li, ``Inductive matrix
  completion for predicting adverse drug reactions (adrs) integrating
  drug–target interactions,'' \emph{Chemometrics and Intelligent Laboratory
  Systems}, vol. 144, pp. 71 -- 79, 2015.

\bibitem{IMCGenes}
N.~Natarajan and I.~S. Dhillon, ``{Inductive matrix completion for predicting
  gene and disease associations},'' \emph{Bioinformatics}, vol.~30, no.~12, pp.
  i60--i68, 06 2014.

\bibitem{blogrec}
D.~Shin, S.~Cetintas, K.-C. Lee, and I.~S. Dhillon, ``Tumblr blog
  recommendation with boosted inductive matrix completion,'' in
  \emph{Proceedings of the 24th ACM International on Conference on Information
  and Knowledge Management}, ser. CIKM ’15.\hskip 1em plus 0.5em minus
  0.4em\relax New York, NY, USA: Association for Computing Machinery, 2015, p.
  203–212.

\bibitem{IMCtheory1}
M.~Xu, R.~Jin, and Z.-H. Zhou, ``Speedup matrix completion with side
  information: Application to multi-label learning,'' in \emph{Proceedings of
  the 26th International Conference on Neural Information Processing Systems -
  Volume 2}, ser. NIPS’13.\hskip 1em plus 0.5em minus 0.4em\relax Red Hook,
  NY, USA: Curran Associates Inc., 2013, p. 2301–2309.

\bibitem{IMCtheory2}
K.~Zhong, P.~Jain, and I.~S. Dhillon, ``Efficient matrix sensing using rank-1
  gaussian measurements,'' in \emph{Algorithmic Learning Theory}, K.~Chaudhuri,
  C.~GENTILE, and S.~Zilles, Eds.\hskip 1em plus 0.5em minus 0.4em\relax Cham:
  Springer International Publishing, 2015, pp. 3--18.

\bibitem{ProvableIMC}
K.~Zhong, P.~Song, Zhao~and, and I.~S. Dhillon, ``Provable non-linear inductive
  matrix completion,'' in \emph{Advances in Neural Information Processing
  Systems 32}, H.~Wallach, H.~Larochelle, A.~Beygelzimer, F.~d~Alche-Buc,
  E.~Fox, and R.~Garnett, Eds., 2019, pp. 11\,439--11\,449.

\bibitem{tensor}
S.~Xue, W.~Qiu, F.~Liu, and X.~Jin, ``Low-rank tensor completion by truncated
  nuclear norm regularization,'' \emph{2018 24th International Conference on
  Pattern Recognition (ICPR)}, pp. 2600--2605, 2017.

\bibitem{tensor2}
N.~Ghadermarzy, Y.~Plan, and A.~Yilmaz, ``{Near-optimal sample complexity for
  convex tensor completion},'' \emph{Information and Inference: A Journal of
  the IMA}, vol.~8, no.~3, pp. 577--619, 11 2018.

\bibitem{Tensor3}
M.~Nimishakavi, P.~K. Jawanpuria, and B.~Mishra, ``A dual framework for
  low-rank tensor completion,'' in \emph{Advances in Neural Information
  Processing Systems 31}, S.~Bengio, H.~Wallach, H.~Larochelle, K.~Grauman,
  N.~Cesa-Bianchi, and R.~Garnett, Eds.\hskip 1em plus 0.5em minus 0.4em\relax
  Curran Associates, Inc., 2018, pp. 5484--5495.

\bibitem{Rev2.1}
\BIBentryALTinterwordspacing
S.~Tu, R.~Boczar, M.~Simchowitz, M.~Soltanolkotabi, and B.~Recht, ``Low-rank
  solutions of linear matrix equations via procrustes flow,'' in
  \emph{Proceedings of The 33rd International Conference on Machine Learning},
  ser. Proceedings of Machine Learning Research, M.~F. Balcan and K.~Q.
  Weinberger, Eds., vol.~48.\hskip 1em plus 0.5em minus 0.4em\relax New York,
  New York, USA: PMLR, 20--22 Jun 2016, pp. 964--973. [Online]. Available:
  \url{http://proceedings.mlr.press/v48/tu16.html}
\BIBentrySTDinterwordspacing

\bibitem{Koren1}
Y.~Koren, ``Factorization meets the neighborhood: A multifaceted collaborative
  filtering model,'' 08 2008, pp. 426--434.

\bibitem{Koren2}
------, ``Factor in the neighbors: Scalable and accurate collaborative
  filtering,'' \emph{ACM Trans. Knowl. Discov. Data}, vol.~4, no.~1, Jan. 2010.

\bibitem{SocialReg1}
V.~{Kalofolias}, X.~{Bresson}, M.~{Bronstein}, and P.~{Vandergheynst},
  ``{Matrix Completion on Graphs},'' \emph{arXiv e-prints}, p. arXiv:1408.1717,
  Aug. 2014.

\bibitem{SocialReg2}
H.~Ma, D.~Zhou, C.~Liu, M.~Lyu, and I.~King, ``Recommender systems with social
  regularization,'' 01 2011, pp. 287--296.

\bibitem{SocialReg3}
M.~Jamali and M.~Ester, ``A matrix factorization technique with trust
  propagation for recommendation in social networks,'' in \emph{Proceedings of
  the Fourth ACM Conference on Recommender Systems}, ser. RecSys ’10.\hskip
  1em plus 0.5em minus 0.4em\relax New York, NY, USA: Association for Computing
  Machinery, 2010, p. 135–142.

\bibitem{SocialReg4}
H.~Ma, H.~Yang, M.~R. Lyu, and I.~King, ``Sorec: Social recommendation using
  probabilistic matrix factorization,'' in \emph{Proceedings of the 17th ACM
  Conference on Information and Knowledge Management}, ser. CIKM ’08.\hskip
  1em plus 0.5em minus 0.4em\relax New York, NY, USA: Association for Computing
  Machinery, 2008, p. 931–940.

\bibitem{SocialReg5}
W.-J. Li and D.-Y. Yeung, ``Relation regularized matrix factorization.'' 01
  2009, pp. 1126--1131.

\bibitem{SocialReg6}
D.~{Cai}, X.~{He}, J.~{Han}, and T.~S. {Huang}, ``Graph regularized nonnegative
  matrix factorization for data representation,'' \emph{IEEE Transactions on
  Pattern Analysis and Machine Intelligence}, vol.~33, no.~8, pp. 1548--1560,
  Aug 2011.

\bibitem{Combined}
G.~Guo, J.~Zhang, and N.~Yorke-Smith, ``Trustsvd: Collaborative filtering with
  both the explicit and implicit influence of user trust and of item ratings,''
  2015.

\bibitem{TheoryReg}
N.~Rao, H.-F. Yu, P.~K. Ravikumar, and I.~S. Dhillon, ``Collaborative filtering
  with graph information: Consistency and scalable methods,'' in \emph{Advances
  in Neural Information Processing Systems 28}, C.~Cortes, N.~D. Lawrence,
  D.~D. Lee, M.~Sugiyama, and R.~Garnett, Eds.\hskip 1em plus 0.5em minus
  0.4em\relax Curran Associates, Inc., 2015, pp. 2107--2115.

\bibitem{BinaryNIPS}
K.~Ahn, K.~Lee, H.~Cha, and C.~Suh, ``Binary rating estimation with graph side
  information,'' in \emph{Advances in Neural Information Processing Systems
  31}, S.~Bengio, H.~Wallach, H.~Larochelle, K.~Grauman, N.~Cesa-Bianchi, and
  R.~Garnett, Eds.\hskip 1em plus 0.5em minus 0.4em\relax Curran Associates,
  Inc., 2018, pp. 4272--4283.

\bibitem{Vincent}
{Qiaosheng}, {Zhang}, V.~Y.~F. {Tan}, and C.~{Suh}, ``{Community Detection and
  Matrix Completion with Two-Sided Graph Side-Information},'' \emph{arXiv
  e-prints}, p. arXiv:1912.04099, Dec. 2019.

\bibitem{extra1}
C.~Ding, T.~Li, W.~Peng, and H.~Park, ``Orthogonal nonnegative matrix
  t-factorizations for clustering,'' in \emph{Proceedings of the 12th ACM
  SIGKDD international conference on Knowledge discovery and data mining},
  2006, pp. 126--135.

\bibitem{extra2}
J.~Liu, C.~Wang, J.~Gao, and J.~Han, ``Multi-view clustering via joint
  nonnegative matrix factorization,'' in \emph{Proceedings of the 2013 SIAM
  International Conference on Data Mining}.\hskip 1em plus 0.5em minus
  0.4em\relax SIAM, 2013, pp. 252--260.

\bibitem{sbm1}
E.~Abbe, ``Community detection and stochastic block models: Recent
  developments,'' \emph{Journal of Machine Learning Research}, vol.~18, no.
  177, pp. 1--86, 2018.

\bibitem{sbm2}
E.~{Abbe}, A.~S. {Bandeira}, and G.~{Hall}, ``Exact recovery in the stochastic
  block model,'' \emph{IEEE Transactions on Information Theory}, vol.~62,
  no.~1, pp. 471--487, Jan 2016.

\bibitem{sbm3}
E.~{Abbe} and C.~{Sandon}, ``Community detection in general stochastic block
  models: Fundamental limits and efficient algorithms for recovery,'' in
  \emph{2015 IEEE 56th Annual Symposium on Foundations of Computer Science},
  Oct 2015, pp. 670--688.

\bibitem{sbm4}
P.~W. Holland, K.~B. Laskey, and S.~Leinhardt, ``Stochastic blockmodels: First
  steps,'' \emph{Social Networks}, vol.~5, no.~2, pp. 109 -- 137, 1983.

\bibitem{sbm5}
H.~Saad and A.~Nosratinia, ``Community detection with side information: Exact
  recovery under the stochastic block model,'' 05 2018.

\bibitem{sbm6}
H.~{Saad} and A.~{Nosratinia}, ``Exact recovery in community detection with
  continuous-valued side information,'' \emph{IEEE Signal Processing Letters},
  vol.~26, no.~2, pp. 332--336, Feb 2019.

\bibitem{sbm7}
J.~{Yoon}, K.~{Lee}, and C.~{Suh}, ``On the joint recovery of community
  structure and community features,'' in \emph{2018 56th Annual Allerton
  Conference on Communication, Control, and Computing (Allerton)}, Oct 2018,
  pp. 686--694.

\bibitem{sbm8}
J.~{Yang}, J.~{McAuley}, and J.~{Leskovec}, ``Community detection in networks
  with node attributes,'' in \emph{2013 IEEE 13th International Conference on
  Data Mining}, Dec 2013, pp. 1151--1156.

\bibitem{gcnn1}
M.~Defferrard, X.~Bresson, and P.~Vandergheynst, ``Convolutional neural
  networks on graphs with fast localized spectral filtering,'' in
  \emph{Proceedings of the 30th International Conference on Neural Information
  Processing Systems}, ser. NIPS’16.\hskip 1em plus 0.5em minus 0.4em\relax
  Red Hook, NY, USA: Curran Associates Inc., 2016, p. 3844–3852.

\bibitem{gcnn2}
Z.~Wu, S.~Pan, F.~Chen, G.~Long, C.~Zhang, and P.~S. Yu, ``A comprehensive
  survey on graph neural networks,'' \emph{CoRR}, vol. abs/1901.00596, 2019.

\bibitem{gcnn3}
M.~Henaff, J.~Bruna, and Y.~Lecun, ``Deep convolutional networks on
  graph-structured data,'' 06 2015.

\bibitem{gcnn4}
T.~N. Kipf and M.~Welling.

\bibitem{gcnn5}
A.~{Micheli}, ``Neural network for graphs: A contextual constructive
  approach,'' \emph{IEEE Transactions on Neural Networks}, vol.~20, no.~3, pp.
  498--511, March 2009.

\bibitem{NewTheoryMC}
G.~Liu, Q.~Liu, and X.~Yuan, ``A new theory for matrix completion,'' in
  \emph{Advances in Neural Information Processing Systems 30}, I.~Guyon, U.~V.
  Luxburg, S.~Bengio, H.~Wallach, R.~Fergus, S.~Vishwanathan, and R.~Garnett,
  Eds.\hskip 1em plus 0.5em minus 0.4em\relax Curran Associates, Inc., 2017,
  pp. 785--794.

\bibitem{EarlierNewTheory}
D.~L. Donoho and M.~Elad, ``Optimally sparse representation in general
  (nonorthogonal) dictionaries via l1 minimization,'' \emph{Proceedings of the
  National Academy of Sciences}, vol. 100, no.~5, pp. 2197--2202, 2003.

\bibitem{multiview}
M.~Liu, Y.~Luo, Dacheng, C.~Xu, and Y.~Wen, ``Low-rank multi-view learning in
  matrix completion for multi-label image classification,'' 2015.

\bibitem{Transfer1}
M.~Z. Alaya and O.~Klopp, ``Collective matrix completion,'' \emph{Journal of
  Machine Learning Research}, vol.~20, no. 148, pp. 1--43, 2019.

\bibitem{Transfer2}
S.~Gunasekar, M.~Yamada, D.~Yin, and Y.~Chang, ``Consistent collective matrix
  completion under joint low rank structure,'' 12 2014.

\bibitem{chen2017learning}
H.~{Chen} and J.~{Li}, ``Learning multiple similarities of users and items in
  recommender systems,'' in \emph{2017 IEEE International Conference on Data
  Mining (ICDM)}, 2017, pp. 811--816.

\end{thebibliography}

%\bibliographystyle{unsrtnat}

%\small
%\bibliography{Bibliographomic}

%

% <OR> manually copy in the resultant .bbl file
% set second argument of \begin to the number of references
% (used to reserve space for the reference number labels box)

% biography section
% 
% If you have an EPS/PDF photo (graphicx package needed) extra braces are
% needed around the contents of the optional argument to biography to prevent
% the LaTeX parser from getting confused when it sees the complicated
% \includegraphics command within an optional argument. (You could create
% your own custom macro containing the \includegraphics command to make things
% simpler here.)
%\begin{IEEEbiography}[{\includegraphics[width=1in,height=1.25in,clip,keepaspectratio]{mshell}}]{Michael Shell}
% or if you just want to reserve a space for a photo:

% You can push biographies down or up by placing
% a \vfill before or after them. The appropriate
% use of \vfill depends on what kind of text is
% on the last page and whether or not the columns
% are being equalized.

%\vfill

% Can be used to pull up biographies so that the bottom of the last one
% is flush with the other column.
%\enlargethispage{-5in}

% that's all folks

\end{document}